\theoremstyle{plain}
\newtheorem{theorem}{Theorem}
\newtheorem{lemma}{Lemma}
\newtheorem{corollary}{Corollary}
\theoremstyle{definition}
\newtheorem{definition}{Definition}
\newtheorem{assumption}{Assumption}
\theoremstyle{remark}
\newcommand{\Ac}{\mathcal{A}}
\newcommand{\Dc}{\mathcal{D}}
\newcommand{\Ec}{\mathcal{E}}
\newcommand{\Hc}{\mathcal{H}}
\newcommand{\Oc}{\mathcal{O}}
\newcommand{\Sc}{\mathcal{S}}
\newcommand{\Xc}{\mathcal{X}}
\newcommand{\Pb}{\mathbb{P}}
\newcommand{\Eb}{\mathbb{E}}
\newcommand{\Rb}{\mathbb{R}}
\newcommand{\Zb}{\mathbb{Z}}
\newcommand{\argmax}{\arg\max}
\newcommand{\argmin}{\arg\min}
\newcommand{\diag}{\mathrm{diag}}
\date{}
\begin{document}
\title{Policy Optimization for Constrained MDPs with \\ Provable Fast Global Convergence}

\author{
  Tao Liu\thanks{The first two authors contributed equally.  Email: \texttt{\{tliu, ruida, dileep.kalathil, prk, chao.tian\}@tamu.edu }}
  \quad
  Ruida Zhou$^*$
  \quad
  Dileep Kalathil
  \quad
  P. R. Kumar
  \quad
  Chao Tian \\
Department of Electrical and Computer Engineering\\ 
Texas A\&M University
}

\maketitle

\begin{abstract}
We address the problem of finding the optimal policy of a constrained Markov decision process (CMDP) using a gradient descent-based algorithm. Previous results have shown that a primal-dual approach can achieve an $\mathcal{O}(1/\sqrt{T})$ global convergence rate for both the optimality gap and the constraint violation. We propose a new algorithm called policy mirror descent-primal dual (PMD-PD) algorithm that can provably achieve a faster $\mathcal{O}(\log(T)/T)$ convergence rate for both the optimality gap and the constraint violation. For the primal (policy) update, the PMD-PD algorithm utilizes a modified value function and performs natural policy gradient steps, which is equivalent to a mirror descent step with appropriate regularization. For the dual update, the PMD-PD algorithm uses modified Lagrange multipliers to ensure a faster convergence rate. We also present two extensions of this approach to the settings with zero constraint violation and sample-based estimation. Experimental results demonstrate the faster convergence rate and the better performance of the PMD-PD algorithm compared with existing policy gradient-based algorithms. 
\end{abstract}

\section{Introduction}
Policy gradient (PG) methods and their variants play an important role in reinforcement learning (RL). The gradient-based methods are attractive due to their flexibility in being applicable to any differentiable policy parameterization and generality for extensions to function approximation of policies \cite{agarwal2021theory}. Standard PG methods have been applied to Markov decision processes (MDPs), which focus on optimizing a single objective without any explicit constraint on the policies. However, in many real-world applications, stringent safety constraints are imposed on control policies \cite{dulac2021challenges, amodei2016concrete, garcia2015comprehensive}. For example, a mobile wireless application may desire to maximize throughput, with a constraint on power consumption. The model of a constrained Markov decision process (CMDP) \cite{altman1999constrained}, where the goal is to optimize an objective while satisfying safety constraints, is a standard approach for modeling the necessary safety criteria of a control problem through constraints on safety costs. 

Recently, many algorithms using PG and natural policy gradient (NPG) \cite{kakade2001natural} methods have been developed for solving CMDPs.  Lagrangian-based methods \cite{tessler2018reward, stooke2020responsive, paternain2019safe, liang2018accelerated} optimize CMDP as a saddle-point problem via primal-dual approach, while constrained policy optimization methods \cite{achiam2017constrained, yang2019projection} calculate new dual variables from scratch at each update to maintain constraints during the learning process. Although these algorithms provide a way to iteratively optimize the learned policy, they can only guarantee local convergence, with no guarantees on the convergence rate to a globally optimal solution.

Motivated by recent works \cite{agarwal2020optimality, mei2020global} that prove global convergence of PG algorithms for MDPs, Ding et al. \cite{ding2020natural} developed an NPG-based primal-dual method with the softmax policy parameterization, which provides an $\Oc(1/\sqrt{T})$ global convergence rate for both the optimality gap and the constraint violation, where $T$ is the total number of iterations that the algorithm executes. Similarly, Xu et al. \cite{xu2021crpo} showed how to attain the global optimal with the same order convergence rate via NPG-based primal methods. However, it is known that the NPG-based methods for unconstrained MDPs enjoy $\Oc(1/T)$ convergence rate \cite{agarwal2021theory} and $\exp(-T)$ convergence rate with certain regularizations \cite{cen2021fast, zhan2021policy}. This motivates the following important theoretical question:

\textbf{Can we design a policy gradient-based  algorithm for CMDPs that can provably achieve a global convergence rate faster than $\Oc(1/\sqrt{T})$?}

\vspace{-0.1in}
\paragraph{Our contribution:} We answer the above question affirmatively by proposing a new algorithm, which we call policy mirror descent-primal dual (PMD-PD) algorithm. We show that the PMD-PD algorithm achieves an $\Oc(\log(T)/T)$ global convergence rate for both the optimality gap and the constraint violation. For the primal (policy) update, the PMD-PD algorithm utilizes a modified value function and performs natural policy gradient steps, which is equivalent to a mirror descent step with appropriate regularization. For the dual update, the PMD-PD algorithm uses a modified Lagrange multiplier to ensure a faster convergence rate. The PMD-PD algorithm is nearly dimension-free (depending at most logarithmically on the dimension of the action space) and is faster than existing PG-based algorithms for CMDPs (listed in Table \ref{tab:compare}). We also present an extension called the PMD-PD-Zero algorithm that can return policies with zero constraint violation without compromising the order of the convergence rate for the optimality gap. Additionally, we extend the PMD-PD algorithm to the sample-based setting (without an oracle for exact policy evaluation) and show an $\tilde{\Oc}(1/\epsilon^3)$ sample complexity, which is more efficient compared with a sample complexity of $\Oc(1/\epsilon^4)$ from existing PG-based algorithms for CMDPs \cite{xu2021crpo}. 

\begin{table}[t]
\caption{Global convergence rates for the optimality gap and the constraint violation with exact policy evaluation. All algorithms listed  are under the softmax policy parameterization.}
\label{tab:compare}
    \begin{center}
    \begin{small}
    \begin{sc}
    \begin{tabular}{lll}
    \toprule
    Algorithm & Optimality gap \tablefootnote{This table is presented for $T \ge \text{poly}(|\Sc|, |\Ac|, \|d_{\rho}^{\pi^*}/\nu\|_{\infty})$, with polynomial terms independent of $T$ omitted, where $|\Sc|$ and $|\Ac|$ are the number of states and actions respectively, $\nu$ is the starting state distribution for the algorithms, and $d_{\rho}^{\pi^*}$ is the state visitation distribution when executing an optimal policy $\pi^*$. \label{omitpoly}} & Violation \footref{omitpoly}\\
    \midrule
    PG/NPG \cite{mei2020global, agarwal2021theory} & $\Oc(1/T)$ & / \\
    PG-Entropy \cite{mei2020global} & $\Oc(\exp(-T))$ & / \\
    NPG-Entropy \cite{cen2021fast} & $\Oc(\exp(-T))$ & / \\
    NPG-PD \cite{ding2020natural} & $\Oc(1/\sqrt{T})$ & $\Oc(1/\sqrt{T})$\\
    CRPO \cite{xu2021crpo} & $\Oc(1/\sqrt{T})$ & $\Oc(1/\sqrt{T})$\\
    \midrule
    \textbf{PMD-PD} & ${\Oc}(\log(T)/T)$  & ${\Oc}(\log(T)/T)$ \\
    \textbf{PMD-PD-Zero} & ${\Oc}(\log(T)/T)$ & 0 \tablefootnote{It holds after some $T$. % $K = \Theta(T/\log(T))$. 
    Details are provided in Section \ref{sec:zero}.} \\
    \bottomrule
    \end{tabular}
    \end{sc}
    \end{small}
    \end{center}
\end{table}

\subsection{Related Work}

\paragraph{Global convergence of PG algorithms:}
Recently, there has been much study of the global convergence properties of policy gradient methods. It has been shown that PG and NPG methods can achieve $\Oc(1/T)$ convergence \cite{mei2020global, agarwal2021theory} for unregularized MDPs. When entropy regularization is used, both PG and NPG methods can guarantee $\text{exp}(-T)$ convergence \cite{mei2020global, cen2021fast} to the optimal solution of the regularized problem.  NPG methods can be interpreted as  mirror descent \cite{neu2017unified, zhan2021policy}, thereby enabling the adaptation of mirror descent techniques to analyze NPG-based methods.

The global convergence analysis of the PG methods for MDPs has also been extended to CMDPs. \cite{ding2020natural} proposed the NPG-PD algorithm which uses a primal-dual approach with NPG and showed that it can achieve $\Oc(1/\sqrt{T})$ global convergence for both the optimality gap and the constraint violation. \cite{xu2021crpo} proposed a primal approach called constrained-rectified policy optimization (CRPO), which updates the policy alternatively between optimizing objective and decreasing constraint violation, and enjoys the same $\Oc(1/\sqrt{T})$ global convergence. Our work focuses on achieving a faster convergence rate for the CMDP problem, motivated by the results for MDPs with a convergence rate faster than $\Oc(1/\sqrt{T})$ (see Table \ref{tab:compare}).

In the work conducted concurrently with ours, but with different results, Ying et al. \cite{ying2021dual} and Li et al. \cite{li2021faster} address the same question of developing PG-based algorithms for the CMDP problem. Ying et al. \cite{ying2021dual} propose an NPG-aided dual approach, where the dual function is smoothed by entropy regularization in the objective function. They show an $\tilde{\Oc}(1/T)$ convergence rate to the optimal policy of the \textit{entropy-regularized} CMDP, but not to the true optimal policy, for which with a slow $\Oc(1/\sqrt{T})$ convergence rate. They also make an additional strong assumption that the initial state distribution covers the entire state space. While such an assumption was initially used in the analysis of the global convergence of PG methods for MDPs \cite{agarwal2021theory, mei2020global}, it is not required when analyzing the global convergence of NPG methods \cite{agarwal2021theory, cen2021fast}. Moreover, this assumption does not necessarily hold for safe RL or CMDP, since the algorithm needs to avoid dangerous states even at initialization and the optimal policy will depend on the initial state distribution. Li et al. \cite{li2021faster} propose a primal-dual approach with an $\Oc(\log^2(T)/T)$ convergence rate to the true optimal policy by smoothing the Lagrangian with suitable regularization on both primal and dual variables. However, they assume that the Markov chain induced by any stationary policy is ergodic in order to ensure the smoothness of the dual function. This assumption, though weaker than the assumption made by \cite{ying2021dual}, will generally not hold in problems where one wants to avoid unsafe states altogether. In this work, we propose an algorithm with a faster $\Oc(\log(T)/T)$ convergence rate to the true optimal policy without such assumptions. Moreover, we also present two important extensions of our approach to the settings with zero constraint violation and sample-based estimation.

\vspace{-0.1in}
\paragraph{Fast convergence of constrained convex optimization:}
The conventional primal-dual subgradient method used to solve convex optimization with functional constraints has a convergence rate lower bounded by $\Omega(1/\sqrt{T})$ \cite{bubeck2015convex}. Assuming access to a proximal mapping, Yu and Neely \cite{yu2017simple} proposed a new Lagrangian dual algorithm with an $\Oc(1/T)$ convergence rate by augmenting the Lagrange multipliers. Under the smoothness assumption, the same $\Oc(1/T)$ convergence rate can be attained without needing access to a proximal mapping \cite{yu2017primal}. In our work, we adapt some of the techniques introduced in \cite{yu2017primal} to the CMDP setting. Recently, Xu \cite{xu2020first} pointed out that $\Oc(\exp(-T))$ convergence rate can be attained if the objective function possesses an additional strong-convexity property and there are only a bounded number of constraints. 
\vspace{-0.1in}
\paragraph{Notations:} For any given set $\Xc$,  $\Delta(\Xc)$ denotes the probability simplex over the set $\Xc$, and $|\Xc|$ denotes the cardinality of the set $\Xc$. For any $p_{1}, p_{2} \in \Delta(\Xc)$, the Kullback–Leibler (KL) divergence between $p_{1}$ and $p_{2}$ is defined as $D(p_{1} || p_{2}) := \sum_{x \in \Xc} p_{1}(x) \log \frac{p_{1}(x)}{p_{2}(x)}$, where the logarithm is base $e$. For any integer $m$, $[m] := \{1, \dots, m\}$. For any $a \in \mathbb{R}$, $(a)_+ := \max\{a, 0\}$, $\lceil a \rceil := \min\{n \in \Zb ~|~ n \geq a\}$.

\section{Preliminaries}
\subsection{Problem Formulation}
A discounted infinite-horizon CMDP model is a tuple $M = (\Sc, \Ac, P, c_0, c_{1:m}, \rho, \gamma)$, where $\Sc$ is the state space, $\Ac$ is the action space, $c_0: \Sc \times \Ac \to [-1, 1]$ is the objective cost function, $c_i: \Sc \times \Ac \to [-1, 1]$ is the $i$-th constraint cost function, for $i \in [m]$, $P: \Sc \times \Ac \to \Delta(\Sc)$ is the transition kernel, $\rho \in \Delta(\Sc)$ is the starting state distribution over $\Sc$, and $\gamma \in [0, 1)$ is the discount factor.  Given any stationary randomized policy $\pi: \Sc \to \Delta(\Ac)$ and any cost function $c : \Sc \times \Ac \to [-1, 1]$, we define the state value function $V_{c}^{\pi}$ and  and the state-action value function $Q_{c}^{\pi}$ as $V_{c}^{\pi}(s) := \Eb [\sum_{t=0}^{\infty} \gamma^t c(s_t, a_t) ~|~ s_0=s, \pi ],~~Q_{c}^{\pi}(s, a) := \Eb [\sum_{t=0}^{\infty} \gamma^t c(s_t, a_t) ~|~ s_0=s, a_0=a, \pi ],$ where the expectation $\Eb$ is taken over the randomness of the trajectory of the Markov chain induced by policy $\pi$ and transition kernel $P$. With slight abuse of notation, denote $V^{\pi}_c(\rho) := \Eb_{s \sim \rho}[ V^{\pi}_c(s) ]$.

For any policy $\pi$ and any $(s, a) \in \Sc \times \Ac$, we define the  discounted state-action visitation distribution  as  $d_{\rho}^{\pi}(s, a) := (1 - \gamma) \Eb_{s_0 \sim \rho}[\sum_{t=0}^{\infty} \gamma^t \Pb(s_t=s, a_t=a|s_0)]$.  It then follows that $V^{\pi}_c(\rho) = \frac{1}{1 - \gamma} \langle d_{\rho}^{\pi}, c \rangle$ by viewing $d^{\pi}_\rho$ and $c$ as $|\Sc||\Ac|$-dimensional vectors indexed by $(s, a) \in \Sc \times \Ac$. When it is clear from the context, with slight abuse of notation, we also denote the discounted state visitation distribution with respect to (w.r.t.) the initial state distribution $\rho$ and policy $\pi$ by $d_{\rho}^{\pi}(s) := (1 - \gamma) \Eb_{s_0 \sim \rho}[ \sum_{t=0}^{\infty} \gamma^t \Pb(s_t=s|s_0)], \ \forall s \in \Sc$. Note that $d^{\pi}_\rho(s) = \sum_{a \in \Ac} d^{\pi}_\rho(s, a)$. For any two policies $\pi, \pi'$ and for any discounted state visitation distribution $d$, the expected KL divergence  between $\pi$ and  $\pi'$ is defined as $D_{d}(\pi || \pi' ) := \sum_{s \in \Sc} d(s) D\left(\pi(\cdot| s)|| \pi'(\cdot | s)\right)$. 

Given a CMDP $M$, the  goal is to solve the constrained optimization problem:
\begin{align}
    \min_{\pi}~~ V_{c_0}^{\pi}(\rho) \label{eqn:cmdp},\quad \text{s.t.}~~ V_{c_i}^{\pi}(\rho) \leq 0,\quad \forall i \in [m].
\end{align}

Let $\pi^*$ be the optimal policy of the CMDP problem in (\ref{eqn:cmdp}). It is well-known that in general the optimal policy $\pi^*$ is randomized and the Bellman equation may not hold \cite{altman1999constrained}. We assume strict feasibility of \eqref{eqn:cmdp}, which naturally implies the existence of the optimal policy.

\begin{assumption}[Slater's condition] \label{asm:slater}
There exists $\xi > 0$ and $\overline{\pi}$ such that $V_{c_i}^{\overline{\pi}}(\rho) \le - \xi$, $\forall i \in [m]$.
\end{assumption}

This assumption is quite standard in the optimization literature for analyzing primal-dual algorithms \cite{bertsekas2014constrained}. In particular, many related works in the CMDP literature (see, e.g., \cite{ding2020natural, ding2021provably, efroni2020exploration, liu2021learning}) make the same strict feasibility assumption. Note that unlike previous primal-dual algorithms \cite{ding2020natural, ding2021provably} for CMDPs, where $\xi$ is required to be known a priori for the projection of dual variables, our proposed algorithm does not require the knowledge of $\xi$, and this assumption is made only for the analysis.

The constrained optimization problem in (\ref{eqn:cmdp}) can be reparameterized by using the discounted state-action visitation distribution as decision variables, as follows \cite{altman1999constrained}:
\begin{align}
     \min_{d \in \Dc} ~~ \frac{1}{1 - \gamma} \langle d, c_0 \rangle \label{eqn:LP-CMDP} \quad \text{s.t.}~~ \frac{1}{1-\gamma} \langle d, c_i\rangle \leq 0, \quad \forall i \in [m],
\end{align}
where $\Dc$ is the domain of visitation distributions defined as $ \Dc := \{d \in \Delta(\Sc \times \Ac):  \gamma \sum_{s', a'} P(s | s', a') d(s', a') +  (1 - \gamma) \rho(s) =  \sum_{a} d(s, a), ~ \forall s \in \Sc \}$. It is straight forward to notice that $\Dc$ is a compact convex set, and the linear programming (LP) formulation of the CMDP problem in (\ref{eqn:LP-CMDP}) satisfies strong duality.

The LP approach can be computationally expensive for CMDPs with a large number of states and actions. Moreover, the LP approach requires explicit knowledge of the transition kernel $P$, which makes it not amenable to model-free RL algorithms. In this work, we focus on a policy gradient-based approach for solving the CMDP problem.

\subsection{Gradient-based Approach for Solving CMDPs}  
For the constrained optimization problem in (\ref{eqn:cmdp}), define its Lagrangian as
\begin{align*}
    L(\pi, \lambda) := V_{c_0}^{\pi}(\rho) + \sum_{i=1}^m \lambda_i V_{c_i}^{\pi}(\rho),
\end{align*}
where $\lambda_i$ is the Lagrange multiplier corresponding to the $i$-th constraint, for each $ i \in [m]$. Due to its equivalence to the LP formulation in (\ref{eqn:LP-CMDP}) and the consequent strong duality \cite{altman1999constrained}, the optimal value of the CMDP satisfies
\begin{align*}
    V^{\pi^*}_{c_0}(\rho) = \min_{\pi} \max_{\lambda \ge 0} L(\pi, \lambda) = \max_{\lambda \ge 0} \min_{\pi}  L(\pi, \lambda).
\end{align*}
Notice that for any fixed vector $\lambda \geq 0$, the Lagrangian is actually the value function of an MDP with cost $c_0 + \sum_{i=1}^m \lambda_i c_i$, i.e., $L(\pi, \lambda) = V^{\pi}_{c_0 + \sum_{i=1}^m \lambda_i c_i}(\rho)$. Algorithms for solving MDPs can therefore be applied to tackle the problem $\min_{\pi} L(\pi, \lambda)$ for any fixed $\lambda$. The Langrange dual function, defined as $G(\lambda) := \min_{\pi} L(\pi, \lambda)$, has optimal dual variables defined as $\lambda^* := \argmax_{\lambda \geq 0} G(\lambda)$. Under Assumption \ref{asm:slater}, the optimal dual variables are bounded by $\|\lambda^*\| \leq \frac{2}{\xi(1-\gamma)}$ (cf. Lemma \ref{lem:upper-dual} in the Appendix). The optimal policy satisfies $\pi^* \in \argmin_{\pi} L(\pi, \lambda^*)$. In particular, the primal-dual algorithms for solving the CMDP problem are by searching the saddle-point of its Lagrangian.

Let $\{\pi_{\theta}| \theta \in \Theta\}$ be the class of parametric policies. The PG method updates the parameter $\theta$ with learning rate $\eta$ via $\theta^{(t+1)} \leftarrow \theta^{(t)} - \eta \nabla_{\theta} V_{c_0 + \sum_{i=1}^m \lambda_i c_i}^{\pi_{\theta^{(t)}}}(\rho)$, while the NPG method uses a pre-conditioned update 
\begin{align}
\label{eqn:npg_theta}
\theta^{(t+1)} \leftarrow \theta^{(t)} - \eta F_{\rho}(\theta^{(t)})^{\dagger} \nabla_{\theta}V_{c_0 + \sum_{i=1}^m \lambda_i c_i}^{\pi_{\theta^{(t)}}}(\rho),    
\end{align}
where $F_{\rho}(\theta)^{\dagger}$ is the Moore-Penrose inverse of the Fisher information matrix defined as $F_{\rho}(\theta)^{\dagger} := \mathbb{E}_{s \sim d_{\rho}^{\pi_{\theta}}} \mathbb{E}_{a \sim \pi_{\theta}(\cdot \mid s)}\left[\nabla_{\theta} \log \pi_{\theta}(a|s)\left(\nabla_{\theta} \log \pi_{\theta}(a|s)\right)^{\top}\right]^{\dagger}$.

We focus on policies with the widely used softmax parameterization, where for  any $\theta \in \mathbb{R}^{|\Sc||\Ac|}$, we define $\pi_{\theta}$ as
\begin{align}
\label{eqn:softmax}
    \pi_{\theta}(a|s) = \frac{\exp(\theta_{s, a})}{\sum_{a' \in \Ac}\exp(\theta_{s, a'})}, \quad \forall (s, a) \in \Sc \times \Ac.
\end{align}
This policy class is differentiable and complete in the sense that it covers almost any randomized policy and its closure contains all stationary policies \cite{agarwal2021theory}.

Under the softmax parameterization (\ref{eqn:softmax}), the NPG with learning rate $\eta$ takes the form
\begin{align}
\label{eqn:softmaxNPG}
    \pi^{(t+1)}(a|s) = \pi^{(t)}(a|s) \frac{\exp(-\eta Q_{c_0 + \sum_{i=1}^m \lambda_i c_i}^{\pi^{(t)}}(s, a))}{Z_t(s)},
\end{align}
where $Z_t(s) = \sum_{a} \pi^{(t)}(a|s) \exp(-\eta Q_{c_0 + \sum_{i=1}^m \lambda_i c_i}^{\pi^{(t)}}(s, a))$. It was shown that (\ref{eqn:softmaxNPG}) is equivalent to a mirror descent update \cite{zhan2021policy}
\begin{align}
\label{eqm:mirror_descent}
    \pi^{(t+1)}(\cdot|s) = \argmin_{\pi} &\left\{\langle Q_{c_0 + \sum_{i=1}^m \lambda_i c_i}^{\pi^{(t)}}(s, \cdot), \pi(\cdot|s) \rangle + \frac{1}{\eta} D(\pi(\cdot|s) || \pi^{(t)}(\cdot|s))\right\}.
\end{align}

The conventional dual update with learning rate $\eta'$ is $\lambda_i^{(t+1)} = \min(\lambda_i^{(t)} + \eta' V_{c_i}^{\pi^{(t+1)}}(\rho), 2/((1-\gamma)\xi) )_+, \forall i \in [m].$ Ding et al. \cite{ding2020natural} used the above NPG primal-dual (PD) approach, obtaining a convergence rate $\Oc(1/\sqrt{T})$. This is not surprising since the Lagrangian dual function $G(\lambda)$ is piecewise linear and concave. Thus the negative Lagrangian dual function  is neither smooth nor strongly convex. In general, the convergence rate of gradient-based methods for solving a non-smooth and non-strongly-convex function is at most $\Omega(1/\sqrt{T})$ \cite{bubeck2015convex}. 
It therefore seems impossible to achieve a faster rate, since even with direct access to $\pi^*_\lambda \in \argmin_{\pi} L(\pi, \lambda)$, using the gradient-based PD approach can not have a convergence rate faster than $\Oc(1/\sqrt{T})$ due to the structure of $G(\lambda)$. In this work, however, we show that one can indeed achieve a faster $\Oc(\log(T)/T)$ convergence rate using a novel procedure for updating the dual variable and a correspondingly modified NPG update.

\section{Policy Mirror Descent-Primal Dual (PMD-PD) Algorithm and Main Results}
\label{sec:algoirthm}
In this section, we propose the policy mirror descent-primal dual (PMD-PD) approach (Algorithm \ref{alg:PMD-PD}) for solving the CMDP problem in (\ref{eqn:cmdp}) with an $\Oc(\log(T)/T)$ convergence rate for both the optimality gap and the constraint violation. The PMD-PD algorithm is a two-loop algorithm: the outer loop  updates the  dual variable (Lagrange multiplier) and the  inner loop performs multiple steps of the entropy-regularized NPG updates under the softmax parameterization. Note that while the  standard entropy-regularized NPG algorithm for MDP \cite{cen2021fast} converges only to the optimal policy of the  regularized problem  (which is suboptimal with respect to the unregularized problem), the proposed PMD-PD algorithm converges to the optimal policy of the true (unregularized) CMDP problem. This is achieved by employing entropy regularization with respect to the policy from the previous update as opposed to the uniformly randomized policy used in \cite{cen2021fast}

\paragraph{Outer loop (dual update).}
The PMD-PD algorithm performs the dual variable update in each iteration  of the outer loop (which we call a ``macro" step). The traditional dual update is of the form  $\lambda_{k+1,i} = \min(\lambda_{k,i} + \eta' V_{c_i}^{\pi_{k+1}}(\rho), , 2/((1-\gamma)\xi) )_+$, which requires the knowledge of $\xi$ and fundamentally limits the rate of convergence of PD algorithms due to the lack of smoothness property of the Lagrangian dual function. To overcome this issue, we adopt a modified dual update introduced in \cite{yu2017simple}, where we update the Lagrange multiplier to take a maximum with $-\eta' V_{c_i}^{\pi_{k+1}}(\rho)$ without upper bound $2/((1-\gamma)\xi)$. More precisely, for each $i \in [m]$,  the dual update is given by 
\begin{align}
    \label{eq:dual-update-equation}
     \lambda_{k+1, i} = \max\left\{-\eta' V^{\pi_{k+1}}_{c_i}(\rho), \lambda_{k, i} + \eta' V^{\pi_{k+1}}_{c_i}(\rho) \right\},
\end{align}
where $\eta'$ is the learning rate. We will  show that this modified dual update procedure is helpful in achieving a faster rate of convergence. Below, we state some crucial properties of the dual variables resulting from the above mentioned update procedure. 
\begin{lemma}
\label{lem:L_property}
Let $\lambda_{k} = (\lambda_{k,i}, i \in [m]), k \geq 0,$ be the sequence of dual variables resulting from the PMD-MD algorithm dual update procedure given in Algorithm \ref{alg:PMD-PD}. Then, 
\vspace{-0.3cm}
\begin{enumerate}[itemsep=0pt]
    \item For any macro step k, $\lambda_{k, i} \ge 0, \ \forall i \in [m]$.
    \item For any macro step k, $\lambda_{k, i} + \eta' V_{c_i}^{\pi_k}(\rho) \ge 0, \ \forall i \in [m]$.
    \item For macro step 0, $\|\lambda_{0, i}\|^2 \le \|\eta' V_{c_i}^{\pi_0}(\rho)\|^2$; for any macro step $k > 0$, $\|\lambda_{k, i}\|^2 \ge \|\eta' V_{c_i}^{\pi_k}(\rho)\|^2$, $\forall i \in [m]$.
\end{enumerate}
\end{lemma}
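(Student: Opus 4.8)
The plan is to verify each of the three properties directly from the dual update rule in \eqref{eq:dual-update-equation}, proceeding by induction on the macro step $k$ where needed. Writing $g_{k+1,i} := V_{c_i}^{\pi_{k+1}}(\rho)$ for brevity, the update reads $\lambda_{k+1,i} = \max\{-\eta' g_{k+1,i},\ \lambda_{k,i} + \eta' g_{k+1,i}\}$. The first property (nonnegativity) is the easiest: since $\lambda_{k+1,i}$ is a maximum of two terms, and the first term $-\eta' g_{k+1,i}$ together with the second term $\lambda_{k,i}+\eta' g_{k+1,i}$ sum to $\lambda_{k,i}$, I would argue that at least one of the two is at least $\lambda_{k,i}/2$; more cleanly, observe that $\max\{a,b\}\ge (a+b)/2$, so $\lambda_{k+1,i}\ge \lambda_{k,i}/2$. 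Combined with the base case $\lambda_{0,i}\ge 0$ (which follows because $\lambda_{0,i}=\max\{-\eta' g_{0,i}, \eta' g_{0,i}\}=|\eta' g_{0,i}|\ge 0$ when $\lambda_{-1,i}$ is initialized to $0$, or directly from the initialization stated in the algorithm), a simple induction gives $\lambda_{k,i}\ge 0$ for all $k$.

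For the second property, I would note that the term inside the update at the \emph{next} step involves $\lambda_{k,i}+\eta' g_{k,i}$, so it is natural to relate property 2 at step $k$ to the definition of $\lambda_{k,i}$. Specifically, from \eqref{eq:dual-update-equation} at the $k$-th update, $\lambda_{k,i}\ge -\eta' g_{k,i}$ is immediate because $\lambda_{k,i}$ is a max whose first argument is exactly $-\eta' g_{k,i}$. Rearranging gives $\lambda_{k,i}+\eta' g_{k,i}\ge 0$, which is precisely property 2. This requires no induction and follows purely from one branch of the maximum; the only care needed is the base case $k=0$, handled again by the initialization.

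The third property is where I expect the real work, and it should also follow from a two-case analysis of which branch of the maximum is active. By definition $\lambda_{k,i}=\max\{-\eta' g_{k,i},\ \lambda_{k-1,i}+\eta' g_{k,i}\}$ for $k>0$. In the first case, $\lambda_{k,i}=-\eta' g_{k,i}$, so $\lambda_{k,i}^2=(\eta' g_{k,i})^2$ and the inequality $\lambda_{k,i}^2\ge (\eta' g_{k,i})^2$ holds with equality. In the second case, $\lambda_{k,i}=\lambda_{k-1,i}+\eta' g_{k,i}$ and this branch is active precisely when $\lambda_{k-1,i}+\eta' g_{k,i}\ge -\eta' g_{k,i}$, i.e. $\lambda_{k-1,i}+2\eta' g_{k,i}\ge 0$; using $\lambda_{k-1,i}\ge 0$ from property 1, I would show $\lambda_{k,i}=\lambda_{k-1,i}+\eta' g_{k,i}\ge |\eta' g_{k,i}|$, which after squaring yields the claim. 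For $k=0$, the reversed inequality $\lambda_{0,i}^2\le (\eta' g_{0,i})^2$ reflects the absence of a carried-over $\lambda_{-1,i}$ term, and I would verify it from the base-case form of the update.

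The main obstacle, such as it is, lies in pinning down the initialization convention for $\lambda_{0,i}$ (and the notional $\lambda_{-1,i}$) from Algorithm~\ref{alg:PMD-PD} so that the base cases of all three inductions align, and in keeping the two-branch case analysis for property 3 clean—particularly the step that upgrades $\lambda_{k-1,i}+\eta' g_{k,i}\ge 0$ to the stronger $\lambda_{k-1,i}+\eta' g_{k,i}\ge |\eta' g_{k,i}|$ using nonnegativity of $\lambda_{k-1,i}$. None of the steps require smoothness or Slater's condition; they are structural consequences of the $\max$-form dual update, exactly as in the augmented-multiplier technique of \cite{yu2017simple} that the algorithm adapts.
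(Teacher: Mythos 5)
Your proposal is correct and follows essentially the same route as the paper's proof: all three properties are obtained by elementary case analysis on the two branches of the $\max$-form update \eqref{eq:dual-update-equation}, with induction (via property~1) supplying the nonnegativity needed to upgrade $\lambda_{k-1,i}+\eta' V_{c_i}^{\pi_k}(\rho)\ge 0$ to $\lambda_{k,i}\ge|\eta' V_{c_i}^{\pi_k}(\rho)|$ in property~3, and the explicit initialization $\lambda_{0,i}=\max\{0,-\eta' V_{c_i}^{\pi_0}(\rho)\}$ settling all base cases (including the reversed inequality at $k=0$). Your $\max\{a,b\}\ge(a+b)/2$ shortcut for property~1 is a minor cosmetic variation on the paper's sign-based two-case argument, not a different method.
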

The first property guarantees the feasibility of the Lagrange multipliers; the second property ensures that the Lagrangian in the inner loop can indeed minimize the constraint costs (discussed below); and the third property is a key supporting step for the analysis of the constraint violation.

\paragraph{Inner loop (policy update).}
The PMD-PD algorithm performs $t_{k}$ steps of policy updates in the inner loop for each iteration $k$ of the outer loop (macro step $k$) with a fixed dual variable $\lambda_{k}$. The conventional approach in such a setting is to consider the Lagrangian $L(\pi, \lambda_{k})$ as an MDP problem with the equivalent cost function $c_0(s,a) + \sum_{i=1}^m \lambda_{k,i} c_i(s,a)$ and perform one or multiple gradient updates of this MDP. However, this naive approach does not appear to lead to faster convergence. Different from this conventional approach, we use a modified Lagrange multiplier $\lambda_{k, i} + \eta' V_{c_i}^{\pi_k}(\rho)$ in the inner loop. This is equivalent to considering an MDP with the cost function 
\begin{align}
\label{eqn:tilde-c}
    \tilde{c}_k(s, a) := c_0(s, a) + \sum_{i=1}^m (\lambda_{k, i} + \eta' V_{c_i}^{\pi_k}(\rho)) c_i(s, a). 
\end{align}
It is straightforward to note that $|\tilde{c}_{k}(s, a)| \leq 1 + \sum_{i = 1}^m \lambda_{k, i} + \frac{m \eta'}{1 - \gamma}$. 

We define the state value function and state-action value function of the resulting MDP with cost function $\tilde{c}_k$ as 
\begin{align}
     &\tilde{V}_{k}^{\pi}(s) := \Eb [\sum^{\infty}_{t = 0} \gamma^t \tilde{c}_k(s_t, a_t) ~{\Big |}~ s_0=s,\pi ],
    \label{eqn:tilde-V} \\
    &\hspace{-0.4cm}\tilde{Q}_{k}^{\pi}(s, a)  := \Eb [\sum^{\infty}_{t = 0} \gamma^t \tilde{c}_k(s_t, a_t) ~{\Big |}~ s_0=s, a_0 = a, \pi ].
    \label{eqn:tilde-Q}
\end{align}
It follows that $|\tilde{V}^{\pi}_k(s)| \leq \frac{1 + \sum_{i=1}^m \lambda_{k, i}}{1 - \gamma} + \frac{m \eta'}{(1 - \gamma)^2}$ and $|\tilde{Q}^{\pi}_{k}(s, a)| \leq \frac{1 + \sum_{i=1}^m \lambda_{k, i}}{1 - \gamma} + \frac{m \eta'}{(1 - \gamma)^2}$. We note that $\lambda_{k}$ is upper bounded by a constant that does not depend on $k$ or $K$, see (\ref{eqn:bound_lambda}).

We also define the KL-regularized state value function for coefficient $\alpha > 0$ as 
\begin{align}
    \tilde{V}_{k, \alpha}^{\pi}(s) &:= \Eb [\sum^{\infty}_{t = 0} \gamma^t (\tilde{c}_k(s_t, a_t) + \alpha \log\frac{\pi(a_t|s_t)}{\pi_k(a_t|s_t)}) | s_0 = s, \pi]. \label{eqn:regu-V}
\end{align}
Note that $\tilde{V}^{\pi}_{k, \alpha}(s)$ can be interpreted as a (negative) entropy-regularized value function with cost $\tilde{c}_{k}(s, a) + \alpha\log\frac{1}{\pi_{k}(a| s)}$ \cite{cen2021fast}. The entropy-regularized state-action value function is then defined as \cite{cen2021fast}  
\begin{align}
\label{eqn:regu-Q}
    \tilde{Q}_{k, \alpha}^{\pi}(s, a) = \tilde{c}_k(s, a) &+ \alpha \log\frac{1}{\pi_k(a|s)} + \gamma \Eb_{s' \sim P(\cdot | s, a)} [\tilde{V}_{k, \alpha}^{\pi}(s')]. 
\end{align}
With slight abuse of notation, we denote
\begin{align*}
    \tilde{V}_{k, \alpha}^{\pi}(\rho) := \Eb_{s_0 \sim \rho} [\tilde{V}_{k, \alpha}^{\pi}(s)] \ \text{ and } \ \tilde{V}_{k}^{\pi}(\rho) := \Eb_{s_0 \sim \rho} [\tilde{V}_{k}^{\pi}(s)].
\end{align*}

\begin{algorithm}[t]
\caption{\textbf{Policy Mirror Descent-Primal Dual (PMD-PD)}}
\label{alg:PMD-PD}
\noindent \textbf{Input:} $\rho, K, \alpha, \eta, \eta'$; \\
\noindent \textbf{Initialization:}  $\pi_0$ takes an action uniformly at random for any state. $\lambda_{0, i} = \max\{0, -\eta' V_{c_i}^{\pi_0}(\rho)\}, \forall i \in [m]$;\\
\For{$k = 0, 1, \dots, K-1$}{
\noindent {\bf\textit{[Inner loop (policy update)]}} (for optimizing $\tilde{V}_{k, \alpha}^{\pi}$ given in (\ref{eqn:regu-V}) via regularized NPG) \\
\noindent Initialize $\pi_k^{(0)} = \pi_k$; \\ 
\For{$t = 0, 1, \dots, t_k-1$}{
    \noindent Update the policy $ \pi_k^{(t)}$ to $ \pi_k^{(t+1)}$ according to the regularized NPG update procedure \eqref{eqn:npg-entropy};
}
\noindent $\pi_{k+1}(a|s) =\pi_k^{(t_k)}(a|s), \forall (s, a) \in \Sc \times \Ac$; \\
\noindent{\bf\textit{[Outer loop (dual update)]}} \\
\noindent Update the dual variable $\lambda_{k}$ to $\lambda_{k+1}$ according to the modified dual update procedure \eqref{eq:dual-update-equation};
}
\noindent \textbf{Output:} $\overline{\pi} = \frac{1}{K} \sum_{k=1}^{K} \pi_k$.
\end{algorithm}

The goal of the inner loop is essential to find the optimal policy $\pi^{*}_{k}$ for the entropy-regularized problem, i.e., $\pi^{*}_{k} \in \argmax_{\pi}  \tilde{V}_{k, \alpha}^{\pi}(\rho)$. We achieve this by performing multiple NPG updates. Similar to the NPG update for the unregularized problem given in \eqref{eqn:softmaxNPG}, the NPG update for the regularized problem under the softmax parameterization with the learning rate $\eta$ also yields a closed-form expression \cite{cen2021fast} as given below:
\begin{align}
&\pi_k^{(t+1)}(a|s) = \frac{(\pi_k^{(t)}(a|s))^{1 - \frac{\eta \alpha}{1-\gamma}} \exp(\frac{-\eta \tilde{Q}_{k, \alpha}^{\pi_k^{(t)}}(s, a)}{1-\gamma})}{Z^{(t)}(s)},
\label{eqn:npg-entropy}
\end{align}
where $Z^{(t)}(s) = \sum_{a'} (\pi_k^{(t)}(a'|s))^{1 - \frac{\eta \alpha}{1-\gamma}} \exp(\frac{-\eta \tilde{Q}_{k, \alpha}^{\pi_k^{(t)}}(s, a')}{1-\gamma})$. Indeed, in the inner loop of the PMD-PD algorithm, we update the policy according to the above procedure. 

We summarize the PMD-PD algorithm in Algorithm \ref{alg:PMD-PD}. Note that the name of Algorithm \ref{alg:PMD-PD} comes from the equivalence of NPG and mirror descent under the softmax parameterization, and the reliance on important properties of mirror descent in the analysis. 

We now present the main results on the performance guarantees of the PMD-PD algorithm.  

\begin{theorem}
\label{thm:PMD-MD}
For any $\eta' \in (0, 1]$, let $\alpha = \frac{2 \gamma^2 m \eta'}{(1 - \gamma)^3}$, $\eta = \frac{1-\gamma}{\alpha}$, and take $t_k = \lceil \max(\frac{1}{\eta \alpha} \log(3K C_k), 1) \rceil$ with $C_k = 2 \gamma (\frac{1+\sum_{i=1}^m \lambda_{k,i}}{1-\gamma} + \frac{m \eta'}{(1-\gamma)^2})$. Let $(\pi_{k})_{k \geq 1}$ be the sequence of policies generated by the PMD-PD algorithm (line 9 in Algorithm \ref{alg:PMD-PD}).  Then, for any $K \ge 1$, we have the optimality gap and the constraint violation given by:
\begin{align}
    &\frac{1}{K} \sum_{k=1}^{K} \left(V_{c_0}^{\pi_k}(\rho) - V_{c_0}^{\pi^*}(\rho)\right) \le \frac{1}{K} \left(\frac{\alpha \log(|\Ac|)}{1-\gamma} + 1 + \frac{2}{3(1-\gamma)}\right), \label{eqn:opt-gap} \\
    &\max_{i \in [m]} \left\{\left(\frac{1}{K} \sum_{k=1}^{K} V_{c_{i}}^{\pi_k}(\rho) \right)_{+}\right\} \leq \frac{1}{K} \left(\frac{\|\lambda^*\|}{\eta'} + \sqrt{\frac{\|\lambda^*\|^2}{\eta'^2} + \frac{2\alpha \log(|\Ac|)}{(1 - \gamma)\eta'}  + \frac{2}{\eta'}\left(1 + \frac{2}{3(1-\gamma)}\right)  + \frac{2m}{(1-\gamma)^2}}\right), \label{eqn:con-vio}
\end{align}
where, $\lambda^*$ is the vector of optimal dual variables of the CMDP problem \eqref{eqn:cmdp}. 
\end{theorem}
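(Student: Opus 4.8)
The plan is to adapt the drift-plus-penalty analysis of Yu and Neely \cite{yu2017primal} to the CMDP setting, viewing each inner loop as an approximate proximal (mirror-descent) step on a regularized Lagrangian and the modified dual update \eqref{eq:dual-update-equation} as a virtual-queue update. Throughout I would write $g_{k,i}:=V_{c_i}^{\pi_k}(\rho)$, $\Phi_k:=\tfrac12\|\lambda_k\|^2$, and $g^*_i:=V_{c_i}^{\pi^*}(\rho)\le 0$, and I would use the identity $\tilde V_k^{\pi}(\rho)=L(\pi,\lambda_k)+\eta'\sum_i g_{k,i}V_{c_i}^{\pi}(\rho)$, so that the effective multiplier driving the inner loop is $\lambda_k+\eta'g_k$.

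The first and most delicate step is to convert the $t_k$ regularized-NPG iterations of macro step $k$ into a single per-step inequality. Since \eqref{eqn:npg-entropy} is the mirror-descent/NPG update for the MDP with cost $\tilde c_k$, I would invoke the linear convergence of entropy-regularized NPG \cite{cen2021fast} to show $\pi_{k+1}$ is within an error $\epsilon_k$ of the regularized optimum $\pi_k^*$, where the prescribed $t_k=\lceil\max(\tfrac1{\eta\alpha}\log(3KC_k),1)\rceil$ forces $\epsilon_k$ to be of order $1/K$ so that $\sum_k\epsilon_k\le 1+\tfrac{2}{3(1-\gamma)}$. Combining this with the performance-difference lemma and the strong convexity of the KL mirror map should yield, for each $k$, a three-point inequality of the form
\[V_{c_0}^{\pi_{k+1}}(\rho)-V_{c_0}^{\pi^*}(\rho)+\langle\lambda_k+\eta'g_k,\,g_{k+1}\rangle+\tfrac{\alpha}{1-\gamma}D(\pi_{k+1}\|\pi_k)\le\tfrac{\alpha}{1-\gamma}\big(D(\pi^*\|\pi_k)-D(\pi^*\|\pi_{k+1})\big)+\epsilon_k,\]
where the KL terms are weighted by the appropriate optimal state-visitation distribution and I have already dropped $\langle\lambda_k+\eta'g_k,g^*\rangle\le0$ using $g^*\le 0$ together with property 2 of Lemma \ref{lem:L_property}. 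Getting the state-distribution bookkeeping right — reconciling the $d_\rho^{\pi}$-weighting built into $\tilde V_{k,\alpha}$ with the fixed $d_\rho^{\pi^*}$-weighting needed for a clean telescoping, while retaining the negative proximal term $-\tfrac{\alpha}{1-\gamma}D(\pi_{k+1}\|\pi_k)$ — is the main obstacle, and I expect it to require the bulk of the technical work.

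The second step bounds the Lyapunov drift. The dual update \eqref{eq:dual-update-equation} gives $\lambda_{k+1,i}^2=\max\{\eta'^2g_{k+1,i}^2,(\lambda_{k,i}+\eta'g_{k+1,i})^2\}$, from which $\Phi_{k+1}-\Phi_k\le\eta'\langle\lambda_k,g_{k+1}\rangle+\eta'^2\|g_{k+1}\|^2$. Substituting this to replace $\langle\lambda_k,g_{k+1}\rangle$ in the per-step inequality, the cross term reorganizes as $\eta'\langle g_{k+1}-g_k,g_{k+1}\rangle=\tfrac{\eta'}2(\|g_{k+1}\|^2-\|g_k\|^2)+\tfrac{\eta'}2\|g_{k+1}-g_k\|^2$. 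The key calculation is that $\|g_{k+1}-g_k\|^2$ is controlled by $D(\pi_{k+1}\|\pi_k)$ via Lipschitzness of the constraint value functions (performance-difference plus Pinsker), with Lipschitz constant of order $\tfrac{\gamma^2 m}{(1-\gamma)^4}$; the prescribed $\alpha=\tfrac{2\gamma^2 m\eta'}{(1-\gamma)^3}$ is precisely what makes $\tfrac{\eta'}2\|g_{k+1}-g_k\|^2\le\tfrac{\alpha}{1-\gamma}D(\pi_{k+1}\|\pi_k)$, so these terms are absorbed by the proximal term. Telescoping over $k=0,\dots,K-1$, the KL terms collapse to $\tfrac{\alpha}{1-\gamma}D(\pi^*\|\pi_0)\le\tfrac{\alpha\log|\Ac|}{1-\gamma}$, the $\tfrac1{\eta'}(\Phi_{k+1}-\Phi_k)$ and $\tfrac{\eta'}2(\|g_{k+1}\|^2-\|g_k\|^2)$ telescope to boundary terms, and property 3 of Lemma \ref{lem:L_property} (at $k=0$ and $k=K$) makes these boundary terms nonpositive. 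This yields \eqref{eqn:opt-gap}.

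For the constraint violation \eqref{eqn:con-vio}, I would first telescope $\lambda_{k+1,i}\ge\lambda_{k,i}+\eta'g_{k+1,i}$ and use $\lambda_{0,i}\ge0$ (property 1 of Lemma \ref{lem:L_property}) to obtain $\sum_{k=1}^{K}V_{c_i}^{\pi_k}(\rho)\le\lambda_{K,i}/\eta'\le\|\lambda_K\|/\eta'$, so that $(\tfrac1K\sum_k V_{c_i}^{\pi_k})_+\le\|\lambda_K\|/(\eta'K)$ and it only remains to bound $\|\lambda_K\|$. Summing the drift gives $\tfrac1{\eta'}\Phi_K\le\tfrac1{\eta'}\Phi_0+\sum_k\langle\lambda_k,g_{k+1}\rangle+\eta'\sum_k\|g_{k+1}\|^2$; I would upper-bound $\sum_k\langle\lambda_k,g_{k+1}\rangle$ from the same per-step inequality (now retaining the objective terms and again absorbing $\|g_{k+1}-g_k\|^2$), and lower-bound the objective via the saddle-point relation $V_{c_0}^{\pi_k}(\rho)-V_{c_0}^{\pi^*}(\rho)\ge-\langle\lambda^*,g_k\rangle$, which follows from $\pi^*\in\argmin_\pi L(\pi,\lambda^*)$ together with complementary slackness $\langle\lambda^*,g^*\rangle=0$. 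Combined with $\langle\lambda^*,\sum_k g_k\rangle\le\tfrac1{\eta'}\|\lambda^*\|\|\lambda_K\|$, this produces a quadratic inequality $\tfrac12\|\lambda_K\|^2\le\|\lambda^*\|\|\lambda_K\|+\eta' B+\eta'^2\tfrac{m}{(1-\gamma)^2}$, where $B$ is the numerator of \eqref{eqn:opt-gap}; solving for $\|\lambda_K\|$ gives $\|\lambda_K\|\le\|\lambda^*\|+\sqrt{\|\lambda^*\|^2+2\eta'B+2\eta'^2 m/(1-\gamma)^2}$, which is exactly \eqref{eqn:con-vio} after dividing by $\eta'K$, and incidentally confirms that $\lambda_k$ remains bounded independently of $K$.
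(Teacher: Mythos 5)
Your proposal is correct and follows essentially the same route as the paper's proof: the per-step three-point inequality driven by the effective multiplier $\lambda_k+\eta' V_{c_{1:m}}^{\pi_k}(\rho)$, the drift bound $\langle\lambda_k, V_{c_{1:m}}^{\pi_{k+1}}(\rho)\rangle \ge \tfrac{1}{2\eta'}(\|\lambda_{k+1}\|^2-\|\lambda_k\|^2)-\eta'\|V_{c_{1:m}}^{\pi_{k+1}}(\rho)\|^2$, the absorption of $\tfrac{\eta'}{2}\|V_{c_{1:m}}^{\pi_{k+1}}(\rho)-V_{c_{1:m}}^{\pi_k}(\rho)\|^2$ into the KL proximal term via the choice of $\alpha$, the boundary terms killed by property 3 of Lemma \ref{lem:L_property}, and the quadratic inequality for $\|\lambda_K\|$ from the saddle-point relation are exactly the paper's steps (Lemmas \ref{lem:push-app}, \ref{lem:lower_bound}, \ref{lem:inner-product-bound}, \ref{lem:distance}). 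The one step you defer as ``the main obstacle'' --- getting the three-point inequality with every KL term weighted consistently so it telescopes --- is resolved in the paper by observing that $D_{d_\rho^{\pi}}(\pi\|\pi')$ equals a Bregman divergence of the visitation distributions themselves (Definition \ref{def:pseudo-kl}, Lemma \ref{lem:Bregman}), so the standard pushback property (Lemma \ref{lem:pushback}) applies directly to $\tilde V_k$, which is linear in $d_\rho^{\pi}$.
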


The corollary below exhibits the $\Oc(\log(T)/T)$ convergence rate in terms of the number of iterations $T$.
\begin{corollary}
\label{cor:PMD-PD}
Denote by $T := \sum_{k=0}^{K-1} t_k$ the total number of iterations of the PMD-PD algorithm (Algorithm \ref{alg:PMD-PD}). The optimality gap and the constraint violation satisfy 
\begin{align*}
    &\frac{1}{K} \sum_{k=1}^{K} \left(V_{c_0}^{\pi_k}(\rho) - V_{c_0}^{\pi^*}(\rho)\right) \le b_1 \frac{m \log|\Ac| \log (C^* T)}{(1-\gamma)^5 T}, \\
    &\max_{i \in [m]} \left\{\left(\frac{1}{K} \sum_{k=1}^{K} V_{c_{i}}^{\pi_k}(\rho)\right)_{+}\right\} \le b_1' \frac{\|\lambda^*\|\sqrt{m \log |\Ac|} \log (C^* T)}{(1-\gamma)^3T},
\end{align*}
where $C^*$ defined in (\ref{eqn:C_*}) is a parameter depending on $\lambda^*$, $b_1 \text{ and } b_1'$ are universal constants.
\end{corollary}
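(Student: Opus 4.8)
The plan is to read Corollary~\ref{cor:PMD-PD} as a pure reparametrization of Theorem~\ref{thm:PMD-MD}: both right-hand sides in the theorem scale like $1/K$, where $K$ is the number of macro (outer) steps, so the only real task is to lower-bound $K$ in terms of the total iteration count $T = \sum_{k=0}^{K-1} t_k$ and then substitute the prescribed $\alpha = \frac{2\gamma^2 m \eta'}{(1-\gamma)^3}$, $\eta = \frac{1-\gamma}{\alpha}$, $\eta' \in (0,1]$ to collect constants. I would first record the identity $\eta \alpha = 1-\gamma$, immediate from $\eta = \frac{1-\gamma}{\alpha}$, so that $\frac{1}{\eta\alpha} = \frac{1}{1-\gamma}$ and each macro step runs $t_k = \lceil \max(\frac{1}{1-\gamma}\log(3KC_k),\, 1) \rceil$ inner iterations.

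The core of the argument, and the step I expect to be the main obstacle, is the $K$-to-$T$ conversion, which is delicate precisely because $t_k$ itself depends on $K$. I would begin by establishing a uniform bound $C_k \le C^*$ independent of $k$ and $K$: by the boundedness of the dual iterates in \eqref{eqn:bound_lambda} (a consequence of Lemma~\ref{lem:L_property} and Assumption~\ref{asm:slater}), the quantity $\sum_i \lambda_{k,i}$, and hence $C_k = 2\gamma(\frac{1+\sum_i \lambda_{k,i}}{1-\gamma} + \frac{m\eta'}{(1-\gamma)^2})$, is uniformly bounded. Using $\lceil \max(x,1)\rceil \le x+2$ then gives $t_k \le \frac{1}{1-\gamma}\log(3KC^*) + 2$ for every $k$, and summing over the $K$ macro steps yields $T \le \frac{c\, K \log(3KC^*)}{1-\gamma}$ for a universal constant $c$. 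Inverting produces $\frac{1}{K} \le \frac{c\,\log(3KC^*)}{(1-\gamma)T}$; the residual $\log K$ on the right is the subtle point, and I would dispatch it with the trivial observation that $t_k \ge 1$ forces $T = \sum_k t_k \ge K$, so $\log(3KC^*) \le \log(3TC^*)$. Absorbing $3C^*$ into the constant $C^*$ of \eqref{eqn:C_*}, this delivers the key estimate $\frac{1}{K} \le \frac{c\,\log(C^*T)}{(1-\gamma)T}$, i.e.\ precisely the $\Oc(\log(T)/T)$ factor.

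Finally I would substitute this estimate into the two bounds of Theorem~\ref{thm:PMD-MD}. For the optimality gap, plugging $\alpha = \frac{2\gamma^2 m\eta'}{(1-\gamma)^3}$ into $\frac{\alpha \log|\Ac|}{1-\gamma} + 1 + \frac{2}{3(1-\gamma)}$ and using $\eta' \le 1$ shows the prefactor is $\Oc(\frac{m\log|\Ac|}{(1-\gamma)^4})$; multiplying by $\frac{1}{K}$ gives the stated $\Oc(\frac{m\log|\Ac|\log(C^*T)}{(1-\gamma)^5 T})$, the extra $(1-\gamma)$ coming from the $1/K$ factor. For the constraint violation I would use $\sqrt{a+b} \le \sqrt a + \sqrt b$ to bound $\frac{\|\lambda^*\|}{\eta'} + \sqrt{\frac{\|\lambda^*\|^2}{\eta'^2} + \cdots}$ by $\frac{2\|\lambda^*\|}{\eta'} + \sqrt{\frac{2\alpha\log|\Ac|}{(1-\gamma)\eta'}} + (\text{lower order})$, where the main radical simplifies to $\frac{2\gamma\sqrt{m\log|\Ac|}}{(1-\gamma)^2}$, making the prefactor $\Oc(\frac{\|\lambda^*\|\sqrt{m\log|\Ac|}}{(1-\gamma)^2})$; multiplying by $\frac{1}{K}$ then yields the claimed $\Oc(\frac{\|\lambda^*\|\sqrt{m\log|\Ac|}\log(C^*T)}{(1-\gamma)^3 T})$. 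The one place where care is needed in this last step is rewriting the additive prefactor as the single product $\|\lambda^*\|\sqrt{m\log|\Ac|}$, which relies on crude inequalities such as $\sqrt{m\log|\Ac|}/(1-\gamma)^2 \ge 1$ together with treating $\|\lambda^*\|$ as bounded below by a constant, so that the $\|\lambda^*\|$-free radical is dominated by the product form up to the universal constant $b_1'$.
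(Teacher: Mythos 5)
Your proposal follows essentially the same route as the paper's proof: bound $\|\lambda_k\|$ (hence $C_k$) uniformly via \eqref{eqn:lambda_bound}, sum the $t_k$ to get $T \le \frac{1}{1-\gamma}K\log(3KC^*)$ up to lower-order terms, use $T \ge K$ to trade $\log K$ for $\log T$ and conclude $\frac{1}{K} \le \frac{\log(C^* T)}{(1-\gamma)T}$, and then substitute into Theorem~\ref{thm:PMD-MD} with the prescribed $\alpha$. The caveat you flag at the end --- that folding the $\|\lambda^*\|$-free terms of the constraint-violation prefactor into the single product $\|\lambda^*\|\sqrt{m\log|\Ac|}$ implicitly treats $\|\lambda^*\|$ as bounded away from zero --- is equally present in the paper's own final inequality, so it is a shared imprecision rather than a gap specific to your argument.
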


\noindent\textit{Outline of proof idea:} 
We briefly explain the proof idea of Theorem \ref{thm:PMD-MD}, with details presented in Appendix \ref{sec:analysis_PMD}. We analyze the inner loop and outer loop separately. The goal of the inner loop analysis is to show that the policy updates converge to an approximate solution of  the MDP with value $\tilde{V}^{\pi}_{k, \alpha}(\rho)$. We make use of the convergence analysis of the entropy-regularized NPG \cite{cen2021fast} to show this. The analysis of the outer loop focuses on the inner product term $\left\langle \lambda_k + \eta' V^{\pi_{k}}_{c_{1:m}}(\rho),  V^{\pi_{k+1}}_{c_{1:m}}(\rho) \right\rangle$ in the Lagrangian by leveraging the update rule of dual variables in each macro step, which relies on the modified Lagrange multiplier and the modified Lagrangian cost function.

\section{Experiments}
\label{sec:exp}
For the experiments, instead of the  minimization problem \eqref{eqn:cmdp}, we focus on an equivalent maximization problem
\begin{align}
\label{eqn:cmdp_exp}
\max_{\pi}\quad V_{r}^{\pi}(\rho) \quad    \text{s.t.} \quad V_{g_i}^{\pi}(\rho) \geq l_i, \quad \forall i \in [m],
\end{align}
where $r: \Sc \times \Ac \to [0, 1]$ is a reward function and $g_i: \Sc \times \Ac \to [0, 1], \forall i \in [m]$ is a utility function. 
This is mainly to transparently use the existing code base available for policy gradient algorithms. 
The optimality gap and constraint violation are defined as
\begin{align*}
    \text{Optimality Gap}(t) &:= \frac{1}{t} \sum_{\tau=1}^{t} \left(V_{r}^{\pi^*}(\rho) - V_{r}^{\pi_{\tau}}(\rho)\right), \\
    \text{Violation}_i(t) &:= \frac{1}{t} \sum_{\tau=1}^{t} \left(l_i - V_{g_i}^{\pi_{\tau}}(\rho)\right), \ \forall i \in [m].
\end{align*}

We first consider a  randomly generated CMDP with $|\Sc|=20, |\Ac|=10, \gamma=0.8, m=1 \text{ and } l_{1}=3$. We compare the performance of the proposed PMD-PD algorithm with two benchmark algorithms: the NPG-PD algorithm \cite{ding2020natural} and the CRPO algorithm \cite{xu2021crpo}. We choose $\eta=1$ for all algorithms, $\eta'=1$ for NPG-PD and PMD-PD, and $t_k=1, \forall k = 0, 1, \dots, K-1$ for PMD-PD.

Figure \ref{fig:simple_cmdp} illustrates that both the optimality gap and the constraint violation of the PMD-PD algorithm converge faster than those of the NPG-PD algorithm \cite{ding2020natural}. Since the CRPO algorithm \cite{xu2021crpo} focuses on the violated constraint, the updated policy becomes feasible quickly, though at the cost of a slower convergence for the optimality gap.  

As illustrated in Figure \ref{fig:simple_cmdp_log}, the slopes of the NPG-PD algorithm and the CRPO algorithm are around -0.5 in the log-log plot of the optimality gap in Figure \ref{fig:simple_cmdp_log}(a), while the slopes of the PMD-PD algorithm are around -0.9 to -1 in both Figure \ref{fig:simple_cmdp_log}(a) and Figure \ref{fig:simple_cmdp_log}(b), which means that both the optimality gap and the constraint violation of the PMD-PD algorithm converge at a rate of $\tilde{\Oc}(1/t)$.

\vspace{-0.1in}
\paragraph{Additional experiments:} We have included additional experiments in Appendix \ref{sec:exp_appendix}, which show the performance advantages of the sample-based PMD-PD algorithm (see Section \ref{sec:sample}) on the same tabular CMDP, as well as a more complex Acrobot-v1 task from OpenAI Gym \cite{1606.01540}. We have also included the code in the supplementary material.

\begin{figure}[t]
\centering
\begin{subfigure}{0.49\textwidth}
\centering
\includegraphics[width=\textwidth]{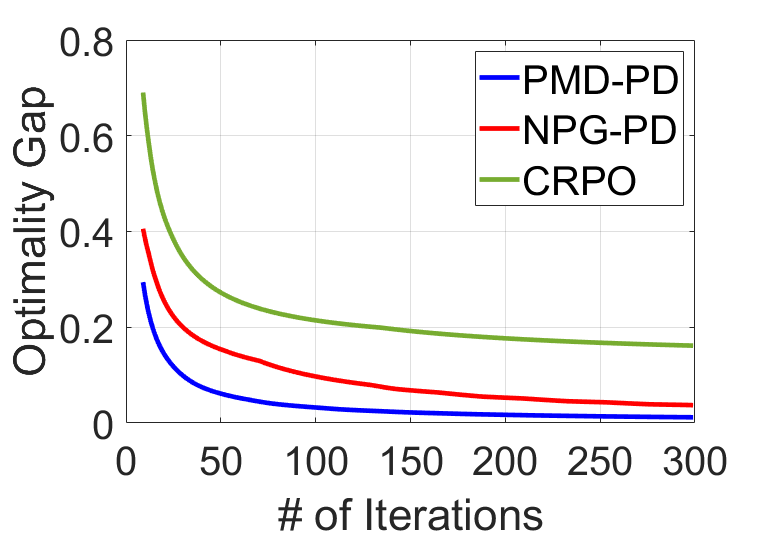}
\caption{}
\end{subfigure}
\begin{subfigure}{0.49\textwidth}
\centering
\includegraphics[width=\textwidth]{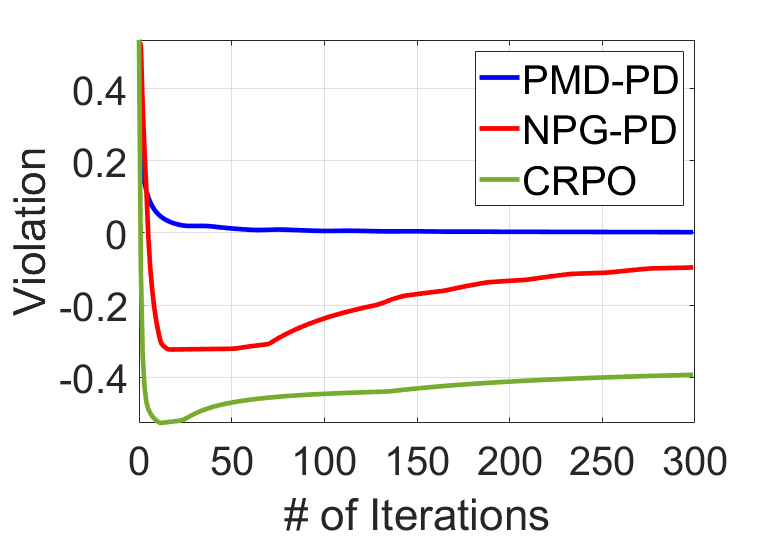}
\caption{}
\end{subfigure}
\caption{The optimality gap and the constraint violation with respect to the number of iterations, for PMD-PD, NPG-PD, and CRPO on a randomly generated CMDP.}
\label{fig:simple_cmdp}
\end{figure}

\begin{figure}[t]
\centering
\begin{subfigure}{0.49\textwidth}
\centering
\includegraphics[width=\textwidth]{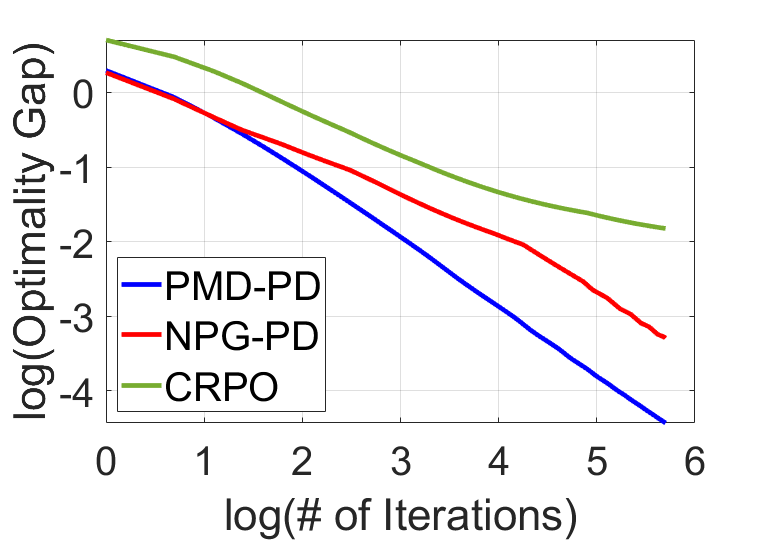}
\caption{}
\end{subfigure}
\begin{subfigure}{0.49\textwidth}
\centering
\includegraphics[width=\textwidth]{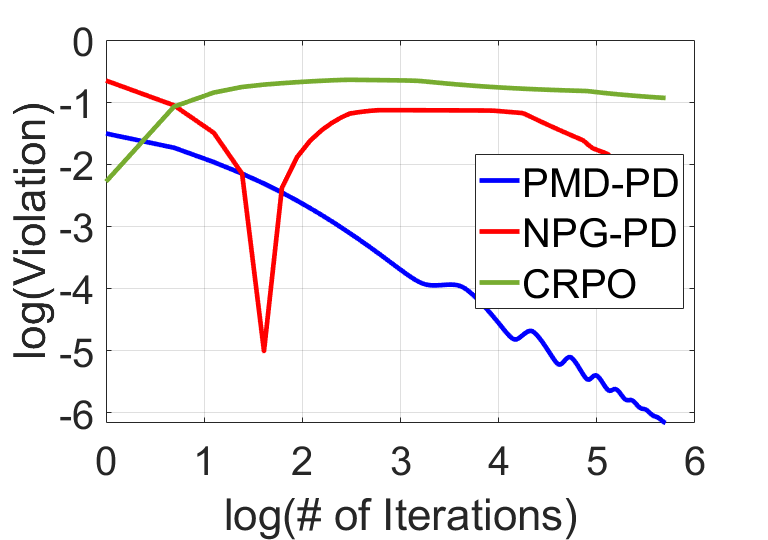}
\caption{}
\end{subfigure}
\caption{The log-log plots of the optimality gap and the constraint violation versus the number of iterations, for PMD-PD, NPG-PD, and CRPO on the same CMDP problem}
\label{fig:simple_cmdp_log}
\end{figure}

\section{Extensions}
\subsection{PMD-PD-Zero: Algorithm with Zero Constraint Violation}
\label{sec:zero}
The CMDP formalism is often used to model control problems with safety constraints \cite{amodei2016concrete, dulac2021challenges, li2016cmdp}. In many of these problems, it is important to ensure that the cumulative constraint violation is zero while finding the optimal policy.  While the PMD-PD algorithm described in the previous section gives provable convergence to the optimal policy, it may incur a positive cumulative constraint violation during the implementation of the algorithm. Indeed, \eqref{eqn:con-vio} in Theorem \ref{thm:PMD-MD} only gives an upper bound on the cumulative constraint violations. One important question in this context is: \textit{Can we design a policy gradient-based algorithm for CMDPs that can provably achieve fast global convergence  while ensuring that the cumulative constraint violation is zero?} 

In recent work, \cite{liu2021learning} used the idea of ``pessimism in the constraints'' to ensure zero constraint violation for a constrained RL problem. We generalize this idea to the policy gradient setting and present a modification of the PMD-PD algorithm, which we call the PMD-PD-Zero Algorithm, to ensure zero cumulative constraint violation. To guarantee this, we require the knowledge of the parameter $\xi$ in Assumption \ref{asm:slater}. 

The key idea of PMD-PD-Zero is to introduce a pessimistic term $\delta \in (0, \xi)$ in the orginal optimization problem \eqref{eqn:cmdp}. More precisely, we consider the pessimistic problem 
\begin{align}
\label{eqn:cmdp_zero}
    \min_{\pi}~~ V_{c_0}^{\pi}(\rho),\quad \text{s.t.}~~ V_{c_i}^{\pi}(\rho) \leq -\delta,\quad \forall i \in [m]. 
\end{align}
By selecting $\delta$ appropriately, and then employing the same update procedure as described in Algorithm \ref{alg:PMD-PD} for \eqref{eqn:cmdp_zero}, we show that we can ensure the same convergence rate for the optimality gap while ensuring zero cumulative constraint violations: 

\begin{theorem}
\label{thm:zero}
Consider Algorithm \ref{alg:PMD-PD} applied to solve the pessimistic CMDP problem (\ref{eqn:cmdp_zero}) under
Assumption \ref{asm:slater}, with the same input parameters specified in Theorem \ref{thm:PMD-MD} and $\delta := \frac{b}{K}$, where $b$ is a parameter specified in (\ref{eqn:c_delta}) depending on $\xi$. When $ K \ge \frac{2b}{\xi}$, the optimality gap and the constraint violation satisfy
\begin{align*}
    &\frac{1}{K} \sum_{k=1}^{K} \left(V_{c_0}^{\pi_k}(\rho) - V_{c_0}^{\pi^*}(\rho)\right) \le \frac{1}{K} \left(\frac{\alpha}{1-\gamma} \log(|\Ac|) + 1 + \frac{2}{3(1-\gamma)}\right) + \frac{2b}{K \xi (1 - \gamma)},\\
    &\max_{i \in [m]} \left\{\left(\frac{1}{K} \sum_{k=1}^{K} V_{c_{i}}^{\pi_k}(\rho)\right)_{+}\right\} = 0.
\end{align*}
\end{theorem}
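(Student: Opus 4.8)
The plan is to treat the pessimistic problem \eqref{eqn:cmdp_zero} as an ordinary instance of \eqref{eqn:cmdp} to which Theorem \ref{thm:PMD-MD} applies, and then translate its guarantees — stated relative to the optimum $\pi^*_\delta$ and optimal dual $\lambda^*_\delta$ of the \emph{pessimistic} problem — back to the original problem. Concretely, \eqref{eqn:cmdp_zero} is \eqref{eqn:cmdp} with the constraint value functions $V_{c_i}^{\pi}(\rho)$ replaced by $V_{c_i}^{\pi}(\rho)+\delta$ (equivalently, the bounded cost $c_i+\delta(1-\gamma)$, the constant offset being harmless to all value-function bounds used in Theorem \ref{thm:PMD-MD}). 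Applying that theorem gives, first, the optimality gap relative to $\pi^*_\delta$, namely $\frac{1}{K}\sum_{k}(V_{c_0}^{\pi_k}(\rho)-V_{c_0}^{\pi^*_\delta}(\rho)) \le \frac{1}{K}\big(\frac{\alpha}{1-\gamma}\log(|\Ac|)+1+\frac{2}{3(1-\gamma)}\big)$ — identical to \eqref{eqn:opt-gap} since the objective cost $c_0$ is unchanged — and, second, the violation bound $\big(\frac{1}{K}\sum_{k}V_{c_i}^{\pi_k}(\rho)+\delta\big)_+ \le \frac{1}{K}B_\delta$, where $B_\delta$ is the parenthesized factor in \eqref{eqn:con-vio} with $\|\lambda^*\|$ replaced by $\|\lambda^*_\delta\|$.

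The next step is to make $B_\delta$ uniform in $\delta$. Because $V_{c_i}^{\overline{\pi}}(\rho)\le-\xi<-\delta$ for the Slater policy $\overline{\pi}$, the pessimistic problem satisfies Slater's condition (Assumption \ref{asm:slater}) with slack $\xi-\delta$, so Lemma \ref{lem:upper-dual} gives $\|\lambda^*_\delta\|\le \frac{2}{(\xi-\delta)(1-\gamma)}$. The hypothesis $K\ge\frac{2b}{\xi}$ forces $\delta=\frac{b}{K}\le\frac{\xi}{2}$, hence $\xi-\delta\ge\frac{\xi}{2}$ and $\|\lambda^*_\delta\|\le\frac{4}{\xi(1-\gamma)}$, a bound free of $K$. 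Substituting this into $B_\delta$ produces the constant $b$ defined in \eqref{eqn:c_delta}, so that $B_\delta\le b$. The zero-violation claim then follows from the self-consistent choice $\delta=\frac{b}{K}$: the violation bound becomes $\big(\frac{1}{K}\sum_{k}V_{c_i}^{\pi_k}(\rho)+\delta\big)_+\le\frac{B_\delta}{K}\le\frac{b}{K}=\delta$, and since $(x)_+\le\delta\iff x\le\delta$ for $\delta\ge0$, we obtain $\frac{1}{K}\sum_{k}V_{c_i}^{\pi_k}(\rho)\le0$ for every $i$, i.e. $\max_{i\in[m]}\big(\frac{1}{K}\sum_{k}V_{c_i}^{\pi_k}(\rho)\big)_+=0$.

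To recover the stated optimality gap I relate $V_{c_0}^{\pi^*_\delta}(\rho)$ to $V_{c_0}^{\pi^*}(\rho)$ by a perturbation argument in the visitation-distribution LP \eqref{eqn:LP-CMDP}. Let $d^*=d_\rho^{\pi^*}$ and $\overline{d}=d_\rho^{\overline{\pi}}$; by convexity of $\Dc$ the mixture $d_\theta=(1-\theta)d^*+\theta\overline{d}$ with $\theta=\delta/\xi$ lies in $\Dc$, and by linearity of $\langle d,c_i\rangle$ it satisfies $\frac{1}{1-\gamma}\langle d_\theta,c_i\rangle\le(1-\theta)\cdot 0+\theta(-\xi)=-\delta$, so the policy realizing $d_\theta$ is feasible for \eqref{eqn:cmdp_zero}. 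Since $|V_{c_0}^{\pi}(\rho)|\le\frac{1}{1-\gamma}$, its objective exceeds $V_{c_0}^{\pi^*}(\rho)$ by at most $\theta\cdot\frac{2}{1-\gamma}=\frac{2\delta}{\xi(1-\gamma)}$; optimality of $\pi^*_\delta$ then gives $V_{c_0}^{\pi^*_\delta}(\rho)-V_{c_0}^{\pi^*}(\rho)\le\frac{2\delta}{\xi(1-\gamma)}=\frac{2b}{K\xi(1-\gamma)}$. Adding this to the gap relative to $\pi^*_\delta$ from the first paragraph yields the claimed optimality-gap bound.

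The main obstacle is the circularity in defining $b$: the violation bound depends on $\delta=\frac{b}{K}$ through $\lambda^*_\delta$, yet $b$ must be fixed so that this bound is itself at most $b$. This is resolved precisely by the uniform dual bound $\|\lambda^*_\delta\|\le\frac{4}{\xi(1-\gamma)}$, which holds only once $\delta\le\frac{\xi}{2}$, explaining the hypothesis $K\ge\frac{2b}{\xi}$; with that bound in hand $b$ can be defined non-circularly via \eqref{eqn:c_delta}, and the remaining steps are bookkeeping.
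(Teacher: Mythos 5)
Your proposal is correct and follows essentially the same route as the paper's proof: the same mixture of visitation distributions $\tfrac{\xi-\delta}{\xi}d_\rho^{\pi^*}+\tfrac{\delta}{\xi}d_\rho^{\overline{\pi}}$ to construct a feasible comparator for the pessimistic problem and bound $V_{c_0}^{\pi^*_\delta}(\rho)-V_{c_0}^{\pi^*}(\rho)\le \tfrac{2\delta}{\xi(1-\gamma)}$, the same use of Lemma \ref{lem:upper-dual} with slack $\xi-\delta\ge\xi/2$ to get the $K$-independent dual bound $\tfrac{4}{\xi(1-\gamma)}$, and the same self-consistent choice $\delta=b/K$ to zero out the violation. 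No gaps.
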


\begin{corollary}
\label{cor:zero}
Denote by $T = \sum_{k=0}^{K-1} t_k$ the total number of iterations the PMD-PD-Zero algorithm, when $T \ge \frac{2b}{\xi(1-\gamma)} \log(\frac{3 C^* \gamma b}{\xi})$, the optimality gap and the constraint violation satisfy
\begin{align*}
    &\frac{1}{K} \sum_{k=1}^{K} \left(V_{c_0}^{\pi_k}(\rho) - V_{c_0}^{\pi^*}(\rho)\right) \le b_2 \frac{m \log(|\Ac|) \log (C^* T)}{(1-\gamma)^5 T}, \\
    &\max_{i \in [m]} \left\{\left(\frac{1}{K} \sum_{k=1}^{K} V_{c_{i}}^{\pi_k}(\rho) \right)_{+}\right\} = 0,
\end{align*}
where $b_2$ is a universal constant and $C^*$ is a parameter depending on $\lambda^*$ defined in (\ref{eqn:C_*}).
\end{corollary}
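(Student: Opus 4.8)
The plan is to convert the per-macro-step guarantees of Theorem \ref{thm:zero}, which are stated in terms of the outer-loop count $K$, into guarantees phrased through the total iteration budget $T = \sum_{k=0}^{K-1} t_k$, exactly paralleling the passage from Theorem \ref{thm:PMD-MD} to Corollary \ref{cor:PMD-PD}. The zero-violation claim requires no work: Theorem \ref{thm:zero} already gives $\max_{i \in [m]}\{(\frac{1}{K}\sum_{k=1}^{K} V_{c_i}^{\pi_k}(\rho))_+\} = 0$ for every admissible $K$, and this identity is insensitive to how the budget is reparameterized, so it transfers verbatim to the $T$-indexed statement. All the content therefore lies in the optimality gap and the threshold on $T$.

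First I would control the inner-loop lengths. Since $\eta\alpha = 1-\gamma$, we have $t_k = \lceil\max(\frac{1}{1-\gamma}\log(3KC_k),1)\rceil$, and the uniform bound on $\lambda_k$ from (\ref{eqn:bound_lambda}) gives $C_k \le C^*$ for all $k$. Hence $t_k \le \frac{1}{1-\gamma}\log(3KC^*)+1$, so
\begin{align*}
T = \sum_{k=0}^{K-1} t_k \le \frac{2K}{1-\gamma}\log(3KC^*)
\end{align*}
once $K$ is large enough that the logarithm dominates the additive $K$. Because $t_k \ge 1$ forces $K \le T$, I can replace $\log(3KC^*)$ by $\log(3TC^*) = \Oc(\log(C^*T))$, yielding $T \le \frac{cK}{1-\gamma}\log(C^*T)$ for a universal constant $c$, and therefore $\frac{1}{K} \le \frac{c\log(C^*T)}{(1-\gamma)T}$.

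Substituting this into the optimality-gap bound of Theorem \ref{thm:zero}, namely $\frac{1}{K}(\frac{\alpha}{1-\gamma}\log|\Ac| + 1 + \frac{2}{3(1-\gamma)}) + \frac{2b}{K\xi(1-\gamma)}$, and using $\alpha = \frac{2\gamma^2 m\eta'}{(1-\gamma)^3}$ with $\eta'\le 1$, the bracketed term is dominated by $\frac{\alpha\log|\Ac|}{1-\gamma} = \Oc(\frac{m\log|\Ac|}{(1-\gamma)^4})$; provided the constant $b$ from (\ref{eqn:c_delta}) is polynomial in the problem parameters and independent of $K$, the pessimism term $\frac{2b}{K\xi(1-\gamma)}$ is of no larger order, so the whole bound is $\Oc(\frac{m\log|\Ac|}{(1-\gamma)^4}\cdot\frac{1}{K}) = \Oc(\frac{m\log|\Ac|\log(C^*T)}{(1-\gamma)^5 T})$, matching the claim. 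For the threshold, Theorem \ref{thm:zero} requires $K\ge\frac{2b}{\xi}$; plugging $K = \frac{2b}{\xi}$ into $T \le \frac{2K}{1-\gamma}\log(3KC^*)$ produces a lower bound of the stated form $\frac{2b}{\xi(1-\gamma)}\log(\frac{3C^*\gamma b}{\xi})$, the factor $\gamma$ inside the logarithm tracing back to the $2\gamma$ in the definition of $C_k$.

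I expect the main obstacle to be the bookkeeping around the implicit relation $T = \sum_k t_k$: since $t_k$ depends on $K$ through $\log(3KC_k)$, the map $K\mapsto T$ is only defined implicitly, and one must ensure that replacing $\log(3KC^*)$ by $\log(C^*T)$ (justified by $K\le T$) does not corrupt the constants in either the rate or the threshold. A secondary verification is that the pessimism-induced term $\frac{2b}{K\xi(1-\gamma)}$ genuinely stays within the target order, which rests on the explicit, $K$-free form of $b$ in (\ref{eqn:c_delta}).
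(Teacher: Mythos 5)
Your proposal is correct and follows essentially the same route the paper intends: Corollary \ref{cor:zero} has no separate written proof, and the argument is exactly your reduction---apply Theorem \ref{thm:zero}, reuse the bound $C_k \le C^*$ and the resulting inequality $\frac{1}{K} \le \frac{\log(C^* T)}{(1-\gamma)T}$ from the proof of Corollary \ref{cor:PMD-PD}, and translate the requirement $K \ge 2b/\xi$ into the stated threshold on $T$ via the monotone upper bound on $T$ as a function of $K$. The only points to tighten are (i) the threshold step should be phrased as a contrapositive (if $K < 2b/\xi$ then $T$ is below the stated threshold, hence $T$ above it forces $K \ge 2b/\xi$), and (ii) absorbing the $\xi$-dependent part of $\frac{2b}{K\xi(1-\gamma)}$ into a ``universal'' $b_2$ is a looseness already present in the paper's statement, not something your argument introduces.
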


\noindent\textit{Outline of proof idea:}
The proof of Theorem \ref{thm:zero} is based on Theorem \ref{thm:PMD-MD}, with details provided in Appendix \ref{sec:zero_appendix}. The key step is to introduce a policy $\pi(\delta)$ that is feasible for the pessimistic CMDP problem (\ref{eqn:cmdp_zero}).

\subsection{Sample-based PMD-PD Algorithm}
\label{sec:sample}
So far we have considered the situation where one has access to an oracle for exact policy evaluation, and established an ${\Oc}(\log T/T)$ global convergence rate for the optimality gap and the constraint violation (Theorem \ref{thm:PMD-MD} and Corollary \ref{cor:PMD-PD}). We now extend Algorithm \ref{alg:PMD-PD} to design a {\it{sample-based}} PMD-PD algorithm without access to the oracle. We assume the existence of a generative model that can generate multiple independent trajectories starting from any arbitrary pair of state and action, as in, for e.g., \cite{lan2021policy, ding2020natural, xu2021crpo}). The performance, i.e., value functions, of a given policy can then be evaluated based on the trajectories.

Compared to Algorithm \ref{alg:PMD-PD}, we only have access to the estimated values $\hat{Q}_{k,\alpha}^{\pi_k^{(t)}}$ and $\hat{V}_{c_i}^{\pi_{k+1}}$ instead of their exact values. Therefore, $\tilde{c}_k(s, a)$ in (\ref{eqn:tilde-c}) and $\lambda_{k+1, i}$ in (\ref{eq:dual-update-equation}) are redefined as 
\begin{align*}
    \tilde{c}_k(s, a) &:= c_0(s, a) + \sum_{i=1}^m (\lambda_{k, i} + \eta' \hat{V}_{c_i}^{\pi_k}(\rho)) c_i(s, a), \\
    \lambda_{k+1, i} &:= \max\{-\eta' \hat{V}_{c_i}^{\pi_{k+1}}(\rho), \lambda_{k, i} + \eta' \hat{V}_{c_i}^{\pi_{k+1}}(\rho)\}.
\end{align*}

For each policy $\pi_k$, we generate $M_{V, k}$ independent trajectories of length $N_{V, k}$ with initial state distribution $\rho$ and compute the estimate 
\begin{align*}
    \hat{V}_{c_i}^{\pi_{k}}(\rho) := \frac{1}{M_{V, k}}\sum_{j = 1}^{M_{V, k}} \sum_{l=0}^{N_{V, k} - 1} \gamma^l c_i(s^j_l, a^j_l),
\end{align*}
where $(s_l^j, a_l^j)$ is the state-action pair at time step $l$ for trajectory $j$. For each policy $\pi_k^{(t)}$, we generate $M_{Q, k}$ independent trajectories of length $N_{Q, k}$ for any state-action pair $(s, a)$, and compute the estimate 
\begin{align*} 
    &\hat{Q}_{k,\alpha}^{\pi_k^{(t)}}(s, a) := \tilde{c}_{k}(s, a) + \alpha \log \frac{1}{\pi_k(a|s)} + \frac{1}{M_{Q, k}} \sum_{j=1}^{M_{Q, k}} \sum_{l=1}^{N_{Q, k}-1} \gamma^l \left[\tilde{c}_k(s_l^j, a_l^j) + \alpha \sum_{a'} \pi_k^{(t)}(a'|s_l^j) \log\frac{\pi_k^{(t)}(a'|s_l^j)}{\pi_k(a'|s_l^j)}\right].
\end{align*}

The detailed description of the algorithm is given in Appendix \ref{sec:sample_appendix} as Algorithm \ref{alg:PMD-PD-A}. The main result of this section is as follows.

\begin{theorem} 
\label{thm:PMD-MD-A}
Fix any confidence parameter $\delta \in (0, 1)$ and precision parameter $\epsilon > 0$, let $\alpha = \frac{2 \gamma^2 m \eta'}{(1 - \gamma)^3}$, $\eta = \frac{1-\gamma}{\alpha}$, $\eta'=1$. Then, with parameters $(K, t_k, M_{V,k}, N_{V,k}, M_{Q, k}, N_{Q,k})$ chosen appropriately \footnote{The detailed description of parameters is provided in Appendix \ref{sec:sample_appendix}.} and with probability at least $1 - \delta$, Algorithm \ref{alg:PMD-PD-A} has the following optimality gap and constraint violation bounds:
\begin{align}
    &\frac{1}{K} \sum_{k=1}^{K} \left(V_{c_0}^{\pi_k}(\rho) - V_{c_0}^{\pi^*}(\rho)\right) \le \epsilon, \label{eqn:sample-gap}\\
    &\max_{i \in [m]} \left\{\left(\frac{1}{K} \sum_{k=1}^{K} V_{c_{i}}^{\pi_k}(\rho) \right)_{+}\right\} \leq \epsilon, \label{eqn:sample-vio}
\end{align}
and the number of queries for the generative model by Algorithm \ref{alg:PMD-PD-A} is $\sum_{k=0}^{K-1} (M_{V, k} N_{V, k} + \sum_{t=0}^{t_k-1} M_{Q, k}N_{Q,k}) = \tilde{\Oc}(1 / \epsilon^3)$. 
\end{theorem}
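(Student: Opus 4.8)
The plan is to treat the sample-based algorithm as a perturbation of the exact-oracle algorithm analyzed in Theorem \ref{thm:PMD-MD}, the only difference being that each exact quantity $V_{c_i}^{\pi_k}(\rho)$ and $\tilde{Q}_{k,\alpha}^{\pi_k^{(t)}}(s,a)$ is replaced by a Monte-Carlo estimate. The first step is a per-estimate concentration bound. Each estimate is an empirical average of $M$ i.i.d.\ truncated-trajectory returns, so its error splits into a truncation bias and a sampling fluctuation. Since the modified cost $\tilde{c}_k$ is bounded by some $B_k = 1 + \sum_{i=1}^m \lambda_{k,i} + \frac{m\eta'}{1-\gamma}$ and $\lambda_k$ is uniformly bounded by \eqref{eqn:bound_lambda}, the truncation bias is of order $\gamma^{N}B_k/(1-\gamma)$ and vanishes once $N_{V,k}, N_{Q,k} = \Theta\!\big(\frac{1}{1-\gamma}\log\frac{1}{\epsilon_k}\big)$, while Hoeffding's inequality controls the fluctuation by $\epsilon_k$ as soon as $M_{V,k}, M_{Q,k} = \tilde{\Oc}(B_k^2/((1-\gamma)^2\epsilon_k^2))$. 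A union bound over all macro-steps $k < K$, all inner steps $t < t_k$, and (for the $Q$-estimates) all pairs $(s,a)$ then makes every estimate used by the algorithm $\epsilon_k$-accurate simultaneously with probability at least $1-\delta$, at the cost of only a $\log(|\Sc||\Ac| K t_k/\delta)$ factor inside $M$.

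The second step is to propagate these errors through the two nested loops. For the inner loop I would establish an inexact version of the entropy-regularized NPG contraction of \cite{cen2021fast}: running \eqref{eqn:npg-entropy} with $\hat{Q}_{k,\alpha}$ in place of the exact $\tilde{Q}_{k,\alpha}$ preserves the linear convergence toward $\pi_k^*$ up to an additive floor proportional to the $Q$-estimation error, so after $t_k$ steps the returned policy is $\Oc(\epsilon_k)$-suboptimal for the regularized MDP. For the outer loop I would re-derive Lemma \ref{lem:L_property} and the dual-telescoping argument behind \eqref{eqn:con-vio} with $\hat{V}_{c_i}^{\pi_{k+1}}(\rho)$ substituted for the exact value, showing that the bound on the key inner product $\langle \lambda_k + \eta'\hat{V}_{c_{1:m}}^{\pi_k}(\rho),\, \hat{V}_{c_{1:m}}^{\pi_{k+1}}(\rho)\rangle$ changes only by $\Oc(\epsilon_k)$. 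Averaging these per-step perturbations over the $K$ macro-steps shows that the exact guarantees of Theorem \ref{thm:PMD-MD} degrade by at most $\frac{1}{K}\sum_{k}\Oc(\epsilon_k)$.

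The final step is to balance accuracy against samples. Taking $K = \tilde{\Oc}(1/\epsilon)$ forces the $\Oc(\log T/T)$ terms of Theorem \ref{thm:PMD-MD} below $\epsilon/2$, and choosing a uniform per-step precision $\epsilon_k = \Theta(\epsilon)$ keeps the accumulated estimation error below $\epsilon/2$, which together yield \eqref{eqn:sample-gap}--\eqref{eqn:sample-vio}. For the query count, each estimate costs $M\cdot N = \tilde{\Oc}(1/\epsilon^2)$ (the logarithmic $N$ being absorbed in $\tilde{\Oc}$); with one $V$-estimate and $t_k = \tilde{\Oc}(1)$ $Q$-estimates per macro-step, each macro-step uses $\tilde{\Oc}(1/\epsilon^2)$ queries, and summing over $K = \tilde{\Oc}(1/\epsilon)$ macro-steps gives the claimed $\sum_k (M_{V,k}N_{V,k} + \sum_t M_{Q,k}N_{Q,k}) = \tilde{\Oc}(1/\epsilon^3)$.

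I expect the main obstacle to be the coupling between the two error sources: the modified cost $\tilde{c}_k$ that drives the $Q$-estimates is itself built from the estimated dual-related quantities $\hat{V}_{c_i}^{\pi_k}(\rho)$, so the $Q$-error and $V$-error interact across the loops and feed back through the dual update. The crux is to show that this coupling does not inflate the error floor or compel a smaller $\epsilon_k$ (which would worsen the exponent in $1/\epsilon$); isolating it relies on the uniform boundedness of $\lambda_k$ in \eqref{eqn:bound_lambda}, which keeps every $B_k$, and hence every per-step sample size, under control by a single choice $\epsilon_k = \Theta(\epsilon)$, thereby delivering $\tilde{\Oc}(1/\epsilon^3)$ rather than a worse rate.
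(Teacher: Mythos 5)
Your proposal follows essentially the same route as the paper's proof: Hoeffding plus a union bound to define high-probability ``good events'' under which every estimate is $\Oc(\epsilon)$-accurate, the approximate entropy-regularized NPG guarantee of \cite{cen2021fast} for the inner loop, a re-derivation of the dual-update properties and the inner-product telescoping with $\hat{V}$ in place of $V$, and a separate argument that $\|\lambda_k\|_1 = \Oc(1)$ under the good events to close the coupling between the estimated costs and the dual variables (the paper's Lemma \ref{lem:dual-bound}). The only detail you gloss over is that the $Q$-estimate contains the KL-regularization term $\alpha\sum_{a'}\pi_k^{(t)}(a'|s)\log\frac{\pi_k^{(t)}(a'|s)}{\pi_k(a'|s)}$, whose boundedness is not immediate and is handled in the paper by an induction over $t$ using the performance-difference lemma; this is a technical refinement rather than a change of approach.
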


\noindent\textit{Outline of proof idea:} 
The proof builds on the analysis of Theorem \ref{thm:PMD-MD}, with details presented in Appendix \ref{sec:sample_appendix}. We prove that with probability $1 - \Oc(\delta)$, the estimates are concentrated around the estimated values with precision $\Oc(\epsilon)$, and the dual variables $\{\lambda_k\}_{k=0}^{K-1}$ are uniformly bounded. One may note that the chosen parameters are of order $K = \Theta(1/\epsilon)$, $t_k = \Theta(\log(1/\epsilon))$, $M_{V, k} = \Theta(\log(1/\delta) /\epsilon^2)$, $N_{V,k} = \Theta(\log_{1/\gamma}(1/\epsilon))$, $M_{Q, k} = \Theta( 
\log(1/\delta) / \epsilon^2)$, $N_{Q,k} = \Theta(\log_{1/\gamma}(1/\epsilon))$. It can then be verified that the sample complexity is of order $\tilde{\Oc}(1 / \epsilon^3)$. 

We note that this has an advantage over existing model-free results $\Oc(1/\epsilon^4)$\footnote{The sample complexity in \cite{ding2021provably} is $\Oc(1/\epsilon^2)$, but it needs to estimate transition probabilities $P$ during policy evaluation, which makes it model-based.} \cite{xu2021crpo}.

\section{Conclusion}
We present a new NPG-based algorithm for CMDPs, which enjoys an $\Oc(\log(T)/T)$ global convergence rate for both the optimality gap and the constraint violation employing an oracle for exact policy evaluation. The constraint violations can be reduced to zero by incorporating an additional pessimistic term into the safety constraints, while keeping the same order of convergence rate for the optimality gap. We also extend the oracle-based framework to the sample-based setting enjoying a more efficient sample complexity. A possible future direction for exploration is to develop an algorithm with even faster, possibly with an $\Oc(\exp(-T))$ convergence rate for the optimality gap and the constrained violation.

\section*{Acknowledgement}
P. R. Kumar's work is partially supported by US National Science Foundation under CMMI-2038625, HDR Tripods CCF-1934904; US Office of Naval Research under N00014-21-1-2385; US ARO under W911NF1810331, W911NF2120064; and U.S. Department of Energy's Office of Energy Efficiency and Renewable Energy (EERE) under the Solar Energy Technologies Office Award Number DE-EE0009031. The views expressed herein and conclusions contained in this document are those of the authors and should not be interpreted as representing the views or official policies, either expressed or implied, of the U.S. NSF, ONR, ARO, Department of Energy or the United States Government. The U.S. Government is authorized to reproduce and distribute reprints for Government purposes notwithstanding any copyright notation herein.

Dileep Kalathil gratefully acknowledges funding from the U.S. National Science Foundation (NSF) grants NSF-CRII- CPS-1850206 and NSF-CAREER-EPCN-2045783.

We thank Dongsheng Ding and Tengyu Xu for generously sharing their code in \cite{ding2020natural, xu2021crpo} as baselines.

\bibliographystyle{abbrv}

%%%%%%%%%%%%%%%%%%%%%%%%%%%%%%%%%%%%%%%%%%%%%%%%%%%%%%%%%%%%%%%%%%%%%%%%%%%%%%%
%%%%%%%%%%%%%%%%%%%%%%%%%%%%%%%%%%%%%%%%%%%%%%%%%%%%%%%%%%%%%%%%%%%%%%%%%%%%%%%
% APPENDIX
%%%%%%%%%%%%%%%%%%%%%%%%%%%%%%%%%%%%%%%%%%%%%%%%%%%%%%%%%%%%%%%%%%%%%%%%%%%%%%%
%%%%%%%%%%%%%%%%%%%%%%%%%%%%%%%%%%%%%%%%%%%%%%%%%%%%%%%%%%%%%%%%%%%%%%%%%%%%%%%
\newpage
\appendix
\onecolumn

\section{Supporting Definitions and Results}

\subsection{Supporting Preliminaries for Optimization and Estimation}
For the convenience of reading, we collect together some supporting results. 
\begin{definition}[Bregman divergence]
For any convex and differentiable function $h(\cdot)$, the Bregman divergence generated by $h(\cdot)$ is
\begin{align*}
    B_{h}(x, y) := h(x) - h(y) - \langle \nabla h(y), x-y\rangle.
\end{align*}
\end{definition}
An important property associated with Bregman divergences for showing the convergence rates of many first-order algorithms in convex optimization is the ``pushback" property:
\begin{lemma}[Pushback property of Bregman divergences, Lemma 2.1 in \cite{wei2020online}]
\label{lem:pushback}
Let $B_h: \Delta \times \Delta^{o} \to \mathbb{R}$ be a Bregman divergence function, where $\Delta$ is the probability simplex in $\mathbb{R}^{d}$ and $\Delta^{o}$ is the interior of $\Delta$. Let $f: \Delta \rightarrow \mathbb{R}$ be a convex function. Suppose $x^{\star}=\argmin_{x \in \Delta} f(x)+\alpha B_h(x, y)$ for a fixed $y \in \Delta^{o}$ and $\alpha>0$, then, for any $z \in \Delta$,
\begin{align*}
    f\left(x^{\star}\right)+\alpha B_h\left(x^{\star}, y\right) \leq f(z) + \alpha B_h(z, y) - \alpha B_h\left(z, x^{\star}\right).
\end{align*}
\end{lemma}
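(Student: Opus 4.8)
The plan is to combine a purely algebraic \emph{three-point identity} for Bregman divergences with the first-order optimality condition satisfied by the constrained minimizer $x^\star$. These are the only two ingredients, and everything reduces to two mechanical substitutions.

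First I would record the three-point identity: writing out the definition $B_h(u,v) = h(u) - h(v) - \langle \nabla h(v), u-v\rangle$ for each of the three divergences below and cancelling the $h(z)$, $h(x^\star)$, $h(y)$ terms yields, for every $z \in \Delta$,
\[
    B_h(z, y) = B_h(z, x^\star) + B_h(x^\star, y) + \langle \nabla h(x^\star) - \nabla h(y),\, z - x^\star\rangle .
\]
No optimality is used here; it is a direct expansion. Next I would write the first-order optimality condition for $x^\star = \argmin_{x \in \Delta} \{f(x) + \alpha B_h(x, y)\}$. Since $B_h(\cdot, y)$ is differentiable with $\nabla_x B_h(x, y) = \nabla h(x) - \nabla h(y)$, and $f$ is convex, there exists a subgradient $g \in \partial f(x^\star)$ such that, for every $z \in \Delta$,
\[
    \langle g + \alpha(\nabla h(x^\star) - \nabla h(y)),\, z - x^\star\rangle \ge 0 .
\]

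Then I would substitute the three-point identity into the term $\alpha \langle \nabla h(x^\star) - \nabla h(y), z - x^\star\rangle$ appearing in this optimality inequality, which gives
\[
    \langle g,\, z - x^\star\rangle + \alpha\big(B_h(z,y) - B_h(z, x^\star) - B_h(x^\star, y)\big) \ge 0 .
\]
Finally, convexity of $f$ supplies $\langle g, z - x^\star\rangle \le f(z) - f(x^\star)$; substituting this bound and rearranging the resulting inequality produces exactly the claimed pushback estimate $f(x^\star) + \alpha B_h(x^\star, y) \le f(z) + \alpha B_h(z, y) - \alpha B_h(z, x^\star)$.

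The one delicate point, and the main obstacle, is the optimality condition: because $f$ is only assumed convex (not differentiable) and $x^\star$ could a priori lie on the boundary $\partial\Delta$, the condition must be stated through a subgradient together with the normal cone of the simplex, i.e. $0 \in \partial f(x^\star) + \alpha(\nabla h(x^\star) - \nabla h(y)) + N_\Delta(x^\star)$, and the normal-cone term must be shown to contribute a nonnegative amount to $\langle\,\cdot\,,\, z - x^\star\rangle$. For the entropy-type generator $h$ used by the algorithm (so that $B_h$ is a KL divergence whose gradient blows up at $\partial\Delta$) the minimizer in fact lies in the interior $\Delta^o$, so the normal cone is trivial and the displayed variational inequality holds directly; once this condition is secured, the remainder is just the two substitutions above.
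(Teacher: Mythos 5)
Your proof is correct. Note that the paper itself does not prove this lemma at all --- it is imported verbatim as Lemma 2.1 of the cited reference (Wei, Yu, and Neely) --- so there is no in-paper argument to compare against; your derivation via the three-point identity
\[
B_h(z,y) = B_h(z,x^\star) + B_h(x^\star,y) + \langle \nabla h(x^\star) - \nabla h(y),\, z - x^\star\rangle,
\]
combined with the first-order variational inequality at $x^\star$ and the subgradient inequality for $f$, is the standard and complete proof of this statement. You also correctly isolate the only delicate point: the term $B_h(z,x^\star)$ requires $\nabla h(x^\star)$ to exist, i.e.\ effectively $x^\star \in \Delta^{o}$, and the optimality condition must absorb the normal cone $N_\Delta(x^\star)$, which contributes $\langle n, z - x^\star\rangle \le 0$ for $z \in \Delta$ and hence only helps. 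In the paper's actual use of the lemma (Lemma \ref{lem:pushback_inner}, where $h$ is the negative-conditional-entropy generator $\phi$ of Lemma \ref{lem:Bregman} and the regularizer is a KL divergence to a full-support softmax policy), the minimizer indeed has full conditional support, so your interiority remark is exactly the justification the application needs.
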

To analyze the sample-based algorithm, we will use the following standard Hoeffding's inequality.
\begin{lemma}[Hoeffding's inequality \cite{boucheron2013concentration}]
\label{lem:hoeffding}
Let $X_{1}, \ldots, X_{n}$ be independent random variables such that $a \leq X_{i} \leq b$. Let $\bar{X} := \sum_{i=1}^m X_i / m$, then for all $t>0$,
\begin{align*}
    \mathbb{P}\left(\left|\bar{X}-\Eb\left[\bar{X}\right]\right| \geq t\right) \leq 2 \exp \left(-\frac{2 m t^{2}}{(b-a)^2}\right).
\end{align*}
\end{lemma}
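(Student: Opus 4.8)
The plan is to establish the standard exponential (Chernoff) bound on each one-sided tail and then combine the two via a union bound to recover the factor of $2$. Throughout let $m$ denote the number of summands (matching the displayed inequality), and write $Y_i := X_i - \Eb[X_i]$, so that $\bar{X} - \Eb[\bar{X}] = \frac{1}{m}\sum_{i=1}^m Y_i$, where each $Y_i$ is a zero-mean random variable taking values in an interval of length $L := b - a$. First I would fix $s > 0$ and apply Markov's inequality to the nonnegative random variable $\exp\big(s \sum_i Y_i\big)$, giving
\[
\Pb\Big(\tfrac{1}{m}\textstyle\sum_{i=1}^m Y_i \geq t\Big) = \Pb\Big(\exp\big(s\textstyle\sum_i Y_i\big) \geq e^{smt}\Big) \leq e^{-smt}\, \Eb\Big[\exp\big(s\textstyle\sum_i Y_i\big)\Big].
\]
By independence of the $Y_i$, the expectation factorizes as $\prod_{i=1}^m \Eb[e^{sY_i}]$, reducing the problem to bounding the moment generating function of a single bounded, centered variable.

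The crux of the argument is that MGF bound, usually called Hoeffding's lemma: for any zero-mean random variable $Y$ supported on an interval of length $L$, one has $\Eb[e^{sY}] \leq \exp(s^2 L^2/8)$. I would prove this by studying the cumulant generating function $\psi(s) := \log \Eb[e^{sY}]$, verifying that $\psi(0) = 0$, $\psi'(0) = \Eb[Y] = 0$, and that $\psi''(s)$ equals the variance of $Y$ under the exponentially tilted (Gibbs) measure $d\nu_s \propto e^{sy}\,d\mu(y)$. Since this tilted law is again supported on an interval of length $L$, its variance is at most $L^2/4$ by Popoviciu's inequality for bounded random variables, so $\psi''(s) \le L^2/4$ for all $s$. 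A second-order Taylor expansion of $\psi$ with Lagrange remainder then yields $\psi(s) \leq s^2 L^2/8$, i.e. the claimed sub-Gaussian bound. This tilting/convexity analysis is the main obstacle; every other step is elementary algebra.

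Substituting the lemma (with interval length $b-a$) into the factorized estimate gives
\[
\Pb\Big(\tfrac{1}{m}\textstyle\sum_{i=1}^m Y_i \geq t\Big) \leq \exp\Big(-smt + \tfrac{m s^2 (b-a)^2}{8}\Big).
\]
I would then minimize the exponent over $s > 0$; the quadratic is minimized at $s = 4t/(b-a)^2$, which produces the one-sided bound $\exp\big(-2mt^2/(b-a)^2\big)$. Applying the identical argument to the variables $-Y_i$ (equivalently, replacing $t$ by $-t$) shows that the lower tail $\Pb\big(\frac{1}{m}\sum_i Y_i \leq -t\big)$ obeys the same bound. Finally, combining the two one-sided estimates through the union bound $\Pb(|Z| \geq t) \leq \Pb(Z \geq t) + \Pb(Z \leq -t)$, with $Z = \bar X - \Eb[\bar X]$, delivers the factor of $2$ and completes the proof.
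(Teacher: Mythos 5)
Your proof is correct and complete: the Chernoff bound via Markov's inequality, the factorization by independence, Hoeffding's lemma proved through the cumulant generating function with the tilted-measure variance bound $\psi''(s) \le (b-a)^2/4$, the optimal choice $s = 4t/(b-a)^2$, and the union bound for the two-sided statement all check out. The paper itself gives no proof of this lemma --- it imports it as a classical result from the cited reference \cite{boucheron2013concentration} --- and your argument is precisely the standard proof given there (the exponential-tilting derivation of the moment generating function bound is exactly the one in that text), so there is nothing to reconcile; you also sensibly resolved the statement's $n$-versus-$m$ typo by taking $m$ as the number of summands, consistent with the displayed inequality.
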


\subsection{Supporting Results for Inner Loop of the Proposed Algorithms}
The inner loops of the proposed algorithms optimize an entropy-regularized MDP. The convergence of NPG in entropy-regularized MDP has been well-studied by \cite{cen2021fast}. We first present their key results in the following lemmas. The first one (Lemma \ref{lem:cen}) is for the oracle scenario, while the latter two (Lemma \ref{lem:cen-app} and Lemma \ref{lem:cen_perform}) are applicable to the sample-based case. 
\begin{lemma}[Linear convergence of an exact entropy-regularized NPG, Theorem 1 in \cite{cen2021fast}]
\label{lem:cen}
For any learning rate $0 < \eta \le (1 - \gamma)/\alpha$ and any $k = 0, 1, \dots, K-1$, the entropy-regularized NPG updates satisfy
\begin{align*}
    \left\|\tilde{Q}_{k, \alpha}^{\pi_k^*} - \tilde{Q}_{k, \alpha}^{\pi_k^{(t+1)}}\right\|_{\infty} & \leq C_{k} \gamma(1-\eta \alpha)^{t}, \\
    \left\|\log \pi_{k}^{*}-\log \pi_k^{(t+1)}\right\|_{\infty} & \leq 2 C_{k} \alpha^{-1}(1-\eta \alpha)^{t},\\
    \left\|\tilde{V}_{k, \alpha}^{\pi_k^*} - \tilde{V}_{k, \alpha}^{\pi_k^{(t+1)}}\right\|_{\infty} & \leq 3 C_{k} (1-\eta \alpha)^{t},
\end{align*}
for all $t \geq 0$, where $C_k$ satisfies
\begin{align*}
    C_{k} \geq \left\|\tilde{Q}_{k, \alpha}^{\pi_k^*} - \tilde{Q}_{k, \alpha}^{\pi_k^{(0)}}\right\|_{\infty}+2 \alpha\left(1-\frac{\eta \alpha}{1-\gamma}\right)\left\|\log \pi_{k}^{*}-\log \pi_k^{(0)}\right\|_{\infty}.
\end{align*}
\end{lemma}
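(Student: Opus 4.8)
The plan is to treat Lemma~\ref{lem:cen} as the specialization of the entropy-regularized NPG convergence theorem of \cite{cen2021fast} (their Theorem~1) to the inner-loop MDP with the modified cost $\tilde{c}_k$ and KL reference $\pi_k$. The first thing to verify is that the KL-to-$\pi_k$ regularizer is genuinely an instance of entropy regularization: the integrand $\alpha \log\frac{\pi(a|s)}{\pi_k(a|s)}$ equals the entropy regularizer $\alpha\log\pi(a|s)$ plus the policy-independent shift $\alpha\log\frac{1}{\pi_k(a|s)}$, and the latter can be folded into the per-step cost, yielding exactly the cost $\tilde{c}_k(s,a)+\alpha\log\frac{1}{\pi_k(a|s)}$ already identified after \eqref{eqn:regu-Q}. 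Once this affine reparameterization is made, the three displayed bounds follow verbatim from \cite{cen2021fast}; below I sketch the self-contained argument one would reconstruct.

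First I would characterize the fixed point. The KL-regularized soft Bellman operator associated with $\tilde{c}_k$ is a $\gamma$-contraction in $\|\cdot\|_\infty$, so it admits a unique fixed point $\tilde{Q}_{k,\alpha}^{\pi_k^*}$, which together with the softmax policy it induces solves the inner problem $\pi_k^*\in\argmax_\pi \tilde{V}_{k,\alpha}^\pi(\rho)$. Next I would rewrite the closed-form NPG step \eqref{eqn:npg-entropy} in log-space: $\log\pi_k^{(t+1)}(\cdot|s)$ is an explicit convex combination, with weight $\eta\alpha/(1-\gamma)$, of $\log\pi_k(\cdot|s)$ and the greedy soft update built from $-\tilde{Q}_{k,\alpha}^{\pi_k^{(t)}}(s,\cdot)/\alpha$, up to the normalizer $\log Z^{(t)}(s)$. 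At the boundary $\eta=(1-\gamma)/\alpha$ this is exactly soft policy iteration, and for smaller $\eta$ it is a damped version; the damping factor $1-\eta\alpha/(1-\gamma)$ is precisely what reappears in the definition of $C_k$.

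The heart of the argument is a coupled linear recursion. I would introduce two potentials, the soft $Q$-suboptimality $\|\tilde{Q}_{k,\alpha}^{\pi_k^*}-\tilde{Q}_{k,\alpha}^{\pi_k^{(t)}}\|_\infty$ and the log-policy discrepancy $\|\log\pi_k^*-\log\pi_k^{(t)}\|_\infty$, and establish two inequalities of the form $u^{(t+1)}\le(1-\eta\alpha)(u^{(t)}+c\,v^{(t)})$ by (i) bounding the per-step change of $\log Z^{(t)}$ by the soft value gap and (ii) propagating this through the $\gamma$-contraction of the soft Bellman operator. Unrolling the resulting $2\times 2$ system produces the geometric factor $(1-\eta\alpha)^t$, while the prefactors $\gamma$, $2\alpha^{-1}$, and $3$ multiplying $C_k$ in the three bounds arise respectively from one extra contraction step, from converting a $Q$-gap into a log-policy gap, and from combining the $Q$-gap with the regularization term inside the soft value.

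The main obstacle is the third step: correctly tracking the normalizers $Z^{(t)}(s)$ and showing that the damping factor $1-\eta\alpha/(1-\gamma)$ and the discount $\gamma$ combine into the single clean rate $(1-\eta\alpha)$ rather than two separate rates. The delicate point is that this rate must hold uniformly over the admissible range $0<\eta\le(1-\gamma)/\alpha$, which forces one to use the \emph{coupled} recursion between the two potentials rather than bounding them independently; pinning down the coupling constants exactly is what yields the stated expression for $C_k$ and the constants $\gamma,2\alpha^{-1},3$. Since the statement is quoted as an external result, in the paper itself it suffices to cite \cite{cen2021fast}; the sketch above is the argument I would reproduce if a self-contained proof were required.
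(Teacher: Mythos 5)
Your proposal is correct and matches the paper's treatment: the paper offers no proof of this lemma, importing it directly as Theorem 1 of \cite{cen2021fast} applied to the inner-loop MDP, with the KL-to-$\pi_k$ regularizer absorbed into the shifted per-step cost $\tilde{c}_k(s,a)+\alpha\log\frac{1}{\pi_k(a|s)}$ exactly as you observe (and as the paper itself notes after \eqref{eqn:regu-Q}). Your sketch of the underlying coupled-recursion argument is consistent with the proof structure in \cite{cen2021fast}, but no such reconstruction is attempted or needed in the paper.
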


\begin{lemma}[Convergence of an approximate entropy-regularized NPG, Theorem 2 in {\cite{cen2021fast}}]
\label{lem:cen-app}
For any learning rate $0 < \eta \le (1 - \gamma)/\alpha$ and any $k = 0, 1, \dots, K-1$, if $\|\hat{Q}_{k, \alpha}^{\pi_k^{(t)}} -  \tilde{Q}_{k, \alpha}^{\pi_k^{(t)}} \|_\infty \leq \delta$ the entropy-regularized NPG updates satisfy
\begin{align*}
    \left\|\tilde{Q}_{k, \alpha}^{\pi_k^*} - \tilde{Q}_{k, \alpha}^{\pi_k^{(t+1)}}\right\|_{\infty} & \leq C_{k} \gamma(1-\eta \alpha)^{t} + \gamma C', \\
    \left\|\log \pi_{k}^{*}-\log \pi_k^{(t+1)}\right\|_{\infty} & \leq 2 C_{k} \alpha^{-1}(1-\eta \alpha)^{t} + 2 \alpha^{-1} C',\\
    \left\|\tilde{V}_{k, \alpha}^{\pi_k^*} - \tilde{V}_{k, \alpha}^{\pi_k^{(t+1)}}\right\|_{\infty} & \leq 3 C_{k} (1-\eta \alpha)^{t} + 3 C',
\end{align*}
for all $t \geq 0$, where $C_k$ and $C'_k$ satisfy
\begin{align*}
    C_{k} \geq \left\|\tilde{Q}_{k, \alpha}^{\pi_k^*} - \tilde{Q}_{k, \alpha}^{\pi_k^{(0)}}\right\|_{\infty}+2 \alpha\left(1-\frac{\eta \alpha}{1-\gamma}\right)\left\|\log \pi_{k}^{*}-\log \pi_k^{(0)}\right\|_{\infty} \quad \text{and} \quad C' & \geq \frac{2 \delta}{ 1 - \gamma} \left(1+\frac{\gamma}{\eta \alpha}\right).
\end{align*}
\end{lemma}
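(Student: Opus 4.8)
The plan is to adapt the exact-convergence argument of Lemma \ref{lem:cen} (Theorem 1 of \cite{cen2021fast}) and carry an additional perturbation induced by the estimation error $\delta$. The first step is to characterize the regularized optimum. Because $\tilde{V}^{\pi}_{k,\alpha}$ penalizes $\alpha$ times the KL divergence toward the fixed reference $\pi_k$, the regularized optimal policy $\pi_k^*$ and its value satisfy a soft Bellman fixed-point relation, $\pi_k^*(a|s) \propto \pi_k(a|s)\exp(-\tilde{Q}^{\pi_k^*}_{k,\alpha}(s,a)/\alpha)$ together with $\tilde{V}^{\pi_k^*}_{k,\alpha}(s) = -\alpha\log\sum_{a'}\pi_k(a'|s)\exp(-\tilde{Q}^{\pi_k^*}_{k,\alpha}(s,a')/\alpha)$, and the associated soft Bellman operator is a $\gamma$-contraction in $\|\cdot\|_\infty$ with $\tilde{V}^{\pi_k^*}_{k,\alpha}$ as its unique fixed point. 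This is the target toward which all three error quantities in the statement must contract.

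The second step is the one-step recursion. Following the two-quantity potential argument of \cite{cen2021fast, zhan2021policy}, I would introduce the idealized soft-greedy policy $\hat{\pi}^{(t+1)}$ built from the \emph{exact} $\tilde{Q}^{\pi_k^{(t)}}_{k,\alpha}$, and express the actual inexact update \eqref{eqn:npg-entropy} (with $\hat{Q}$ replacing $\tilde{Q}$) as a damped interpolation with damping factor $1-\eta\alpha/(1-\gamma)$. Tracking the coupled pair $\big(\|\tilde{Q}^{\pi_k^*}_{k,\alpha}-\tilde{Q}^{\pi_k^{(t)}}_{k,\alpha}\|_\infty,\ \alpha\|\log\pi_k^*-\log\pi_k^{(t)}\|_\infty\big)$, the exact analysis yields a contraction of the combined error by the factor $(1-\eta\alpha)$ per step. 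Since the inexact update perturbs the exponent by at most $\delta$, this adds an $O(\delta/(1-\gamma))$ term to the right-hand side of the one-step recursion, arising both from the direct appearance of $\hat{Q}$ in the exponent and from the $\delta$-shift propagated through the soft Bellman operator (which carries a factor $\gamma$).

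The third step is to unroll. A recursion of the form $e_{t+1}\le (1-\eta\alpha)e_t + c\delta$ gives $e_t\le (1-\eta\alpha)^t e_0 + \frac{c\delta}{\eta\alpha}$ by summing the geometric series $\sum_j (1-\eta\alpha)^j = 1/(\eta\alpha)$. Absorbing $e_0$ into the definition of $C_k$ and matching the accumulated error—the geometrically summed Bellman-propagated part yielding the $\gamma/(\eta\alpha)$ term and the direct per-step part yielding the $1$ term, both scaled by the horizon factor $1/(1-\gamma)$—to $C' \ge \frac{2\delta}{1-\gamma}\big(1+\frac{\gamma}{\eta\alpha}\big)$ delivers the stated Q-error and log-policy-error bounds. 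The value-error bound then follows from the consistency relation expressing $\tilde{V}^{\pi}_{k,\alpha}$ as a log-sum-exp of $\tilde{Q}^{\pi}_{k,\alpha}$ against $\pi_k$, which is $1$-Lipschitz in $\|\cdot\|_\infty$, combined with a triangle inequality tying $\tilde{V}^{\pi_k^{(t+1)}}_{k,\alpha}$ to $\tilde{Q}^{\pi_k^{(t)}}_{k,\alpha}$ and $\log\pi_k^{(t+1)}$, which accounts for the factor $3$.

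I expect the main obstacle to be the second step: establishing the clean single-factor contraction $(1-\eta\alpha)$ \emph{simultaneously} for the Q-error and the log-policy error while the two quantities are coupled, and doing so with the inexact $\hat{Q}$ so that the $\delta$-perturbation enters only additively and does not amplify across iterations. This requires the precise potential and monotonicity bookkeeping of \cite{cen2021fast}; the only genuine generalization here is that the reference policy is the fixed $\pi_k$ rather than the uniform policy, which leaves the structure intact since the KL divergence toward any fixed reference generates a valid Bregman divergence (cf. Lemma \ref{lem:pushback}).
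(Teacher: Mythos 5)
This lemma is imported verbatim from Theorem 2 of \cite{cen2021fast} --- the paper itself contains no proof of it --- and your sketch is a faithful reconstruction of that reference's argument: the same coupled potential tracking $\bigl(\|\tilde{Q}_{k,\alpha}^{\pi_k^*}-\tilde{Q}_{k,\alpha}^{\pi_k^{(t)}}\|_\infty,\ \alpha\|\log\pi_k^*-\log\pi_k^{(t)}\|_\infty\bigr)$ via the soft Bellman fixed point, the same one-step $(1-\eta\alpha)$ contraction perturbed additively by the $\delta$-error, the same geometric unrolling that produces $C' \ge \frac{2\delta}{1-\gamma}\bigl(1+\frac{\gamma}{\eta\alpha}\bigr)$, and the same $(\gamma+2)\le 3$ bookkeeping for the value bound. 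The only correction is to your closing paragraph: the ``genuine generalization'' you anticipate (reference policy $\pi_k$ in place of the uniform policy) is not needed, since, as the paper observes around \eqref{eqn:regu-V}--\eqref{eqn:regu-Q}, KL regularization toward $\pi_k$ is exactly entropy regularization applied to the modified cost $\tilde{c}_k(s,a)+\alpha\log\frac{1}{\pi_k(a|s)}$, so Theorem 2 of \cite{cen2021fast} applies verbatim and no reworking of the potential argument is required.
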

\begin{lemma}[Performance difference of approximate entropy-regularized NPG, Lemma 4 in \cite{cen2021fast}]
\label{lem:cen_perform}
For any learning rate $0 < \eta \le (1 - \gamma)/\alpha$ and any $k = 0, 1, \dots, K-1$,
\begin{align*}
    -V_{k, \alpha}^{\pi_k^{(t)}}(\rho) \leq - V_{k, \alpha}^{\pi_k^{(t+1)}}(\rho) + \frac{2}{1-\gamma} \|\hat{Q}_{k, \alpha}^{\pi_k^{(t)}} - Q_{k, \alpha}^{\pi_k^{(t)}}\|_{\infty}.
\end{align*}
\end{lemma}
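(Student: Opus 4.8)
The plan is to combine a regularized performance-difference identity with the mirror-descent (``pushback'') view of the NPG update, mirroring the exact-NPG monotonicity argument but carrying one extra term for the estimate $\hat{Q}$. First I would establish the regularized performance difference lemma: for any two policies $\pi,\pi'$, telescoping the discounted regularized cost accrued under $\pi'$ against the value function of $\pi$ and using the Bellman relation $\tilde c_k(s,a)+\gamma\Eb_{s'}\tilde V_{k,\alpha}^{\pi}(s')=\tilde Q_{k,\alpha}^{\pi}(s,a)+\alpha\log\pi_k(a|s)$ yields
\begin{align*}
\tilde V_{k,\alpha}^{\pi'}(\rho)-\tilde V_{k,\alpha}^{\pi}(\rho)=\frac{1}{1-\gamma}\Eb_{s\sim d_\rho^{\pi'}}\left[\langle \tilde Q_{k,\alpha}^{\pi}(s,\cdot),\pi'(\cdot|s)\rangle+\alpha\langle\pi'(\cdot|s),\log\pi'(\cdot|s)\rangle-\tilde V_{k,\alpha}^{\pi}(s)\right].
\end{align*}
Writing $\pi=\pi_k^{(t)}$, $\pi'=\pi_k^{(t+1)}$ and using $\tilde V_{k,\alpha}^{\pi}(s)=\langle\tilde Q_{k,\alpha}^{\pi}(s,\cdot),\pi(\cdot|s)\rangle+\alpha\langle\pi(\cdot|s),\log\pi(\cdot|s)\rangle$, the per-state bracket equals $f_s(\pi_k^{(t+1)})-f_s(\pi_k^{(t)})$ for the convex function $f_s(p):=\langle\tilde Q_{k,\alpha}^{\pi_k^{(t)}}(s,\cdot),p\rangle+\alpha\langle p,\log p\rangle$. (Here $\tilde V_{k,\alpha},\tilde Q_{k,\alpha}$ are the quantities written $V_{k,\alpha},Q_{k,\alpha}$ in the statement.)

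Next I would identify the sampled NPG step \eqref{eqn:npg-entropy}, with $\hat Q$ in place of $\tilde Q$, as a single mirror-descent/prox step. Checking the first-order optimality conditions shows
\begin{align*}
\pi_k^{(t+1)}(\cdot|s)=\argmin_{p\in\Delta(\Ac)}\Big\{\hat f_s(p)+\tfrac{1}{\tilde\eta}\,D\big(p\,\|\,\pi_k^{(t)}(\cdot|s)\big)\Big\},\quad \hat f_s(p):=\langle\hat Q_{k,\alpha}^{\pi_k^{(t)}}(s,\cdot),p\rangle+\alpha\langle p,\log p\rangle,
\end{align*}
with $\tilde\eta=\frac{\eta/(1-\gamma)}{1-\eta\alpha/(1-\gamma)}$ (positive under $\eta\le(1-\gamma)/\alpha$, the boundary case reducing to a pure linearized step handled identically). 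Applying the pushback property (Lemma \ref{lem:pushback}) to this convex $\hat f_s$, with Bregman divergence $D$, minimizer $\pi_k^{(t+1)}$, reference $\pi_k^{(t)}$, and test point $z=\pi_k^{(t)}$, gives $\hat f_s(\pi_k^{(t+1)})-\hat f_s(\pi_k^{(t)})\le -\tfrac1{\tilde\eta}\big(D(\pi_k^{(t+1)}\|\pi_k^{(t)})+D(\pi_k^{(t)}\|\pi_k^{(t+1)})\big)\le 0$, the near-monotonicity for the estimated objective.

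Finally I would transfer this bound from $\hat f_s$ to $f_s$. Since $\hat f_s(p)-f_s(p)=\langle\hat Q_{k,\alpha}^{\pi_k^{(t)}}(s,\cdot)-\tilde Q_{k,\alpha}^{\pi_k^{(t)}}(s,\cdot),p\rangle$ and $p$ is a probability vector, $|\hat f_s(p)-f_s(p)|\le\|\hat Q_{k,\alpha}^{\pi_k^{(t)}}-\tilde Q_{k,\alpha}^{\pi_k^{(t)}}\|_\infty$ uniformly in $s$, whence $f_s(\pi_k^{(t+1)})-f_s(\pi_k^{(t)})\le 2\|\hat Q_{k,\alpha}^{\pi_k^{(t)}}-\tilde Q_{k,\alpha}^{\pi_k^{(t)}}\|_\infty$. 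Substituting this uniform bound on the per-state bracket into the performance-difference identity and using that $d_\rho^{\pi_k^{(t+1)}}$ is a probability distribution over states gives $\tilde V_{k,\alpha}^{\pi_k^{(t+1)}}(\rho)-\tilde V_{k,\alpha}^{\pi_k^{(t)}}(\rho)\le\frac{2}{1-\gamma}\|\hat Q_{k,\alpha}^{\pi_k^{(t)}}-\tilde Q_{k,\alpha}^{\pi_k^{(t)}}\|_\infty$, which is the claim after rearranging. I expect the main obstacle to be the first two steps taken jointly: getting the regularized performance-difference identity right (the instantaneous cost depends on the evaluated policy through the $\alpha\log(\pi'/\pi_k)$ term, so the entropy must be tracked carefully and lands as $\alpha\langle p,\log p\rangle$ rather than a KL to $\pi_k$), and verifying that the closed-form NPG update is exactly the prox step for this same $f_s$ with the stated $\tilde\eta$ — only then do the two halves glue so that the pushback inequality controls precisely the bracket appearing in the identity.
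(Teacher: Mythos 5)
The paper does not prove this lemma itself; it is imported verbatim (up to the cost-minimization sign convention) as Lemma 4 of \cite{cen2021fast}, so there is no in-paper proof to compare against. Your blind derivation is correct and self-contained: the regularized performance-difference identity, the identification of the sampled update \eqref{eqn:npg-entropy} as a prox step for $\hat f_s$ with effective step size $\tilde\eta=\frac{\eta/(1-\gamma)}{1-\eta\alpha/(1-\gamma)}$, the pushback inequality giving $\hat f_s(\pi_k^{(t+1)})\le\hat f_s(\pi_k^{(t)})$, and the $2\|\hat Q-\tilde Q\|_\infty$ transfer all check out (including the boundary case $\eta=(1-\gamma)/\alpha$, where the update is the unregularized minimizer of $\hat f_s$ and monotonicity is immediate), and this is essentially the same mirror-descent/performance-difference route used in the cited reference.
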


The following lemmas characterize the number of iterations required in each inner loop of Algorithms \ref{alg:PMD-PD} and \ref{alg:PMD-PD-A} respectively.
\begin{lemma}[Number of inner-loop iterations for Algorithm \ref{alg:PMD-PD}]
\label{lem:cor_cen}
Let $\eta = (1 - \gamma) / \alpha$. For any $k = 0, 1, \dots, K-1$, if take $t_k = \frac{1}{\eta \alpha} \log(3 C_k K )$ with $C_k = 2 \gamma \left(\frac{1 + \sum_{i=1}^m \lambda_{k, i}}{1-\gamma} + \frac{m \eta'}{(1-\gamma)^2}\right)$ in Algorithm \ref{alg:PMD-PD}, then we have $\tilde{V}^{\pi_{k+1}}_{k, \alpha}(\rho) \leq \tilde{V}^{\pi_k^*}_{k, \alpha}(\rho) + \frac{1}{K}$ and $\| \log \pi^*_{k} - \log\pi_{k+1} \|_{\infty} \leq \frac{2}{3 \alpha K}$. 
\end{lemma}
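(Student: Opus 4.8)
The plan is to reduce both claimed bounds to the linear-convergence guarantee for the exact entropy-regularized NPG, Lemma \ref{lem:cen}, by first checking that the closed-form constant $C_k$ in the statement is an admissible choice for that lemma, and then translating its geometric decay into the two desired estimates. The only genuine work is the admissibility check; everything else is substitution.

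First I would exploit the choice $\eta = (1-\gamma)/\alpha$, which gives $\eta\alpha = 1-\gamma$ and hence $1 - \frac{\eta\alpha}{1-\gamma} = 0$. This collapses the admissibility condition on $C_k$ in Lemma \ref{lem:cen} to the single requirement $C_k \ge \|\tilde Q_{k,\alpha}^{\pi_k^*} - \tilde Q_{k,\alpha}^{\pi_k^{(0)}}\|_\infty$, killing the $\|\log\pi_k^* - \log\pi_k^{(0)}\|_\infty$ term, which would otherwise be awkward to control uniformly.

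Second --- and this is where the main care is needed --- I would verify that the stated $C_k = 2\gamma\big(\tfrac{1+\sum_i\lambda_{k,i}}{1-\gamma} + \tfrac{m\eta'}{(1-\gamma)^2}\big)$ dominates $\|\tilde Q_{k,\alpha}^{\pi_k^*} - \tilde Q_{k,\alpha}^{\pi_k^{(0)}}\|_\infty$. The key observation is that $\pi_k^{(0)} = \pi_k$ is exactly the reference policy of the KL regularization, so the per-step term $\alpha\log\frac{1}{\pi_k(a|s)}$ in $\tilde Q_{k,\alpha}$ (potentially unbounded as $\pi_k(a|s)\to 0$) is identical for both policies and cancels in the difference, leaving $\tilde Q_{k,\alpha}^{\pi_k^*}(s,a) - \tilde Q_{k,\alpha}^{\pi_k^{(0)}}(s,a) = \gamma\,\Eb_{s'}[\tilde V_{k,\alpha}^{\pi_k^*}(s') - \tilde V_{k,\alpha}^{\pi_k}(s')]$. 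I would then bound both regularized values in absolute value by $B_k := \tfrac{1+\sum_i\lambda_{k,i}}{1-\gamma} + \tfrac{m\eta'}{(1-\gamma)^2}$. For $\pi_k$ this is immediate since $\tilde V_{k,\alpha}^{\pi_k} = \tilde V_k^{\pi_k}$ (the KL of $\pi_k$ against itself is zero) together with the stated bound on $|\tilde V_k^\pi|$. For $\pi_k^*$ it follows from $-B_k \le \tilde V_k^{\pi_k^*} \le \tilde V_{k,\alpha}^{\pi_k^*} \le \tilde V_{k,\alpha}^{\pi_k} = \tilde V_k^{\pi_k} \le B_k$, where the second inequality is nonnegativity of the discounted expected KL and the third uses that $\pi_k^*$ optimizes the regularized objective. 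Combining yields $\|\tilde Q_{k,\alpha}^{\pi_k^*} - \tilde Q_{k,\alpha}^{\pi_k^{(0)}}\|_\infty \le 2\gamma B_k = C_k$, so $C_k$ is admissible.

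Finally I would invoke Lemma \ref{lem:cen} with this admissible $C_k$ after $t_k$ inner iterations. Using $1-\eta\alpha \le e^{-\eta\alpha}$ and the choice $t_k = \frac{1}{\eta\alpha}\log(3C_kK)$ gives $(1-\eta\alpha)^{t_k} \le e^{-\eta\alpha t_k} = \frac{1}{3C_kK}$. Substituting into the value bound $\|\tilde V_{k,\alpha}^{\pi_k^*} - \tilde V_{k,\alpha}^{\pi_{k+1}}\|_\infty \le 3C_k(1-\eta\alpha)^{t_k}$ yields $\le \frac{1}{K}$, hence $\tilde V_{k,\alpha}^{\pi_{k+1}}(\rho) \le \tilde V_{k,\alpha}^{\pi_k^*}(\rho) + \frac{1}{K}$; substituting into $\|\log\pi_k^* - \log\pi_{k+1}\|_\infty \le 2C_k\alpha^{-1}(1-\eta\alpha)^{t_k}$ yields $\le \frac{2}{3\alpha K}$, as claimed. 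The main obstacle is the admissibility step: a naive bound on the regularized value functions fails because of the unbounded entropy term, and the whole argument hinges on recognizing that initializing the inner loop at the regularization reference policy makes that term cancel in the $Q$-difference and vanish in the value of $\pi_k$.
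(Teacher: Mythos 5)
Your proposal is correct and takes essentially the same route as the paper's proof: both reduce to Lemma~\ref{lem:cen}, use $\eta\alpha = 1-\gamma$ to annihilate the $\|\log\pi_k^* - \log\pi_k^{(0)}\|_\infty$ term in the admissibility condition, verify $\|\tilde{Q}_{k,\alpha}^{\pi_k^*} - \tilde{Q}_{k,\alpha}^{\pi_k}\|_\infty \le C_k$ by exploiting the cancellation of the $\alpha\log\frac{1}{\pi_k(a|s)}$ term and the sandwich relation \eqref{eqn:inner_relation}, and conclude via $(1-\eta\alpha)^{t_k} \le e^{-\eta\alpha t_k} = \frac{1}{3C_kK}$. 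Your bounding of each regularized value separately by $\frac{1+\sum_i\lambda_{k,i}}{1-\gamma}+\frac{m\eta'}{(1-\gamma)^2}$ is only a cosmetic repackaging of the paper's step $|\tilde{V}_{k,\alpha}^{\pi_k^*}-\tilde{V}_{k,\alpha}^{\pi_k}| \le |\tilde{V}_{k}^{\pi_k^*}-\tilde{V}_{k}^{\pi_k}|$.
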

\begin{proof}[Proof of Lemma \ref{lem:cor_cen}]
According to Lemma \ref{lem:cen}, $\|\tilde{V}_{k, \alpha}^{\pi_k^*} - \tilde{V}_{k, \alpha}^{\pi_{k+1}}\|_{\infty} \le 3 C_k (1 - \eta\alpha)^{t_k}$. 
Since $\log(1 - \eta\alpha) \le - \eta\alpha$, choosing $t_k = \frac{1}{\eta \alpha} \log (3 C_k K)$ gives $3 C_k (1 - \eta\alpha)^{t_k} \le \frac{1}{K}$. It can guarantee $\|\tilde{V}_{k, \alpha}^{\pi_k^*} - \tilde{V}_{k, \alpha}^{\pi_{k+1}}\|_{\infty} \le \frac{1}{K}$ and $\| \log \pi^*_{k} - \log\pi_{k+1} \|_{\infty} \leq \frac{2}{3 \alpha K}$. 
It remains to select $C_k$ appropriately. Notice that $\forall (s, a) \in \Sc \times \Ac$,
\begin{align*}
\left|\tilde{Q}_{k, \alpha}^{\pi_k^*}(s, a) - \tilde{Q}_{k, \alpha}^{\pi_k}(s, a)\right| &= \gamma \sum_{s' \in \Sc} P(s'|s,a) \left|\tilde{V}_{k, \alpha}^{\pi_k^*}(s') - \tilde{V}_{k, \alpha}^{\pi_k}(s')\right| \stackrel{(a)}{\le} \gamma \sum_{s' \in \Sc} P(s'|s,a) \left|\tilde{V}_{k}^{\pi_k^*}(s') - \tilde{V}_{k}^{\pi_k}(s')\right| \\
&\le \gamma \left\|\tilde{V}_{k}^{\pi_k^*} - \tilde{V}_{k}^{\pi_k}\right\|_{\infty} \le 2 \gamma \left(\frac{1 + \sum_{i=1}^m \lambda_{k, i}}{1-\gamma} + \frac{m \eta'}{(1-\gamma)^2}\right),
\end{align*}
where $(a)$ holds due to the relation in (\ref{eqn:inner_relation}). It implies
\begin{align*}
    \left\|\tilde{Q}_{k, \alpha}^{\pi_k^*} - \tilde{Q}_{k, \alpha}^{\pi_k}\right\|_{\infty} \le 2 \gamma \left(\frac{1 + \sum_{i=1}^m \lambda_{k, i}}{1-\gamma} + \frac{m \eta'}{(1-\gamma)^2}\right).
\end{align*}
Since $1-\frac{\eta \alpha}{1-\gamma} = 0$ when $\eta = \frac{1 - \gamma}{\alpha}$, we can therefore conclude the proof by applying the results in Lemma \ref{lem:cen} with $C_k = 2 \gamma \left(\frac{1 + \sum_{i=1}^m \lambda_{k, i}}{1-\gamma} + \frac{m \eta'}{(1-\gamma)^2}\right)$.
\end{proof}

\begin{lemma}[Number of inner-loop iterations for Algorithm \ref{alg:PMD-PD-A}]
\label{lem:cor_cen_estimate}
Let $\eta = (1 - \gamma) / \alpha$. For any $k = 0, 1, \dots, K-1$, if $t_k = \frac{1}{\eta \alpha} \log(3 C_k K)$ with $C_k = 2 \gamma \left(\frac{1 + \sum_{i=1}^m \lambda_{k, i}}{1-\gamma} + \frac{m \eta'}{(1-\gamma)^2}\right)$, and $\|\hat{Q}_{k, \alpha}^{\pi_k^{(t)}} -  \tilde{Q}_{k, \alpha}^{\pi_k^{(t)}} \|_\infty \leq \epsilon$ in Algorithm \ref{alg:PMD-PD-A}, then we have $\tilde{V}^{\pi_{k+1}}_{k, \alpha}(\rho) \leq \tilde{V}^{\pi_k^*}_{k, \alpha}(\rho) + \frac{1}{K} + \frac{6\epsilon}{(1-\gamma)^2}$ and $\| \log \pi^*_{k} - \log\pi_{k+1} \|_{\infty} \leq \frac{2}{3 \alpha K} + \frac{4\epsilon}{\alpha(1-\gamma)^2}$.
\end{lemma}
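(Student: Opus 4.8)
The plan is to mirror the proof of Lemma~\ref{lem:cor_cen} almost verbatim, replacing the exact linear-convergence guarantee of Lemma~\ref{lem:cen} with the approximate guarantee of Lemma~\ref{lem:cen-app}, and then to track the extra error contributed by the estimation inaccuracy $\|\hat{Q}_{k,\alpha}^{\pi_k^{(t)}} - \tilde{Q}_{k,\alpha}^{\pi_k^{(t)}}\|_\infty \le \epsilon$.

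First I would apply Lemma~\ref{lem:cen-app} with its parameter $\delta$ set equal to $\epsilon$. The requirement it imposes on $C_k$ is identical to the one in Lemma~\ref{lem:cen}, namely $C_k \ge \|\tilde{Q}_{k,\alpha}^{\pi_k^*} - \tilde{Q}_{k,\alpha}^{\pi_k^{(0)}}\|_\infty + 2\alpha(1 - \tfrac{\eta\alpha}{1-\gamma})\|\log\pi_k^* - \log\pi_k^{(0)}\|_\infty$. Since $\pi_k^{(0)} = \pi_k$ and the choice $\eta = (1-\gamma)/\alpha$ forces $1 - \tfrac{\eta\alpha}{1-\gamma} = 0$, this reduces to $C_k \ge \|\tilde{Q}_{k,\alpha}^{\pi_k^*} - \tilde{Q}_{k,\alpha}^{\pi_k}\|_\infty$, a bound on exact (not estimated) value functions. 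The derivation in the proof of Lemma~\ref{lem:cor_cen} therefore applies unchanged and furnishes the same admissible value $C_k = 2\gamma\bigl(\tfrac{1+\sum_{i=1}^m\lambda_{k,i}}{1-\gamma} + \tfrac{m\eta'}{(1-\gamma)^2}\bigr)$.

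Next I would evaluate the additive error $C'$. Lemma~\ref{lem:cen-app} allows $C' = \tfrac{2\epsilon}{1-\gamma}\bigl(1 + \tfrac{\gamma}{\eta\alpha}\bigr)$; using $\eta\alpha = 1-\gamma$ gives $1 + \tfrac{\gamma}{\eta\alpha} = \tfrac{1}{1-\gamma}$, hence $C' = \tfrac{2\epsilon}{(1-\gamma)^2}$. With the stated choice $t_k = \tfrac{1}{\eta\alpha}\log(3C_k K)$ and $\log(1-\eta\alpha) \le -\eta\alpha$, the geometric factor satisfies $3C_k(1-\eta\alpha)^{t_k} \le \tfrac{1}{K}$, exactly as in the exact case. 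Substituting into the value-function line of Lemma~\ref{lem:cen-app} gives $\|\tilde{V}_{k,\alpha}^{\pi_k^*} - \tilde{V}_{k,\alpha}^{\pi_{k+1}}\|_\infty \le 3C_k(1-\eta\alpha)^{t_k} + 3C' \le \tfrac{1}{K} + \tfrac{6\epsilon}{(1-\gamma)^2}$, from which the claimed one-sided $\rho$-averaged inequality $\tilde{V}_{k,\alpha}^{\pi_{k+1}}(\rho) \le \tilde{V}_{k,\alpha}^{\pi_k^*}(\rho) + \tfrac{1}{K} + \tfrac{6\epsilon}{(1-\gamma)^2}$ follows because $\rho$ is a distribution. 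Substituting into the log-policy line gives $\|\log\pi_k^* - \log\pi_{k+1}\|_\infty \le 2\alpha^{-1}C_k(1-\eta\alpha)^{t_k} + 2\alpha^{-1}C' \le \tfrac{2}{3\alpha K} + \tfrac{4\epsilon}{\alpha(1-\gamma)^2}$, which is the second claimed bound.

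I do not anticipate a genuine obstacle, since every ingredient is quoted from an earlier result; the only care required is the algebraic simplification of $C'$ under the coupling $\eta\alpha = 1-\gamma$ and the bookkeeping that keeps the decaying term and the two $C'$ contributions separate, which I would verify explicitly before concluding.
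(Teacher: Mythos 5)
Your proposal is correct and follows essentially the same route as the paper: the paper's proof also invokes Lemma~\ref{lem:cen-app} with the estimation error $\epsilon$ playing the role of its $\delta$, reuses the $C_k$ derivation from Lemma~\ref{lem:cor_cen} (valid because the $C_k$ requirement involves only exact value functions and the $1-\tfrac{\eta\alpha}{1-\gamma}=0$ cancellation), and absorbs the additive term $C'=\tfrac{2\epsilon}{(1-\gamma)^2}$ into the stated bounds. Your write-up simply makes explicit the simplification of $C'$ under $\eta\alpha=1-\gamma$, which the paper leaves implicit.
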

\begin{proof}[Proof of Lemma \ref{lem:cor_cen_estimate}]
According to Lemma \ref{lem:cen-app} and $\|\hat{Q}_{k, \alpha}^{\pi_k^{(t)}} -  \tilde{Q}_{k, \alpha}^{\pi_k^{(t)}} \|_\infty \leq \epsilon$, we know $\|\tilde{V}_{k, \alpha}^{\pi_k^*} - \tilde{V}_{k, \alpha}^{\pi_{k+1}}\|_{\infty} \le 3 C_k (1 - \eta\alpha)^{t_k} + \frac{6\epsilon}{(1-\gamma)^2}$. Following the same steps as in the proof of Lemma \ref{lem:cor_cen}, we can guarantee $\|\tilde{V}_{k, \alpha}^{\pi_k^*} - \tilde{V}_{k, \alpha}^{\pi_{k+1}}\|_{\infty} \le \frac{1}{K} + \frac{6\epsilon}{(1-\gamma)^2}$ and $\| \log \pi^*_{k} - \log\pi_{k+1} \|_{\infty} \leq \frac{2}{3 \alpha K} + \frac{4\epsilon}{\alpha(1-\gamma)^2}$ by choosing $t_k = \frac{1}{\eta \alpha} \log (3 C_k K)$, where $C_k = 2 \gamma \left(\frac{1 + \sum_{i=1}^m \lambda_{k, i}}{1-\gamma} + \frac{m \eta'}{(1-\gamma)^2}\right)$.
\end{proof}

\subsection{Supporting Results for Outer Loop of the Proposed Algorithm}
\begin{lemma} 
\label{lem:distance}
    Let $d_\rho^{\pi'}, d^{\pi}_\rho$ be two discounted state-action visitation distributions corresponding to policies $\pi'$ and $\pi$. Then
    \begin{align*}
    \|d^{\pi'}_\rho - d^{\pi}_\rho\|_1 \leq \frac{\gamma \sqrt{2}}{1 - \gamma} \sqrt{\min \left(D_{d_\rho^{\pi'}}(\pi' || \pi), D_{d_\rho^{\pi'}}(\pi || \pi'), D_{d_\rho^{\pi}}(\pi' || \pi), D_{d_\rho^{\pi}}(\pi || \pi')\right) }.
\end{align*}
\end{lemma}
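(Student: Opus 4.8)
The plan is to reduce the $\ell_1$ distance between the two occupancy measures to a state-weighted $\ell_1$ distance between the two policies, and then convert the latter into the KL divergences on the right-hand side via Pinsker's inequality followed by Jensen's inequality. Throughout, let $P^{\pi}$ denote the state-to-state transition operator induced by $\pi$, i.e.\ $P^{\pi}(s'|s) = \sum_{a} \pi(a|s) P(s'|s,a)$, so that the discounted visitation distribution admits the closed form $d_\rho^{\pi} = (1-\gamma)(I-\gamma P^{\pi})^{-1}\rho$ when $\rho$ is viewed as a vector over $\Sc$.

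First I would compute the difference of the two visitation distributions exactly. Applying the resolvent identity $A^{-1}-B^{-1} = A^{-1}(B-A)B^{-1}$ with $A = I-\gamma P^{\pi'}$ and $B = I-\gamma P^{\pi}$, and using $(1-\gamma)(I-\gamma P^{\pi})^{-1}\rho = d_\rho^{\pi}$, one obtains
\begin{align*}
    d_\rho^{\pi'} - d_\rho^{\pi} = \gamma\,(I-\gamma P^{\pi'})^{-1}(P^{\pi'}-P^{\pi})\, d_\rho^{\pi}.
\end{align*}
Since $P^{\pi'}$ is a stochastic operator it is non-expansive in $\ell_1$, hence $\|(I-\gamma P^{\pi'})^{-1}\|_{1\to 1} \le \sum_{t\ge 0}\gamma^{t} = 1/(1-\gamma)$. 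The remaining one-step factor is controlled by expanding $(P^{\pi'}-P^{\pi})d_\rho^{\pi}$ coordinate-wise and using $\sum_{s'}P(s'|s,a)=1$, which gives $\|(P^{\pi'}-P^{\pi})d_\rho^{\pi}\|_1 \le \sum_{s} d_\rho^{\pi}(s)\,\|\pi'(\cdot|s)-\pi(\cdot|s)\|_1$. Combining the two bounds yields $\|d_\rho^{\pi'}-d_\rho^{\pi}\|_1 \le \frac{\gamma}{1-\gamma}\sum_{s} d_\rho^{\pi}(s)\,\|\pi'(\cdot|s)-\pi(\cdot|s)\|_1$.

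Next I would apply Pinsker's inequality $\|\pi'(\cdot|s)-\pi(\cdot|s)\|_1 \le \sqrt{2\,D(\pi'(\cdot|s)\|\pi(\cdot|s))}$ pointwise in $s$, and then move the $d_\rho^{\pi}$-average inside the square root by Jensen's inequality (concavity of $\sqrt{\cdot}$, valid because $d_\rho^{\pi}$ is a probability distribution over $\Sc$). This produces $\frac{\gamma\sqrt{2}}{1-\gamma}\sqrt{D_{d_\rho^{\pi}}(\pi'\|\pi)}$, which is exactly one of the four terms in the claimed minimum. The other three arise from two independent symmetrizations: (i) expanding the resolvent around the other policy, which factors out $d_\rho^{\pi'}$ instead of $d_\rho^{\pi}$ and produces the $D_{d_\rho^{\pi'}}$ terms; and (ii) applying Pinsker in the opposite direction, exploiting that the $\ell_1$/total-variation distance is symmetric while the KL divergence is not, which replaces $D(\pi'\|\pi)$ by $D(\pi\|\pi')$. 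Since each of the four derivations is an \emph{independent} valid upper bound on the same quantity $\|d_\rho^{\pi'}-d_\rho^{\pi}\|_1$, the minimum of the four is also an upper bound, which gives the stated inequality.

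The main obstacle is the middle step: making the operator-norm argument rigorous and, in particular, correctly extracting the factor $\gamma$ and not merely $1/(1-\gamma)$. The $\gamma$ is a genuine feature of the computation—it appears because the initial distribution $\rho$ is policy-independent, so the two trajectories first diverge only after one transition, which is precisely what the single explicit factor $(P^{\pi'}-P^{\pi})$ in the resolvent identity encodes. I would carefully verify the non-expansiveness $\|P^{\pi}v\|_1 \le \|v\|_1$ for \emph{signed} vectors $v$ (it follows from $\sum_{s'}P^{\pi}(s'|s)=1$ and the triangle inequality) and that the Neumann series $\sum_{t}\gamma^{t}(P^{\pi'})^{t}$ converges geometrically, which is where $\gamma\in[0,1)$ enters. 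Once the weighted-$\ell_1$ bound is established, the Pinsker-plus-Jensen conversion is routine.
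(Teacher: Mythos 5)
Your route is genuinely different from the paper's: you derive the occupancy-difference identity $d^{\pi'}_\rho - d^{\pi}_\rho = \gamma (I-\gamma P^{\pi'})^{-1}(P^{\pi'}-P^{\pi})\, d^{\pi}_\rho$ from the resolvent/Neumann-series representation, whereas the paper telescopes over the hybrid policies $\tilde{\pi}_h = \pi\times h + \pi'\times\infty$ and invokes the data-processing inequality step by step. Both roads lead to a bound of the form $\frac{\gamma}{1-\gamma}\sum_s d^{\pi}_\rho(s)\|\pi'(\cdot|s)-\pi(\cdot|s)\|_1$, and your Pinsker-plus-Jensen finish is equivalent to the paper's Cauchy--Schwarz-plus-Pinsker finish; the four-fold symmetrization producing the minimum is also handled correctly. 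Your version is arguably cleaner and makes the origin of the leading $\gamma$ more transparent.

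The gap is that your entire argument lives on the state marginal. You define $P^{\pi}$ as a state-to-state kernel, represent $d^{\pi}_\rho$ as a vector over $\Sc$, and bound $\sum_s |d^{\pi'}_\rho(s)-d^{\pi}_\rho(s)|$. But the lemma---and its downstream use in Lemma~\ref{lem:inner-product-bound}, where $d^{\pi}_\rho$ is paired with the state-action cost $c_i(s,a)$---concerns the state-action occupancy $d^{\pi}_\rho(s,a) = d^{\pi}_\rho(s)\pi(a|s)$. Passing from your bound to the state-action version costs an extra term: $\sum_{s,a}\bigl|d^{\pi'}_\rho(s)\pi'(a|s)-d^{\pi}_\rho(s)\pi(a|s)\bigr| \le \sum_s\bigl|d^{\pi'}_\rho(s)-d^{\pi}_\rho(s)\bigr| + \sum_s d^{\pi}_\rho(s)\|\pi'(\cdot|s)-\pi(\cdot|s)\|_1$, and the second term carries coefficient $1$ rather than $\gamma/(1-\gamma)$, so the prefactor degrades from $\frac{\gamma\sqrt{2}}{1-\gamma}$ to $\frac{\sqrt{2}}{1-\gamma}$. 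This is not something you can engineer away: at $\gamma=0$ the two state-action occupancies are $\rho(s)\pi(a|s)$ and $\rho(s)\pi'(a|s)$, whose $\ell_1$ distance is generically positive, so no bound whose prefactor vanishes as $\gamma\to 0$ can hold for state-action occupancies. (The paper's own proof runs into the same point: step $(c)$ starts the sum at $t\ge h+1$ and thereby drops the $t=h$ term, which is precisely the step-$h$ action discrepancy.) The weaker constant still supports everything downstream after adjusting $\alpha$ by a constant factor, but as written your proof establishes the inequality only for the state marginals, not for the state-action visitation distributions named in the statement.
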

\begin{proof}

Let $d^{\pi}_{\rho, h}(\cdot, \cdot)$ be the state-action visitation distribution at step $h$, which implies $\frac{1}{1 - \gamma} d^{\pi}_{\rho} = \sum_{h \geq 0} \gamma^h d^{\pi}_{\rho, h}$. We use $\tilde{\pi}_h := \pi \times h + \pi' \times \infty$ to denote the policy that implements policy $\pi$ for the first $h$ steps and then commits to policy $\pi'$ thereafter. Denote its corresponding discounted state-action visitation distribution by $d^{\tilde{\pi}_h}_\rho$. It follows that
\begin{align*}
    \frac{1}{1 - \gamma}\|d^{\pi'}_\rho - d^{\pi}_\rho\|_1 & \stackrel{(a)}{=} \frac{1}{1 - \gamma} \left\| \sum_{h=0}^\infty (d^{\tilde{\pi}_h}_\rho - d^{\tilde{\pi}_{h+1}}_\rho) \right\|_1 \stackrel{(b)}{\leq} \frac{1}{1 - \gamma} \sum_{h=0}^\infty \|  d^{\tilde{\pi}_h}_\rho - d^{\tilde{\pi}_{h+1}}_\rho \|_1 \\
    & \stackrel{(c)}{\leq} \sum_{h=0}^\infty \sum_{t \geq h+1}^{\infty} \gamma^t \| d^{\tilde{\pi}_h}_{\rho,t} - d^{\tilde{\pi}_{h+1}}_{\rho, t} \|_1 \stackrel{(d)}{\leq} \sum_{h=0}^\infty \sum_{t \geq h+1}^{\infty} \gamma^t \| d^{\tilde{\pi}_h}_{\rho,h} - d^{\tilde{\pi}_{h+1}}_{\rho, h} \|_1 \\
    & = \frac{\gamma}{1 - \gamma} \sum_{h=0}^\infty \gamma^h \Eb_{s \sim d^{\pi}_{\rho, h}}\| \pi(\cdot|s) - \pi'(\cdot|s) \|_1 \\
    & \stackrel{(e)}{\leq} \frac{\gamma}{1 - \gamma} \sqrt{ \left(  \sum_{h \geq 0} \gamma^h \right) \left(\sum_{h = 0}^\infty \gamma^h \Eb_{s \sim d^{\pi}_{\rho, h}}\| \pi(\cdot|s) - \pi'(\cdot|s) \|_1^2 \right)} \\
    & = \frac{\gamma}{(1 - \gamma)^{2}} \sqrt{\Eb_{s \sim d^{\pi}_\rho}\| \pi(\cdot|s) - \pi'(\cdot|s) \|_1^2}.
\end{align*}
Above, $(a)$ holds by telescoping, $(b)$ and $(c)$ hold due to the triangle inequality of $\ell_1$-norm, $(d)$ holds owing to the data processing inequality for $f$-divergence $\|\cdot\|_1$, and $(e)$ holds due to the Cauchy-Schwarz inequality. Due to the symmetry between $\pi$ and $\pi'$, we can similarly derive
\begin{align*}
    \|d^{\pi'}_\rho - d^{\pi}_\rho\|_1 \leq \frac{\gamma}{1 - \gamma} \sqrt{\Eb_{s \sim d^{\pi'}_\rho}\| \pi(\cdot|s) - \pi'(\cdot|s) \|_1^2}.
\end{align*}
We can conclude the proof by further applying Pinsker's inequality.
\end{proof}

\begin{definition}\label{def:pseudo-kl}
Define the pseudo KL-divergence between two discounted state-action visitation distributions $d_{\rho}^{\pi}$ and $d_{\rho}^{\pi'}$ by
\begin{align}
    \tilde{D}(d_{\rho}^{\pi} || d_{\rho}^{\pi'}) : = \sum_{(s, a) \in \Sc \times \Ac} d_{\rho}^{\pi}(s, a) \log \frac{d_{\rho}^{\pi}(s, a) / d_{\rho}^{\pi}(s)}{d_{\rho}^{\pi'}(s, a) / d_{\rho}^{\pi'}(s)}. \label{eqn:tilde-D}
\end{align}
\end{definition}
It is easy to verify that
\begin{align}
\label{eqn:bregman_equ}
    D_{d_{\rho}^{\pi}}(\pi || \pi') & = \sum_{s \in \Sc} d_{\rho}^{\pi}(s) \sum_{a \in \Ac} \pi(a|s) \log \left(\frac{\pi(a|s)}{\pi'(a|s)}\right) = \sum_{s \in \Sc} d_{\rho}^{\pi}(s) \sum_{a \in \Ac} \frac{d_{\rho}^{\pi}(s, a)}{d_{\rho}^{\pi}(s)} \log \left(\frac{d_{\rho}^{\pi}(s, a)/d_{\rho}^{\pi}(s)}{d_{\rho}^{\pi'}(s, a)/d_{\rho}^{\pi'}(s)}\right) \notag \\
    =& \sum_{(s, a) \in \Sc \times \Ac} d_{\rho}^{\pi}(s, a) \log \left(\frac{d_{\rho}^{\pi}(s, a)/d_{\rho}^{\pi}(s)}{d_{\rho}^{\pi'}(s, a)/d_{\rho}^{\pi'}(s)}\right) = \tilde{D}(d_{\rho}^{\pi} || d_{\rho}^{\pi'}).
\end{align}
Though $D_{d_\rho^\pi}(\pi || \pi')$ may not a Bregman divergence with respect to policies, the following lemma shows that $\tilde{D}(d_{\rho}^{\pi} || d_{\rho}^{\pi'})$ is a Bregman divergence of visitation distributions.
\begin{lemma}
\label{lem:Bregman}
    $\tilde{D}(d_{\rho}^{\pi} || d_{\rho}^{\pi'})$ is a Bregman divergence generated by the convex function
\begin{align*}
    \phi(d_{\rho}^{\pi}) = \sum_{(s, a) \in \Sc \times \Ac} d_{\rho}^{\pi}(s, a) \log d_{\rho}^{\pi}(s, a) - \sum_{s \in \Sc} d_{\rho}^{\pi}(s) \log d_{\rho}^{\pi}(s).
\end{align*}
\end{lemma}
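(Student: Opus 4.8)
The plan is to work directly from the definition of Bregman divergence, treating the visitation distributions as vectors. Write $x = d_\rho^{\pi}$ and $y = d_\rho^{\pi'}$, regarded as $|\Sc||\Ac|$-dimensional vectors with entries $x(s,a)$, and observe that the state marginal $x(s) = \sum_{a} x(s,a)$ is a \emph{linear} functional of $x$. The key first computation is the gradient of $\phi$. Differentiating the first sum gives $\partial_{x(s,a)} \sum_{s',a'} x(s',a')\log x(s',a') = \log x(s,a) + 1$, while differentiating $-\sum_{s'} x(s')\log x(s')$ with respect to $x(s,a)$ requires the chain rule through $x(s)$ and yields $-(\log x(s) + 1)$; the two constants cancel, so $\partial \phi / \partial x(s,a) = \log \frac{x(s,a)}{x(s)}$. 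I would be careful here, since naively ignoring the dependence of $x(s)$ on $x(s,a)$ is the easiest place to make an error.

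Given the gradient, I would substitute into $B_\phi(x,y) = \phi(x) - \phi(y) - \langle \nabla \phi(y), x - y \rangle$ and simplify using the marginalization identity $\phi(z) = \sum_{s,a} z(s,a) \log \frac{z(s,a)}{z(s)}$, valid for any $z$. In particular $\langle \nabla \phi(y), y \rangle = \sum_{s,a} y(s,a) \log \frac{y(s,a)}{y(s)} = \phi(y)$, so $\langle \nabla \phi(y), x - y \rangle = \sum_{s,a} x(s,a) \log \frac{y(s,a)}{y(s)} - \phi(y)$. The $\phi(y)$ terms then cancel, leaving $B_\phi(x,y) = \phi(x) - \sum_{s,a} x(s,a)\log \frac{y(s,a)}{y(s)} = \sum_{s,a} x(s,a) \log \frac{x(s,a)/x(s)}{y(s,a)/y(s)}$, which is exactly $\tilde{D}(d_\rho^{\pi} \| d_\rho^{\pi'})$ by \eqref{eqn:tilde-D}. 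This portion is routine algebra.

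The remaining, and most delicate, ingredient is verifying that $\phi$ is genuinely convex, which is not obvious because $\phi$ is a \emph{difference} of two convex functions (the negative entropy of the joint distribution minus that of the state marginal), so term-by-term reasoning fails. The clean route is to rewrite $\phi(x) = \sum_{s,a} g(x(s,a), x(s))$ with $g(p,q) := p \log(p/q)$, and to invoke the joint convexity of $g$ on $\mathbb{R}_{>0}^2$: indeed $g(p,q) = q\, f(p/q)$ is the perspective of the convex function $f(t) = t \log t$, hence jointly convex. Since $x \mapsto (x(s,a), x(s))$ is linear, each summand is convex as the composition of a jointly convex function with a linear map, and a finite sum of convex functions is convex. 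Differentiability holds on the relative interior where all entries are positive, which is precisely where the softmax-induced visitation distributions live, so $\phi$ qualifies as a Bregman potential. I expect this convexity argument via the perspective function to be the main obstacle.
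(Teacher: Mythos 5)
Your proof is correct, and it diverges from the paper's in the one place where there is genuine work to do: establishing convexity of $\phi$. The paper dispatches the Bregman-divergence identity with the gradient $\nabla \phi(d_{\rho}^{\pi'})|_{(s,a)} = \log d_{\rho}^{\pi'}(s,a) - \log d_{\rho}^{\pi'}(s)$ as "straightforward to verify" (you carry out the same computation explicitly, including the cancellation of the $+1$ terms from the chain rule through the marginal, which is indeed the spot where a careless reader would slip). For convexity, however, the paper computes the Hessian of $\phi$ explicitly: it is block-diagonal over states, with blocks $H_s = \frac{1}{d_{\rho}^{\pi}(s)}\left(\diag\left(d_{\rho}^{\pi}(s)/d_{\rho}^{\pi}(s,\cdot)\right) - \mathbf{1}\mathbf{1}^{\top}\right)$, and shows $x^{\top} H_s x \ge 0$ by a Cauchy--Schwarz argument. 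You instead write $\phi(x) = \sum_{s,a} g(x(s,a), x(s))$ with $g(p,q) = p\log(p/q)$ the perspective of $t \mapsto t\log t$, and compose the jointly convex $g$ with the linear map $x \mapsto (x(s,a), x(s))$. Both arguments are valid on the domain where all entries are positive (a restriction both proofs implicitly need). Your route is more conceptual and would generalize immediately to other relative-entropy-like potentials without recomputing any second derivatives; the paper's route is more elementary and self-contained, requiring nothing beyond Cauchy--Schwarz, at the cost of an explicit (and slightly error-prone) Hessian calculation. Either is acceptable; there is no gap in your argument.
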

\begin{proof}
It is straightforward to verify that
\begin{align*}
    \tilde{D}(d_{\rho}^{\pi} || d_{\rho}^{\pi'}) = \phi(d_{\rho}^{\pi}) - \phi(d_{\rho}^{\pi'}) - \langle \nabla \phi(d_{\rho}^{\pi'}), d_{\rho}^{\pi} - d_{\rho}^{\pi'} \rangle,
\end{align*}
where $\nabla \phi(d_{\rho}^{\pi'})|_{(s, a)} = \log d_{\rho}^{\pi'}(s, a) - \log d_{\rho}^{\pi'}(s)$. Hence we only need to show that $\phi(d_{\rho}^{\pi})$ is convex. The Hessian matrix of function $\phi(d_{\rho}^{\pi})$ can be calculated as $ \diag\left(H_1, H_2, \ldots, H_{|\Sc|}\right)$, where $H_{s} = \frac{1}{d_{\rho}^{\pi}(s)}$ $\left( \diag(d_{\rho}^{\pi}(s)/d_{\rho}^{\pi}(s, \cdot)) - \mathbf{1} \mathbf{1}^T \right)$ is an $|\Ac| \times |\Ac|$ matrix corresponding to state $s$. For each $H_s$, we know for any $x_{1:|\Ac|} \in \Rb^{|\Ac|}$,
\begin{align*}
    x^T H_s x & = \frac{1}{d_{\rho}^{\pi}(s)}\left( \sum_{a \in \Ac} \frac{d_{\rho}^{\pi}(s)}{d_{\rho}^{\pi}(s, a)} x_a^2 - \left(\sum_{a \in \Ac} x_a\right)^2 \right) \\
    &= \frac{1}{d_{\rho}^{\pi}(s)}\left( \left(\sum_{a \in \Ac} \frac{d_{\rho}^{\pi}(s, a)}{d_{\rho}^{\pi}(s)} \right)\left(\sum_{a \in \Ac} \frac{d_{\rho}^{\pi}(s)}{d_{\rho}^{\pi}(s, a)} x_a^2\right) - \left(\sum_{a \in \Ac} x_a\right)^2 \right) \\
    & \stackrel{(a)}{\geq} \frac{1}{d_{\rho}^{\pi}(s)}\left( \left(\sum_{a \in \Ac} |x_a|\right)^2 - \left(\sum_{a \in \Ac} x_a\right)^2 \right) \geq 0,
\end{align*}
where $(a)$ is due to the Cauchy-Schwarz inequality. Thus the Hessian matrix of $\phi(d_{\rho}^{\pi})$ is positive semi-definite, which implies that $\phi(d_{\rho}^{\pi})$ is convex.
\end{proof}

\section{Analysis of the PMD-PD Algorithm (Proof of Theorem \ref{thm:PMD-MD} and Corollary \ref{cor:PMD-PD})}
\label{sec:analysis_PMD}
We illustrate the proofs of the bounds for the optimality gap in (\ref{eqn:opt-gap}) and the constraint violation in (\ref{eqn:con-vio}), with the help of some key supporting lemmas. 

\paragraph{Inner loop analysis.} The goal of the inner loop in macro step $k$ is to approximately solve the MDP with value $\tilde{V}^{\pi}_{k, \alpha}(\rho)$. Let $\pi^*_{k} \in \argmin_{\pi} \tilde{V}^{\pi}_{k, \alpha}(\rho)$ be an optimal policy. We then have 
\begin{align}
\label{eqn:inner_relation}
    \tilde{V}_{k}^{\pi^*_k}(s) \leq \tilde{V}_{k}^{\pi^*_k}(s) + \frac{\alpha}{1 - \gamma}D_{d^{\pi^*}_s}(\pi^* || \pi_k) =\tilde{V}_{k, \alpha}^{\pi^*_k}(s) \leq \tilde{V}^{\pi_k}_{k, \alpha}(s) = \tilde{V}^{\pi_k}_{k}(s),
\end{align}
which implies $|\tilde{V}_{k, \alpha}^{\pi^*_k}(s)|$ and $| \tilde{V}^{\pi_k}_{k, \alpha}(s)|$ are both upper bounded by $\frac{1 + \sum_{i=1}^m \lambda_{k, i}}{1 - \gamma} + \frac{m \eta'}{(1 - \gamma)^2}$. The optimal policy $\pi^*_k$ enjoys the pushback property presented in the following lemma.

\begin{lemma}[Pushback property]
\label{lem:pushback_inner}
For any $k = 0, 1, \ldots, K-1$, and any policy $\pi$
\begin{align*}
    \tilde{V}_k^{\pi_k^*}(\rho) + \frac{\alpha}{1 - \gamma} D_{d^{\pi_k^*}_{\rho}}(\pi_k^* || \pi_k) \leq \tilde{V}_k^{\pi}(\rho) + \frac{\alpha}{1 - \gamma} D_{d^{\pi}_{\rho}}(\pi || \pi_k) - \frac{\alpha}{1 - \gamma} D_{d^{\pi}_{\rho}}(\pi || \pi^*_k).
\end{align*}
\end{lemma}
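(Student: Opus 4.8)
The plan is to recast the entire inner-loop objective as a regularized linear program over discounted state–action visitation distributions, and then invoke the Bregman pushback property (Lemma \ref{lem:pushback}) directly. The first step is to rewrite $\tilde V^\pi_{k,\alpha}(\rho)$ in visitation-distribution coordinates. Unrolling the definition \eqref{eqn:regu-V} and using the identity $V^\pi_c(\rho)=\frac{1}{1-\gamma}\langle d^\pi_\rho,c\rangle$ together with \eqref{eqn:bregman_equ} yields
\begin{align*}
\tilde V^\pi_{k,\alpha}(\rho)=\frac{1}{1-\gamma}\langle d^\pi_\rho,\tilde c_k\rangle+\frac{\alpha}{1-\gamma}\,\tilde D(d^\pi_\rho\,\|\,d^{\pi_k}_\rho)=\tilde V^\pi_k(\rho)+\frac{\alpha}{1-\gamma}D_{d^\pi_\rho}(\pi\|\pi_k).
\end{align*}
Thus the inner-loop objective is exactly a linear functional $f(d):=\frac{1}{1-\gamma}\langle d,\tilde c_k\rangle$ (hence convex) plus $\frac{\alpha}{1-\gamma}$ times the Bregman divergence $\tilde D(\cdot\,\|\,d^{\pi_k}_\rho)$, where $\tilde D$ is generated by the convex potential $\phi$ of Lemma \ref{lem:Bregman}.

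Next I would exploit the bijection between stationary policies and the convex set $\Dc$ of valid visitation distributions: since $\tilde V^\pi_{k,\alpha}(\rho)=f(d^\pi_\rho)+\frac{\alpha}{1-\gamma}\tilde D(d^\pi_\rho\,\|\,d^{\pi_k}_\rho)$ depends on $\pi$ only through $d^\pi_\rho$, minimizing over $\pi$ is equivalent to minimizing $f(d)+\frac{\alpha}{1-\gamma}\tilde D(d\,\|\,d^{\pi_k}_\rho)$ over $d\in\Dc$, with $d^{\pi^*_k}_\rho$ the minimizer. I then apply Lemma \ref{lem:pushback} with the convex $f$, the Bregman divergence $B_h=\tilde D$, regularization weight $\frac{\alpha}{1-\gamma}$, base point $y=d^{\pi_k}_\rho$, optimizer $x^\star=d^{\pi^*_k}_\rho$, and an arbitrary test point $z=d^\pi_\rho$, giving
\begin{align*}
f(d^{\pi^*_k}_\rho)+\tfrac{\alpha}{1-\gamma}\tilde D(d^{\pi^*_k}_\rho\|d^{\pi_k}_\rho)\le f(d^\pi_\rho)+\tfrac{\alpha}{1-\gamma}\tilde D(d^\pi_\rho\|d^{\pi_k}_\rho)-\tfrac{\alpha}{1-\gamma}\tilde D(d^\pi_\rho\|d^{\pi^*_k}_\rho).
\end{align*}
Translating back via $f(d^\pi_\rho)=\tilde V^\pi_k(\rho)$ and $\tilde D(d^\pi_\rho\|d^{\pi'}_\rho)=D_{d^\pi_\rho}(\pi\|\pi')$ reproduces the claimed inequality exactly.

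The main obstacle is a domain mismatch: Lemma \ref{lem:pushback} is stated for the full probability simplex, whereas $\Dc$ is only a convex polytope inside $\Delta(\Sc\times\Ac)$. I would verify that the pushback derivation goes through verbatim on any closed convex domain, since it rests solely on the first-order optimality condition $\langle\nabla f(x^\star)+\frac{\alpha}{1-\gamma}(\nabla\phi(x^\star)-\nabla\phi(y)),z-x^\star\rangle\ge0$ over the feasible set, combined with convexity of $f$ and the three-point identity for Bregman divergences. A secondary technical point is the well-definedness of $\tilde D(\cdot\,\|\,d^{\pi_k}_\rho)$: because $\pi_k$ is a softmax (or uniform) policy it places positive probability on every action, so $d^{\pi_k}_\rho$ is supported on every reachable state–action pair and hence dominates $\supp(d^\pi_\rho)$ for every $\pi$, keeping the divergence finite on the support where the bound is needed.
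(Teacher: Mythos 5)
Your proof is correct and follows essentially the same route as the paper's: both rewrite $\tilde V^\pi_{k,\alpha}(\rho)=\tilde V^\pi_k(\rho)+\frac{\alpha}{1-\gamma}D_{d^\pi_\rho}(\pi\|\pi_k)$, observe that the first term is linear in $d^\pi_\rho$ and the second is the Bregman divergence $\tilde D(d^\pi_\rho\|d^{\pi_k}_\rho)$ of Lemma \ref{lem:Bregman}, and invoke the pushback property of Lemma \ref{lem:pushback} with $x^\star=d^{\pi^*_k}_\rho$, $y=d^{\pi_k}_\rho$, $z=d^\pi_\rho$. If anything, you are slightly more careful than the paper in flagging that the feasible set is the polytope $\Dc$ rather than the full simplex and that the pushback argument still applies over any closed convex domain.
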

\begin{proof}
Notice that $\tilde{V}_k^{\pi}(\rho)$ is a convex (in fact a linear) function with respect to the discounted state-action visitation distribution $d^{\pi}_{\rho}$ as shown in (\ref{eqn:LP-CMDP}), and $\tilde{D}(d_\rho^{\pi} || d^{\pi_k}_\rho) = D_{d^{\pi}_\rho}(\pi || \pi_k)$ is a Bregman divergence with respect to $d^{\pi}_{\rho}$ according to (\ref{eqn:bregman_equ}) and Lemma \ref{lem:Bregman}. Recall $\pi_k^* \in  \argmin_{\pi} \tilde{V}_{k, \alpha}^{\pi}(\rho) = \argmin_{\pi} \tilde{V}_k^{\pi}(\rho) + \frac{\alpha}{1 - \gamma} D_{d^{\pi}_{\rho}}(\pi || \pi_k)$. Since policies are under the softmax parameterization, we have $\pi_k(a|s) > 0, \forall (s, a) \in \Sc \times \Ac$, i.e., $\pi_k$ is in the interior of the probability simplex. Choosing $x^* = \pi_k^*, y = \pi_k, z=\pi$ in Lemma \ref{lem:pushback}, we then conclude the proof by the pushback property of the mirror descent for a convex optimization problem.
\end{proof}

The policy $\pi^*_k$ is approximated by $\pi_k^{(t_k)}$ (i.e., $\pi_{k+1}$) via NPG, and it enjoys an almost similar pushback property with an additive approximation error term as in the following lemma.
\begin{lemma}\label{lem:push-app}
Let $\alpha, \eta, t_{0:K-1}$ be the same values as in Theorem \ref{thm:PMD-MD}. Then for any $k = 0, 1, \ldots, K-1$, and any policy $\pi$,
\begin{align*}
    \tilde{V}_k^{\pi_{k+1}}(\rho) + \frac{\alpha}{1 - \gamma} D_{d^{\pi_{k+1}}_{\rho}}(\pi_{k+1} || \pi_k) \leq \tilde{V}_k^{\pi}(\rho) + \frac{\alpha}{1 - \gamma} D_{d^{\pi}_{\rho}}(\pi || \pi_k) - \frac{\alpha}{1 - \gamma} D_{d^{\pi}_{\rho}}(\pi || \pi_{k+1}) + \frac{1}{K}\left(1 + \frac{2}{3(1-\gamma)}\right).
\end{align*}
\end{lemma}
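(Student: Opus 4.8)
The plan is to derive Lemma~\ref{lem:push-app} as a perturbation of the \emph{exact} pushback inequality in Lemma~\ref{lem:pushback_inner}, controlling the two places where the optimal policy $\pi_k^*$ gets replaced by its NPG approximation $\pi_{k+1} = \pi_k^{(t_k)}$. The first observation I would record is that the left-hand side is just a regularized value: from the definition \eqref{eqn:regu-V} and the identity $V_c^\pi(\rho) = \frac{1}{1-\gamma}\langle d_\rho^\pi, c\rangle$, the entropy-regularization term relative to $\pi_k$ is exactly the visitation-weighted KL divergence, so
\begin{align*}
\tilde{V}_k^{\pi}(\rho) + \frac{\alpha}{1-\gamma} D_{d^{\pi}_{\rho}}(\pi || \pi_k) = \tilde{V}_{k,\alpha}^{\pi}(\rho),
\end{align*}
for any policy $\pi$. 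Applying this with $\pi = \pi_{k+1}$ identifies the left-hand side of the target inequality with $\tilde{V}_{k,\alpha}^{\pi_{k+1}}(\rho)$, and with $\pi=\pi_k^*$ identifies the left-hand side of Lemma~\ref{lem:pushback_inner} with $\tilde{V}_{k,\alpha}^{\pi_k^*}(\rho)$.

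Given this, it suffices to prove the single auxiliary bound
\begin{align*}
\tilde{V}_{k,\alpha}^{\pi_{k+1}}(\rho) + \frac{\alpha}{1-\gamma} D_{d^{\pi}_{\rho}}(\pi || \pi_{k+1}) \le \tilde{V}_{k,\alpha}^{\pi_k^*}(\rho) + \frac{\alpha}{1-\gamma} D_{d^{\pi}_{\rho}}(\pi || \pi_k^*) + \frac{1}{K}\Big(1 + \frac{2}{3(1-\gamma)}\Big),
\end{align*}
since adding this to the exact pushback inequality (whose right-hand side is $\tilde{V}_k^{\pi}(\rho) + \frac{\alpha}{1-\gamma}D_{d^{\pi}_{\rho}}(\pi||\pi_k)$) yields the claim after cancelling the $\tilde{V}_{k,\alpha}^{\pi_k^*}(\rho) + \frac{\alpha}{1-\gamma}D_{d^{\pi}_{\rho}}(\pi||\pi_k^*)$ terms. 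I would split this auxiliary bound into two independent estimates supplied by Lemma~\ref{lem:cor_cen}. The value gap $\tilde{V}_{k,\alpha}^{\pi_{k+1}}(\rho) - \tilde{V}_{k,\alpha}^{\pi_k^*}(\rho) \le \frac{1}{K}$ is immediate from that lemma. For the divergence gap, note that the first two arguments coincide, so the difference telescopes into a single log-ratio:
\begin{align*}
D_{d^{\pi}_{\rho}}(\pi || \pi_{k+1}) - D_{d^{\pi}_{\rho}}(\pi || \pi_k^*) = \sum_{s \in \Sc} d_\rho^{\pi}(s) \sum_{a \in \Ac} \pi(a|s)\big(\log \pi_k^*(a|s) - \log \pi_{k+1}(a|s)\big).
\end{align*}
Bounding each summand by $\|\log \pi_k^* - \log \pi_{k+1}\|_\infty \le \frac{2}{3\alpha K}$ and using that $d_\rho^{\pi}$ and $\pi(\cdot|s)$ are probability distributions (so the weights sum to one) gives $\frac{\alpha}{1-\gamma}\big(D_{d^{\pi}_{\rho}}(\pi || \pi_{k+1}) - D_{d^{\pi}_{\rho}}(\pi || \pi_k^*)\big) \le \frac{2}{3(1-\gamma)K}$. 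Summing the two estimates produces exactly the additive error $\frac{1}{K}(1 + \frac{2}{3(1-\gamma)})$.

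The only genuinely delicate point is the divergence gap, and the telescoping above is what makes it tractable: because only the reference policy changes while the outer policy $\pi$ is fixed, the entropy and the $\log\pi$ terms cancel and we are left with a quantity that is linear in $\log\pi_k^* - \log\pi_{k+1}$ and hence controllable by the uniform $\ell_\infty$ convergence of the regularized NPG iterates. I do not expect any smoothness or boundedness issue here, since $\pi_k^*$ and $\pi_{k+1}$ both lie in the interior of the simplex under the softmax parameterization, so their logarithms are finite and the $\ell_\infty$ bound from Lemma~\ref{lem:cor_cen} applies directly; the averaging against $d_\rho^\pi \otimes \pi$ loses nothing because that measure has total mass one.
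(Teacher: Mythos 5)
Your proposal is correct and follows essentially the same route as the paper's proof: both combine the exact pushback property of $\pi_k^*$ (Lemma \ref{lem:pushback_inner}) with the two error bounds from Lemma \ref{lem:cor_cen} — the value gap $\tilde{V}_{k,\alpha}^{\pi_{k+1}}(\rho)-\tilde{V}_{k,\alpha}^{\pi_k^*}(\rho)\le 1/K$ and the telescoped KL difference controlled by $\|\log\pi_k^*-\log\pi_{k+1}\|_\infty\le \tfrac{2}{3\alpha K}$. The only difference is presentational (you isolate the two perturbation terms into an auxiliary inequality and add it to the exact pushback, whereas the paper interleaves them in a single chain), so there is nothing substantive to flag.
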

\begin{proof}
According to Lemma \ref{lem:cor_cen}, after $t_k$ iterations of the inner for each macro step $k$, we have\\ $\tilde{V}^{\pi_{k+1}}_{k, \alpha}(\rho) \leq \tilde{V}^{\pi_k^*}_{k, \alpha}(\rho) + \frac{1}{K}$ and $\| \log \pi^*_{k} - \log\pi_{k+1} \|_{\infty} \leq \frac{2}{3 \alpha K}$. It follows that
\begin{align*}
    &\tilde{V}_k^{\pi_{k+1}}(\rho) + \frac{\alpha}{1 - \gamma} D_{d^{\pi_{k+1}}_{\rho}}(\pi_{k+1} || \pi_k) \leq \tilde{V}_k^{\pi_{k}^*}(\rho) + \frac{\alpha}{1 - \gamma} D_{d^{\pi_{k}^*}_{\rho}}(\pi_{k}^* || \pi_k) + \frac{1}{K} \\
    \stackrel{(a)}{\leq}& \tilde{V}_k^{\pi}(\rho) + \frac{\alpha}{1 - \gamma} D_{d^{\pi}_{\rho}}(\pi || \pi_k) - \frac{\alpha}{1 - \gamma} D_{d^{\pi}_{\rho}}(\pi || \pi_{k}^*) + \frac{1}{K}\\
    =& \tilde{V}_k^{\pi}(\rho)  + \frac{\alpha}{1 - \gamma} D_{d^{\pi}_{\rho}}(\pi || \pi_k) - \frac{\alpha}{1 - \gamma} D_{d^{\pi}_{\rho}}(\pi || \pi_{k+1}) + \frac{1}{K} + \frac{\alpha}{1 - \gamma} \sum_{(s, a) \in \Sc \times \Ac} d^{\pi}_{\rho}(s, a) \log \frac{\pi^*_{k}(a|s)}{\pi_{k+1}(a|s)} \\
    \le& \tilde{V}_k^{\pi}(\rho) + \frac{\alpha}{1 - \gamma} D_{d^{\pi}_{\rho}}(\pi || \pi_k) - \frac{\alpha}{1 - \gamma} D_{d^{\pi}_{\rho}}(\pi || \pi_{k+1}) + \frac{1}{K} + \frac{\alpha}{1 - \gamma} \| \log \pi^*_{k} - \log\pi_{k+1} \|_{\infty} \\
    \le& \tilde{V}_k^{\pi}(\rho) + \frac{\alpha}{1 - \gamma} D_{d^{\pi}_{\rho}}(\pi || \pi_k) - \frac{\alpha}{1 - \gamma} D_{d^{\pi}_{\rho}}(\pi || \pi_{k+1}) + \frac{1}{K}\left(1 + \frac{2}{3(1-\gamma)}\right).
\end{align*}
Above, $(a)$ holds due to Lemma \ref{lem:pushback_inner}.
\end{proof}
Comparing Lemma \ref{lem:push-app} with Lemma \ref{lem:pushback_inner}, there is an extra additive term $\frac{1}{K}\left(1 + \frac{2}{3(1-\gamma)}\right)$.

\paragraph{Outer loop analysis} 
The main objective of the analysis of the outer loop is to study the inner product term $\left\langle \lambda_k + \eta' V^{\pi_{k}}_{c_{1:m}}(\rho),  V^{\pi_{k+1}}_{c_{1:m}}(\rho) \right\rangle$ in the Lagrangian by leveraging the update rule of dual variable in each macro step. We first prove Lemma \ref{lem:L_property} about properties of the newly defined Lagrange multipliers.

\begin{lemma}[Restatement of Lemma \ref{lem:L_property}]
\label{lem:L_property_appendix}
Based on the definition of Lagrange multipliers in Algorithm \ref{alg:PMD-PD}, we have
\begin{enumerate}[itemsep=0pt]
    \item For any macro step k, $\lambda_{k, i} \ge 0, \ \forall i \in [m]$.
    \item For any macro step k, $\lambda_{k, i} + \eta' V_{c_i}^{\pi_k}(\rho) \ge 0, \ \forall i \in [m]$.
    \item For macro step 0, $\|\lambda_{0, i}\|^2 \le \|\eta' V_{c_i}^{\pi_0}(\rho)\|^2$; for any macro step $k > 0$, $\|\lambda_{k, i}\|^2 \ge \|\eta' V_{c_i}^{\pi_k}(\rho)\|^2$, $\forall i \in [m]$.
\end{enumerate}
\end{lemma}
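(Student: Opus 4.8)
The plan is to fix a constraint index $i \in [m]$ and adopt the shorthand $v_k := \eta' V^{\pi_k}_{c_i}(\rho)$ throughout (these are scalars, so $\|\cdot\|^2$ in the third item is just the square). With this notation the initialization reads $\lambda_{0,i} = \max\{0, -v_0\}$ and the update \eqref{eq:dual-update-equation}, after an index shift, reads $\lambda_{k,i} = \max\{-v_k,\ \lambda_{k-1,i} + v_k\}$ for $k \geq 1$. I would establish the three claims in the listed order, since each relies on its predecessor.

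For the first property I would induct on $k$. The base case is immediate, as $\lambda_{0,i} = \max\{0,-v_0\} \geq 0$ because $0$ is one of the two arguments of the maximum. For the inductive step, assuming $\lambda_{k-1,i} \geq 0$, I split on the sign of $v_k$: when $v_k \geq 0$ the second argument $\lambda_{k-1,i} + v_k$ is nonnegative, and when $v_k < 0$ the first argument $-v_k$ is positive, so in either case the maximum $\lambda_{k,i}$ is nonnegative. The second property is then essentially definitional: since $\lambda_{k,i}$ is a maximum that explicitly contains the term $-v_k$ (both at initialization, where $\lambda_{0,i} \geq -v_0$, and at every update, where $\lambda_{k,i} \geq -v_k$), we always have $\lambda_{k,i} \geq -v_k$, which is exactly $\lambda_{k,i} + \eta' V^{\pi_k}_{c_i}(\rho) \geq 0$.

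For the third property the base case $k=0$ is handled by a sign split on $v_0$: if $v_0 \geq 0$ then $\lambda_{0,i} = 0$ and the inequality $\lambda_{0,i}^2 \leq v_0^2$ is trivial, while if $v_0 < 0$ then $\lambda_{0,i} = -v_0$ and equality holds. For $k > 0$ I would prove the sharper pointwise bound $\lambda_{k,i} \geq |v_k|$, which instantly yields $\lambda_{k,i}^2 \geq v_k^2$. This bound combines two observations already available: $\lambda_{k,i} \geq -v_k$ directly from the maximum (the second property), and $\lambda_{k,i} \geq \lambda_{k-1,i} + v_k \geq v_k$, where the last step uses $\lambda_{k-1,i} \geq 0$ from the first property. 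Together these give $\lambda_{k,i} \geq \max\{v_k, -v_k\} = |v_k|$.

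There is no serious obstacle here; the entire argument is an elementary consequence of the $\max$-structure of the dual update. The one subtlety worth flagging is that the inequality in the third property reverses direction between $k=0$ and $k>0$: the initialization caps $\lambda_{0,i}$ from above at $|v_0|$ through the $\max$ with $0$, whereas for $k > 0$ the accumulated term $\lambda_{k-1,i} + v_k$ forces $\lambda_{k,i}$ to lie at least $|v_k|$ above zero. Correctly tracking the direction of the bound, and invoking the first two properties in the right order inside the proof of the third, is the only place where care is required.
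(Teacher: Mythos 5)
Your proof is correct and follows essentially the same route as the paper's: induction plus a sign split for nonnegativity, the $\max$-structure for the second property, and the two lower bounds $\lambda_{k,i}\ge -v_k$ and $\lambda_{k,i}\ge\lambda_{k-1,i}+v_k\ge v_k$ for the third. Your packaging of the last step as the single pointwise bound $\lambda_{k,i}\ge|v_k|$ is a slightly cleaner presentation of the same case analysis the paper carries out.
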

\begin{proof}[Proof of Lemma \ref{lem:L_property}]
\begin{enumerate}
    \item Fix $i \in [m]$. Note that $\lambda_{0, i} = \max\{0, -\eta' V_{c_i}^{\pi_0}(\rho)\} \ge 0$ by initialization. Assume $\lambda_{k, i} \ge 0$. If $V_{c_i}^{\pi_{k+1}}(\rho) \ge 0$, then $\lambda_{k+1, i} = \max\{-\eta' V_{c_i}^{\pi_{k+1}}(\rho), \lambda_{k, i} + \eta' V_{c_i}^{\pi_{k+1}}(\rho)\} \ge \lambda_{k, i} + \eta' V_{c_i}^{\pi_{k+1}}(\rho) \ge 0$. If $V_{c_i}^{\pi_{k+1}}(\rho) < 0$, then $\lambda_{k+1, i} = \max\{-\eta' V_{c_i}^{\pi_{k+1}}(\rho), \lambda_{k, i} + \eta' V_{c_i}^{\pi_{k+1}}(\rho)\} \ge -\eta' V_{c_i}^{\pi_{k+1}}(\rho) \ge 0$. Thus, $\lambda_{k+1, i} \ge 0$. The result follows by induction.
    \item Fix $i \in [m]$. Note that by initialization $\lambda_{0, i} + \eta' V_{c_i}^{\pi_0}(\rho) =\max\{0, -\eta' V_{c_i}^{\pi_0}(\rho)\}  + \eta' V_{c_i}^{\pi_0}(\rho) = \max\{0, \eta' V_{c_i}^{\pi_0}(\rho)\} \ge 0$. For $k \ge 0$, we have $\lambda_{k+1, i} = \max\{-\eta' V_{c_i}^{\pi_{k+1}}(\rho), \lambda_{k, i} + \eta' V_{c_i}^{\pi_{k+1}}(\rho)\} \ge -\eta' V_{c_i}^{\pi_{k+1}}(\rho)$.
    \item Fix $i \in [m]$. If $V_{c_i}^{\pi_0}(\rho) \ge 0$, then $\lambda_{0, i} = 0$, thus $\|\lambda_{0, i}\| \le \|\eta' V_{c_i}^{\pi_0}(\rho)\|$. If $V_{c_i}^{\pi_0}(\rho) < 0$, then $\lambda_{0, i} = -\eta' V_{c_i}^{\pi_0}(\rho)$, thus $\|\lambda_{0, i}\| \le \|\eta' V_{c_i}^{\pi_0}(\rho)\|$. For $k \ge 0$, if $V_{c_i}^{\pi_{k+1}}(\rho) \ge 0$, then $\lambda_{k+1, i} = \max\{-\eta' V_{c_i}^{\pi_{k+1}}(\rho), \lambda_{k, i} + \eta' V_{c_i}^{\pi_{k+1}}(\rho)\} \ge \lambda_{k, i} + \eta' V_{c_i}^{\pi_{k+1}}(\rho) \ge \eta' V_{c_i}^{\pi_{k+1}}(\rho)$. If $V_{c_i}^{\pi_{k+1}}(\rho) < 0$, then $\lambda_{k+1, i} = \max\{-\eta' V_{c_i}^{\pi_{k+1}}(\rho), \lambda_{k, i} + \eta' V_{c_i}^{\pi_{k+1}}(\rho)\} \ge -\eta' V_{c_i}^{\pi_{k+1}}(\rho)$. Thus $\|\lambda_{k+1, i}\| \ge \|\eta' V_{c_i}^{\pi_{k+1}}(\rho)\|$.
\end{enumerate}
\end{proof}

\begin{lemma}
\label{lem:lower_bound}
For any $k = 0, 1, \ldots, K-1$,
\begin{align}
    \langle \lambda_{k}, V^{\pi_{k+1}}_{c_{1:m}}(\rho) \rangle \geq \frac{1}{2\eta'} \| \lambda_{k+1} \|^2 - \frac{1}{2\eta'}\| \lambda_{k} \|^2 - \eta' \| V^{\pi_{k+1}}_{c_{1:m}}(\rho) \|^2.
\end{align}
\end{lemma}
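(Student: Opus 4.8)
My plan is to reduce the vector inequality to a scalar inequality that holds for each constraint index separately, since every quantity in the statement — the inner product $\langle \lambda_k, V^{\pi_{k+1}}_{c_{1:m}}(\rho)\rangle$, the two squared norms $\|\lambda_{k+1}\|^2$ and $\|\lambda_k\|^2$, and $\|V^{\pi_{k+1}}_{c_{1:m}}(\rho)\|^2$ — is a sum over $i \in [m]$ of its coordinate contribution. Thus it suffices to establish, for each fixed $i$,
\begin{align*}
\lambda_{k,i}\, V^{\pi_{k+1}}_{c_i}(\rho) \geq \frac{1}{2\eta'}\lambda_{k+1,i}^2 - \frac{1}{2\eta'}\lambda_{k,i}^2 - \eta' \left(V^{\pi_{k+1}}_{c_i}(\rho)\right)^2,
\end{align*}
and then to sum over $i \in [m]$.

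The key step I would use is to exploit the precise form of the dual update \eqref{eq:dual-update-equation}. Writing $v_i := V^{\pi_{k+1}}_{c_i}(\rho)$ for brevity, the update sets $\lambda_{k+1,i} = \max\{-\eta' v_i,\ \lambda_{k,i}+\eta' v_i\}$, so $\lambda_{k+1,i}$ is literally equal to one of its two arguments. Consequently $\lambda_{k+1,i}^2$ equals either $(\eta' v_i)^2$ or $(\lambda_{k,i}+\eta' v_i)^2$; since both are nonnegative, in either case
\begin{align*}
\lambda_{k+1,i}^2 \leq (\lambda_{k,i}+\eta' v_i)^2 + (\eta' v_i)^2 .
\end{align*}
Expanding $(\lambda_{k,i}+\eta' v_i)^2 = \lambda_{k,i}^2 + 2\eta'\lambda_{k,i} v_i + \eta'^2 v_i^2$ and rearranging (dividing by $2\eta' > 0$) yields exactly the scalar inequality above, and summing over $i$ recovers the claim.

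The only subtlety — and the place where a naive approach breaks down — is the bound on $\lambda_{k+1,i}^2$. In standard virtual-queue analyses one hopes for the tighter estimate $\lambda_{k+1,i}^2 \leq (\lambda_{k,i}+\eta' v_i)^2$, but that fails here: when the constraint is strictly satisfied ($v_i<0$) and $\lambda_{k,i}$ is small (so that $\lambda_{k,i}+2\eta' v_i<0$), the $\max$ selects $-\eta' v_i$, whose square can exceed $(\lambda_{k,i}+\eta' v_i)^2$. Retaining the extra nonnegative slack $(\eta' v_i)^2$ is therefore essential, and it is precisely this slack that produces the $-\eta'\|V^{\pi_{k+1}}_{c_{1:m}}(\rho)\|^2$ term in the statement. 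Beyond identifying this, the argument is elementary algebra, so I expect no further obstacle.
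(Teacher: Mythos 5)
Your proof is correct and follows essentially the same route as the paper's: a coordinate-wise reduction using the explicit form of the dual update, with the slack term $\eta'^2 v_i^2$ absorbing the case where the $\max$ selects $-\eta' v_i$. The only difference is presentational — you unify the paper's two-case analysis into the single bound $\lambda_{k+1,i}^2 \leq (\lambda_{k,i}+\eta' v_i)^2 + (\eta' v_i)^2$, which is a slight streamlining of the same argument.
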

\begin{proof}
Recall $\lambda_{k+1, i} = \max\left\{-\eta' V^{\pi_{k+1}}_{c_i}(\rho), \lambda_{k, i} + \eta' V^{\pi_{k+1}}_{c_i}(\rho) \right\}, \forall i \in [m]$.\\
If $\lambda_{k+1, i} = - \eta' V^{\pi_{k+1}}_{c_i}(\rho)$, then 
\begin{align*}
    \frac{1}{2}\lambda_{k+1, i}^2 - \frac{1}{2}\lambda_{k, i}^2 - \eta'^2 (V^{\pi_{k+1}}_{c_i}(\rho))^2 = - \frac{1}{2}\lambda_{k, i}^2 - \frac{\eta'^2}{2} (V^{\pi_{k+1}}_{c_i}(\rho))^2 \leq \eta' \lambda_{k, i} V_{c_i}^{\pi_{k+1}}(\rho),
\end{align*}
which implies $ \lambda_{k, i} V^{\pi_{k+1}}_{c_i}(\rho) \geq \frac{1}{2\eta'} \lambda_{k+1, i}^2 - \frac{1}{2\eta'} \lambda_{k, i}^2 - \eta' (V_{c_i}^{\pi_{k+1}}(\rho))^2.$

If $\lambda_{k+1, i} = \lambda_{k, i} + \eta' V^{\pi_{k+1}}_{c_i}(\rho)$, then
\begin{align*}
    \eta' \lambda_{k, i} V^{\pi_{k+1}}_{c_i}(\rho) &= \frac{1}{2} (\lambda_{k, i} + \eta' V_{c_i}^{\pi_{k+1}}(\rho))^2 - \frac{1}{2} \lambda_{k, i}^2 - \frac{\eta'^2}{2} (V_{c_i}^{\pi_{k+1}}(\rho))^2 \geq \frac{1}{2} \lambda_{k+1, i}^2 - \frac{1}{2} \lambda_{k, i}^2 - \eta'^2 (V_{c_i}^{\pi_{k+1}}(\rho))^2,
\end{align*}
which also implies $ \lambda_{k, i} V^{\pi_{k+1}}_{c_i}(\rho) \geq \frac{1}{2\eta'} \lambda_{k+1, i}^2 - \frac{1}{2\eta'} \lambda_{k, i}^2 - \eta' (V_{c_i}^{\pi_{k+1}}(\rho))^2.$
\end{proof}

\begin{lemma}
\label{lem:inner-product-bound}
For any $k = 0, 1, \ldots, K-1$,
\begin{equation}
\begin{aligned}
    \left\langle \lambda_k + \eta' V^{\pi_{k}}_{c_{1:m}}(\rho),  V^{\pi_{k+1}}_{c_{1:m}}(\rho) \right\rangle \label{eqn:inner-product-bound}
    \geq & \frac{1}{2\eta'} \left(\|\lambda_{k+1}\|^2 - \|\lambda_{k}\|^2\right) + \frac{\eta'}{2} \left(\|V^{\pi_{k}}_{c_{1:m}}(\rho)\|^2 - \|V^{\pi_{k+1}}_{c_{1:m}}(\rho)\|^2\right) \\
    &- \frac{\gamma^2 m \eta'}{(1 - \gamma)^4} D_{d^{\pi_{k+1}}_{\rho}}(\pi_{k+1} || \pi_k). 
\end{aligned}
\end{equation}
\end{lemma}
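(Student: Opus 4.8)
The plan is to decompose the inner product into a Lagrange-multiplier contribution and a cross-value contribution, bound each with an elementary identity, and then convert the resulting value-function discrepancy into the KL term through the linear visitation-distribution representation of the value functions. First I would write
$$\left\langle \lambda_k + \eta' V^{\pi_{k}}_{c_{1:m}}(\rho),  V^{\pi_{k+1}}_{c_{1:m}}(\rho) \right\rangle = \left\langle \lambda_k,  V^{\pi_{k+1}}_{c_{1:m}}(\rho) \right\rangle + \eta' \left\langle V^{\pi_{k}}_{c_{1:m}}(\rho),  V^{\pi_{k+1}}_{c_{1:m}}(\rho) \right\rangle,$$
and lower-bound the first summand directly by Lemma \ref{lem:lower_bound}, which contributes $\frac{1}{2\eta'}(\|\lambda_{k+1}\|^2 - \|\lambda_k\|^2) - \eta' \|V^{\pi_{k+1}}_{c_{1:m}}(\rho)\|^2$.

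For the second summand I would apply the polarization identity $\langle a, b\rangle = \frac{1}{2}(\|a\|^2 + \|b\|^2 - \|a-b\|^2)$ with $a = V^{\pi_{k}}_{c_{1:m}}(\rho)$ and $b = V^{\pi_{k+1}}_{c_{1:m}}(\rho)$. Adding the two contributions, the $\|V^{\pi_{k+1}}_{c_{1:m}}(\rho)\|^2$ terms combine with coefficient $-\eta' + \frac{\eta'}{2} = -\frac{\eta'}{2}$, so the bound collapses to $\frac{1}{2\eta'}(\|\lambda_{k+1}\|^2 - \|\lambda_k\|^2) + \frac{\eta'}{2}(\|V^{\pi_{k}}_{c_{1:m}}(\rho)\|^2 - \|V^{\pi_{k+1}}_{c_{1:m}}(\rho)\|^2) - \frac{\eta'}{2}\|V^{\pi_{k}}_{c_{1:m}}(\rho) - V^{\pi_{k+1}}_{c_{1:m}}(\rho)\|^2$. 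Everything except the last term already matches the target, so the remaining task is to show $\frac{\eta'}{2}\|V^{\pi_{k}}_{c_{1:m}}(\rho) - V^{\pi_{k+1}}_{c_{1:m}}(\rho)\|^2 \le \frac{\gamma^2 m \eta'}{(1-\gamma)^4} D_{d^{\pi_{k+1}}_{\rho}}(\pi_{k+1} || \pi_k)$.

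To control that residual I would use the exact linear representation $V^{\pi}_{c_i}(\rho) = \frac{1}{1-\gamma}\langle d^{\pi}_{\rho}, c_i\rangle$ together with $|c_i(s,a)| \le 1$, so that Hölder's inequality gives the per-coordinate estimate $|V^{\pi_{k}}_{c_i}(\rho) - V^{\pi_{k+1}}_{c_i}(\rho)| \le \frac{1}{1-\gamma}\|d^{\pi_k}_\rho - d^{\pi_{k+1}}_\rho\|_1$. Squaring and summing over the $m$ constraints yields $\|V^{\pi_{k}}_{c_{1:m}}(\rho) - V^{\pi_{k+1}}_{c_{1:m}}(\rho)\|^2 \le \frac{m}{(1-\gamma)^2}\|d^{\pi_k}_\rho - d^{\pi_{k+1}}_\rho\|_1^2$. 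I would then invoke Lemma \ref{lem:distance}, retaining only the term $D_{d^{\pi_{k+1}}_{\rho}}(\pi_{k+1} || \pi_k)$ from the minimum, to get $\|d^{\pi_k}_\rho - d^{\pi_{k+1}}_\rho\|_1^2 \le \frac{2\gamma^2}{(1-\gamma)^2} D_{d^{\pi_{k+1}}_{\rho}}(\pi_{k+1} || \pi_k)$; substituting and multiplying by $\frac{\eta'}{2}$ produces exactly the coefficient $\frac{\gamma^2 m \eta'}{(1-\gamma)^4}$.

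The only mildly delicate point is the passage from a difference of value functions to a KL divergence, since value functions are not manifestly related to policy divergences. However, this dissolves cleanly because $V^{\pi}_c$ is exactly linear in the visitation distribution $d^{\pi}_\rho$, and Lemma \ref{lem:distance} already packages the Pinsker-type comparison between $\|d^{\pi_k}_\rho - d^{\pi_{k+1}}_\rho\|_1$ and the KL divergence. Hence no genuine obstacle arises: the proof is a short chaining of the polarization identity, Lemma \ref{lem:lower_bound}, the Hölder bound, and Lemma \ref{lem:distance}, with the constants arranged to match.
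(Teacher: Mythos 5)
Your proposal is correct and follows essentially the same route as the paper's proof: the polarization identity for the $\eta'\langle V^{\pi_k}_{c_{1:m}}(\rho), V^{\pi_{k+1}}_{c_{1:m}}(\rho)\rangle$ term, Lemma \ref{lem:lower_bound} for the $\langle\lambda_k, V^{\pi_{k+1}}_{c_{1:m}}(\rho)\rangle$ term, and the linear visitation representation plus Lemma \ref{lem:distance} to convert $\frac{\eta'}{2}\|V^{\pi_k}_{c_{1:m}}(\rho)-V^{\pi_{k+1}}_{c_{1:m}}(\rho)\|^2$ into the KL term with the stated constant. If anything, you make explicit the invocation of Lemma \ref{lem:lower_bound}, which the paper leaves implicit in its ``combining'' step.
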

\begin{proof}
Notice that 
\begin{align}
\label{eqn:decompose_inner}
    \langle \eta' V^{\pi_{k}}_{c_{1:m}}(\rho),  V^{\pi_{k+1}}_{c_{1:m}}(\rho) \rangle = \frac{\eta'}{2} \|V^{\pi_{k}}_{c_{1:m}}(\rho)\|^2 + \frac{\eta'}{2}\|V^{\pi_{k+1}}_{c_{1:m}}(\rho)\|^2 - \frac{\eta'}{2} \|V^{\pi_{k}}_{c_{1:m}}(\rho) - V^{\pi_{k+1}}_{c_{1:m}}(\rho)\|^2.
\end{align}
We bound the last term in the above inequality as below. 

For any $i = 1,2,\ldots, m$, we have
\begin{align*}
   \left|V^{\pi_k}_{c_i}(\rho) - V^{\pi_{k+1}}_{c_i}(\rho)\right|  & = \frac{1}{1-\gamma} \left|\sum_{(s, a) \in \Sc \times \Ac} c_i(s, a) (d_{\rho}^{\pi_k}(s, a) - d_{\rho}^{\pi_{k+1}}(s, a)) \right| \\
    &\leq \frac{1}{1 - \gamma} \|d^{\pi_k}_\rho - d^{\pi_{k+1}}_\rho\|_1 \leq \frac{\gamma \sqrt{2}}{(1 - \gamma)^2} \sqrt{D_{d^{\pi_{k+1}}_{\rho}}(\pi_{k+1} || \pi_k)},
\end{align*}
where the last inequality is due to Lemma \ref{lem:distance}. This implies
\begin{align}
\label{eqn:decompose_inner-2}
    \frac{\eta'}{2} \|V^{\pi_k}_{c_{1:m}}(\rho) - V^{\pi_{k+1}}_{c_{1:m}}(\rho) \|^2 \leq \frac{\gamma^2 m \eta'}{(1 - \gamma)^4} D_{d^{\pi_{k+1}}_{\rho}}(\pi_{k+1} || \pi_k).
\end{align}
Combining \eqref{eqn:decompose_inner} and \eqref{eqn:decompose_inner-2}, we obtain \eqref{eqn:inner-product-bound}.
\end{proof}

\begin{proof}[\textbf{Proof of Theorem \ref{thm:PMD-MD}: Optimality gap bound}]
Here, we give the proof of the optimality gap bound (\ref{eqn:opt-gap}). 

Take $\pi = \pi^*$ in Lemma \ref{lem:push-app}. Since $\lambda_{k, i} + \eta' V^{\pi_k}_{c_i}(\rho) \geq 0$ by the second property in Lemma \ref{lem:L_property}, and $V^{\pi^*}_{c_i}(\rho) \leq 0$ for any $i \in [m]$, we have
\begin{equation}
\label{eqn:inner_analysis}
\begin{aligned}
    &V_{c_0}^{\pi_{k+1}}(\rho) + \left\langle \lambda_k + \eta' V^{\pi_{k}}_{c_{1:m}}(\rho),  V^{\pi_{k+1}}_{c_{1:m}}(\rho) \right\rangle +  \frac{\alpha}{1 - \gamma} D_{d^{\pi_{k+1}}_{\rho}}\left(\pi_{k+1} || \pi_k\right) \\
    \leq & V_{c_0}^{\pi^*}(\rho) + \frac{\alpha}{1 - \gamma} D_{d^{\pi^*}_{\rho}}(\pi^* || \pi_k) - \frac{\alpha}{1 - \gamma} D_{d^{\pi^*}_{\rho}}(\pi^* || \pi_{k+1}) + \frac{1}{K}\left(1 + \frac{2}{3(1-\gamma)}\right),
\end{aligned}
\end{equation}
where we use the shorthand $V_{c_{1:m}}^{\pi}(\rho) := \left(V_{c_1}^{\pi}(\rho), \dots, V_{c_m}^{\pi}(\rho)\right)$.

Substituting the lower bound of inner product $\langle \lambda_k + \eta' V^{\pi_{k}}_{c_{1:m}}(\rho),  V^{\pi_{k+1}}_{c_{1:m}}(\rho) \rangle$ in (\ref{eqn:inner-product-bound}) from Lemma \ref{lem:inner-product-bound} into (\ref{eqn:inner_analysis}) leads to
\begin{align*}
    & V_{c_0}^{\pi_{k+1}}(\rho) + \frac{1}{2\eta'} \left(\|\lambda_{k+1}\|^2 - \|\lambda_{k}\|^2\right) + \frac{\eta'}{2}\left(\|V^{\pi_{k}}_{c_{1:m}}(\rho)\|^2 - \|V^{\pi_{k+1}}_{c_{1:m}}(\rho)\|^2\right) + \frac{\alpha (1-\gamma)^3 - \gamma^2 m \eta'}{(1 - \gamma)^4} D_{d^{\pi_{k+1}}_{\rho}}(\pi_{k+1} || \pi_k) \\
    \leq& V_{c_0}^{\pi^*}(\rho) + \frac{\alpha}{1 - \gamma} D_{d^{\pi^*}_{\rho}}(\pi^* || \pi_k) - \frac{\alpha}{1 - \gamma} D_{d^{\pi^*}_{\rho}}(\pi^* || \pi_{k+1}) + \frac{1}{K}\left(1 + \frac{2}{3(1-\gamma)}\right).
\end{align*}
When $\alpha = \frac{2 \gamma^2 m \eta'}{(1 - \gamma)^3}$, $\left( \frac{\alpha}{1 - \gamma} - \frac{\gamma^2 m \eta'}{(1 - \gamma)^4} \right) D_{d^{\pi_{k+1}}_{\rho}}(\pi_{k+1} || \pi_k) \geq 0$, and it follows from telescoping that
\begin{align}
    \sum_{k = 1}^{K} V^{\pi_k}_{c_0}(\rho) & \leq K V^{\pi^*}_{c_0}(\rho) + \frac{\alpha}{1 - \gamma} D_{d^{\pi^*}_\rho}(\pi^* || \pi_{0}) - \frac{\alpha}{1 - \gamma} D_{d^{\pi^*}_\rho}(\pi^* || \pi_{K}) + 1 + \frac{2}{3(1-\gamma)} \notag \\
    &\quad + \frac{\eta'}{2} \|V^{\pi_{K}}_{c_{1:m}}(\rho)\|^2 - \frac{\eta'}{2} \|V^{\pi_{0}}_{c_{1:m}}(\rho)\|^2 + \frac{1}{2\eta'} \|\lambda_0\|^2 - \frac{1}{2\eta'} \|\lambda_K\|^2 \label{eqn:thm_first_init} \\
    & \stackrel{(a)}{\leq} K V^{\pi^*}_{c_0}(\rho) + \frac{\alpha}{1 - \gamma} D_{d^{\pi^*}_\rho}(\pi^* || \pi_{0}) - \frac{\alpha}{1 - \gamma} D_{d^{\pi^*}_\rho}(\pi^* || \pi_{K}) + 1 + \frac{2}{3(1-\gamma)} \notag \\
    & \stackrel{(b)}{\leq} K V_{c_0}^{\pi^*}(\rho) + \frac{\alpha}{1 - \gamma} \log(|\mathcal{A}|) + 1 + \frac{2}{3(1-\gamma)}.
\label{eqn:thm_first}
\end{align}
$(a)$ holds due to the third property of Lemma \ref{lem:L_property} and $(b)$ holds since $\pi_0$ is the uniformly distributed policy and thus $D_{d^{\pi^*}_\rho}(\pi^* || \pi_{0}) = \sum_{s \in \Sc} d_{\rho}^{\pi^*}(s) \sum_{a \in \Ac}$ $\pi^*(a|s) \log (|\Ac| \pi^*(a|s)) \le \log(|\Ac|)$. We now get the bound (\ref{eqn:opt-gap}) by dividing by $K$ on both sides.
\end{proof}

\begin{proof}[\textbf{Proof of Theorem \ref{thm:PMD-MD}: Constraint violation bound}]
Here, we give the proof of the constraint violation bound (\ref{eqn:con-vio}). 

For any $i \in [m]$, since $\lambda_{k, i} = \max\{-\eta' V_{c_i}^{\pi_{k}}(\rho), \lambda_{k-1, i} + \eta' V_{c_i}^{\pi_{k}}(\rho)\} \ge \lambda_{k-1, i} + \eta' V_{c_i}^{\pi_{k}}(\rho)$, we have
\begin{align}
\label{eq:cvb-st-1}
    \sum_{k = 1}^{K} V^{\pi_k}_{c_i}(\rho) \leq \frac{\lambda_{K, i} - \lambda_{0, i}}{\eta'} \leq \frac{\lambda_{K, i}}{\eta'} \leq \frac{\|\lambda_K\|}{\eta'}.
\end{align}

To analyze the constraint violation, it therefore suffices to bound the dual variables. Consider the Lagrangian with optimal dual variable $L(\pi, \lambda^*) = V^{\pi}_{c_0}(\rho) + \sum_{i=1}^m \lambda^*_i V^{\pi}_{c_i}(\rho)$, whose minimum value $V^{\pi^*}_{c_0}(\rho)$ is achieved by the optimal policy $\pi^*$. We know
\begin{align*}
    & K V^{\pi^*}_{c_0}(\rho) \stackrel{(a)}{=} K L(\pi^*, \lambda^*) \leq \sum_{k = 1}^{K} L(\pi_k, \lambda^*) = \sum_{k=1}^K V^{\pi_k}_{c_0}(\rho) + \sum_{i = 1}^m \lambda^*_i \sum_{k=1}^K V^{\pi_k}_{c_i}(\rho)  \stackrel{(b)}{\leq} \sum_{k=1}^K V^{\pi_k}_{c_0}(\rho) + \frac{1}{\eta'} \sum_{i = 1}^m \lambda^*_i \lambda_{K, i} \\
    \stackrel{(c)}{\leq}& K V^{\pi^*}_{c_0}(\rho) + \frac{\alpha}{1 - \gamma} \left(D_{d^{\pi^*}_\rho}(\pi^* || \pi_{0}) - D_{d^{\pi^*}_\rho}(\pi^* || \pi_{K})\right) + 1 + \frac{2}{3(1-\gamma)} + \frac{\eta'}{2} \|V^{\pi_{K}}_{c_{1:m}}(\rho)\|^2 - \frac{1}{2\eta'} \|\lambda_K\|^2 + \frac{1}{\eta'} \sum_{i = 1}^m \lambda^*_i \lambda_{K, i}.
\end{align*}
$(a)$ holds due to the complementary slackness, $(b)$ follows from \eqref{eq:cvb-st-1}, and  $(c)$ follows from (\ref{eqn:thm_first_init}) and the third property of Lemma \ref{lem:L_property}. This implies
\begin{align}
    & \frac{1}{2\eta'} \|\lambda_{K}\|^2 - \frac{1}{\eta'} \sum_{i = 1}^m \lambda^*_i \lambda_{K, i} \leq \frac{\alpha}{1 - \gamma} \left(D_{d^{\pi^*}_\rho}(\pi^* || \pi_{0}) - D_{d^{\pi^*}_\rho}(\pi^* || \pi_{K})\right) + 1 + \frac{2}{3(1-\gamma)} + \frac{\eta'}{2} \|V^{\pi_{K}}_{c_{1:m}}(\rho)\|^2 \notag \\
    \stackrel{(d)}{\leq}& \frac{\alpha}{1 - \gamma} \log(|\Ac|) - \frac{(1-\gamma)^3 \alpha}{2 \gamma^2 m} \|V^{\pi_{K}}_{c_{1:m}}(\rho) - V^{\pi^*}_{c_{1:m}}(\rho)\|^2 + 1 + \frac{2}{3(1-\gamma)} + \frac{\eta'}{2} \left\|\left(V^{\pi_{K}}_{c_{1:m}}(\rho) - V^{\pi^*}_{c_{1:m}}(\rho)\right) + V^{\pi^*}_{c_{1:m}}(\rho)\right\|^2 \notag \\
    \stackrel{(e)}{=} & \frac{\alpha}{1 - \gamma} \log(|\Ac|) + 1 + \frac{2}{3(1-\gamma)} + \left(\frac{\eta'}{2} - \frac{\gamma^2 m \eta'^2}{2[\gamma^2 m \eta' - (1 - \gamma)^3 \alpha]}\right) \|V^{\pi^*}_{c_{1:m}}(\rho)\|^2 \notag \\
    & + \frac{\gamma^2 m \eta' - (1-\gamma)^3 \alpha}{2 \gamma^2 m} \left\|V^{\pi_{K}}_{c_{1:m}}(\rho) - V^{\pi^*}_{c_{1:m}}(\rho) + \frac{\gamma^2 m \eta'}{\gamma^2 m \eta' - (1-\gamma)^3 \alpha} V^{\pi^*}_{c_{1:m}}(\rho)\right\|^2,  \label{eqn:complex-equ}
\end{align}
where $(d)$ follows by using the lower bound for $D_{d^{\pi^*}_\rho}(\pi^* || \pi_{K})$ from  \eqref{eqn:decompose_inner-2} (by substituting  $\pi = \pi_K$, $\pi' = \pi^*$), and upper bounding  $D_{d^{\pi^*}_\rho}(\pi^* || \pi_{0}) \le \log(|\Ac|)$. We obtain $(e)$ by the fact that 
\begin{align*}
    -a \|x\|^2 + b \|x + y\|^2 = (b - \frac{b^2}{b-a})\|y\|^2 + (b-a) \|x + \frac{b}{b-a}y\|^2 ,
\end{align*}
and substituting $a = \frac{(1-\gamma)^3 \alpha}{2 \gamma^2 m}, b=\frac{\eta'}{2}, x = V^{\pi_K}_{c_{1:m}}(\rho) - V^{\pi^*}_{c_{1:m}}(\rho), y = V^{\pi^*}_{c_{1:m}}(\rho)$ into the above equation. When $\alpha = \frac{2 \gamma^2 m \eta'}{(1 - \gamma)^3}$, $\frac{\gamma^2 m \eta' - (1-\gamma)^3 \alpha}{2 \gamma^2 m} \leq 0$ and $\frac{\eta'}{2} - \frac{\gamma^2 m \eta'^2}{2[\gamma^2 m \eta' - (1 - \gamma)^3 \alpha]} = \eta'$. It then follows that
\begin{align}
    \frac{1}{2\eta'}\|\lambda^* - \lambda_K\|^2 =& \frac{1}{2\eta'}\|\lambda^*\|^2 + \frac{1}{2\eta'} \|\lambda_{K}\|^2 - \frac{1}{\eta'} \sum_{i = 1}^m \lambda^*_i \lambda_{K, i} \notag\\
    \leq& \frac{1}{2\eta'} \|\lambda^*\|^2 + \frac{\alpha}{1 - \gamma} \log(|\Ac|) + 1 + \frac{2}{3(1-\gamma)} + \eta' \|V^{\pi^*}_{c_{1:m}}(\rho) \|^2 \notag\\
    \leq & \frac{1}{2\eta'} \|\lambda^*\|^2 + \frac{\alpha}{1 - \gamma} \log(|\Ac|) + 1 + \frac{2}{3(1-\gamma)} + \frac{m \eta'}{(1-\gamma)^2}. \label{eqn:lambda_bound}
\end{align}
Using the above bound in \eqref{eq:cvb-st-1}, we get 
\begin{align}
    \sum_{k = 1}^{K} V^{\pi_k}_{c_i}(\rho) \leq \frac{\|\lambda^*\|}{\eta'} + \frac{\|\lambda_{K} - \lambda^*\|}{\eta'} \leq \frac{\|\lambda^*\|}{\eta'} + \sqrt{ \frac{\|\lambda^*\|^2}{\eta'^2} + \frac{2\alpha}{(1 - \gamma)\eta'} \log(|\Ac|) + \frac{2}{\eta'}(1 + \frac{2}{3(1-\gamma)}) + \frac{2m}{(1-\gamma)^2}},
\label{eqn:thm_second}
\end{align}
from which we obtain the  constraint violation upper bound given in (\ref{eqn:con-vio}).
\end{proof}

\begin{proof}[\textbf{Proof of Corollary \ref{cor:PMD-PD}}]
According to (\ref{eqn:lambda_bound}), $\forall k \geq 1$,
\begin{align}
\label{eqn:bound_lambda}
    \|\lambda_k\| \leq \|\lambda^*\| + \sqrt{\|\lambda^*\|^2 + 2 \eta'\left(\frac{\alpha}{1-\gamma} \log(|\Ac|) + 1 + \frac{2}{3(1-\gamma)} + \frac{m \eta'}{(1-\gamma)^2}\right)}.
\end{align}
Recall $T = \sum_{k=0}^{K-1} t_k \leq \frac{1}{1-\gamma} \sum_{k=0}^{K-1} (\log(3 K C_k) + 1)$, where 
\begin{align}
\label{eqn:C_*}
    & 3e C_k = 6e \gamma \left(\frac{1 + \sum_{i=1}^m \lambda_{k, i}}{1-\gamma} + \frac{m \eta'}{(1-\gamma)^2}\right) \leq 6e \gamma \left(\frac{1 + \sqrt{m} \|\lambda_k\|}{1-\gamma} + \frac{m \eta'}{(1-\gamma)^2}\right) \notag \\
    \le& 6e \gamma \left(\frac{1 + \sqrt{m} \left(\|\lambda^*\| + \sqrt{\|\lambda^*\|^2 + 2 \eta'\left(\frac{\alpha}{1-\gamma} \log(|\Ac|) + 1 + \frac{2}{3(1-\gamma)} + \frac{m \eta'}{(1-\gamma)^2}\right)}\right)}{1-\gamma} + \frac{m \eta'}{(1-\gamma)^2}\right) =: C^*,
\end{align}
where $C^*$ does not depend on $K$. 

Therefore, we have $\frac{K}{1-\gamma} \leq T \leq \frac{1}{1-\gamma} K \log(K) + \frac{\log(C^*)}{1 - \gamma}K$, which further implies $\frac{1}{K} \leq \frac{\log(C^* T)}{(1-\gamma)T}$. Now, from Theorem \ref{thm:PMD-MD},
\begin{align*}
    \frac{1}{K} \sum_{k=1}^{K} \left(V_{c_0}^{\pi_k}(\rho) - V_{c_0}^{\pi^*}(\rho)\right) &\le \frac{1}{K} \left(\frac{\alpha \log(|\Ac|)}{1-\gamma} + 1 + \frac{2}{3(1-\gamma)}\right) \le b_1 \frac{m \log(|\Ac|) \log (C^* T)}{(1-\gamma)^5 T}, \\
    \max_{i \in [m]} \left\{\left(\frac{1}{K} \sum_{k=1}^{K} V_{c_{i}}^{\pi_k}(\rho) \right)_{+}\right\} &\leq \frac{1}{K} \left(\frac{\|\lambda^*\|_{2}}{\eta'} + \sqrt{\frac{\|\lambda^*\|^2_{2}}{\eta'^2} + \frac{2\alpha \log(|\Ac|)}{(1 - \gamma)\eta'}  + \frac{2}{\eta'}\left(1 + \frac{2}{3(1-\gamma)}\right) + \frac{2m}{(1-\gamma)^2}}\right) \\
    &\le b_1' \frac{\sqrt{m \log(|\Ac|) \|\lambda^*\|^2} \log (C^* T)}{(1-\gamma)^3T},
\end{align*}
where $b_1$ and $b_1'$ are universal constants.
\end{proof}

\section{Analysis of the PMD-PD-Zero Algorithm (Proof of Theorem \ref{thm:zero})}
\label{sec:zero_appendix}
\begin{lemma}
\label{lem:upper-dual}
Under Assumption \ref{asm:slater}, the optimal dual variables $\lambda^*$ satisfies
\begin{align*}
    \|\lambda^*\| \leq \|\lambda^*\|_1 \leq \frac{2}{\xi(1 - \gamma)}.
\end{align*}
\end{lemma}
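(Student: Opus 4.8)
The plan is to use the strictly feasible Slater point $\overline{\pi}$ to bound the optimal dual value from above, and then combine this with strong duality. First I would recall that the CMDP admits the LP reformulation \eqref{eqn:LP-CMDP}, for which strong duality holds, so $G(\lambda^*) = \min_{\pi} L(\pi, \lambda^*) = V_{c_0}^{\pi^*}(\rho)$. Since $\lambda^* \ge 0$ componentwise and $G(\lambda^*)$ is a minimum of $L(\cdot, \lambda^*)$ over all policies, evaluating the Lagrangian at $\overline{\pi}$ produces an upper bound: $V_{c_0}^{\pi^*}(\rho) = G(\lambda^*) \le L(\overline{\pi}, \lambda^*) = V_{c_0}^{\overline{\pi}}(\rho) + \sum_{i=1}^m \lambda_i^* V_{c_i}^{\overline{\pi}}(\rho)$.

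Next I would invoke Assumption \ref{asm:slater}, which gives $V_{c_i}^{\overline{\pi}}(\rho) \le -\xi$ for every $i \in [m]$. Because each $\lambda_i^* \ge 0$, we have $\sum_{i=1}^m \lambda_i^* V_{c_i}^{\overline{\pi}}(\rho) \le -\xi \sum_{i=1}^m \lambda_i^* = -\xi \|\lambda^*\|_1$, where nonnegativity is also what lets us identify the sum $\sum_i \lambda_i^*$ with the $\ell_1$-norm. Rearranging the displayed inequality then yields $\xi \|\lambda^*\|_1 \le V_{c_0}^{\overline{\pi}}(\rho) - V_{c_0}^{\pi^*}(\rho)$. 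I would bound the right-hand side using boundedness of the objective cost: since $c_0 \in [-1, 1]$, every value function satisfies $|V_{c_0}^{\pi}(\rho)| \le \tfrac{1}{1-\gamma}$, so the difference is at most $\tfrac{2}{1-\gamma}$. This gives $\|\lambda^*\|_1 \le \tfrac{2}{\xi(1-\gamma)}$, and the elementary inequality $\|\lambda^*\| \le \|\lambda^*\|_1$ closes the chain of bounds.

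The argument is essentially routine, and I do not anticipate a substantive obstacle. The only points requiring care are bookkeeping: getting the direction of the duality inequality right (we bound $G(\lambda^*)$ from above by $L(\overline{\pi}, \lambda^*)$, which is valid precisely because $G$ is a minimum over $\pi$, not the reverse), and recognizing that nonnegativity of $\lambda^*$ does double duty — it turns each Slater slack into a useful negative contribution and simultaneously identifies $\sum_i \lambda_i^*$ with $\|\lambda^*\|_1$.
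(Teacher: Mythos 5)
Your proposal is correct and follows essentially the same route as the paper's proof: both identify $V_{c_0}^{\pi^*}(\rho)$ with the optimal Lagrangian value (you via strong duality $G(\lambda^*) = V_{c_0}^{\pi^*}(\rho)$, the paper via complementary slackness and the fact that $\pi^*$ minimizes $L(\cdot,\lambda^*)$, which amount to the same saddle-point fact), then evaluate $L(\cdot,\lambda^*)$ at the Slater point $\overline{\pi}$, use $\lambda^* \ge 0$ together with $V_{c_i}^{\overline{\pi}}(\rho) \le -\xi$ to extract $\xi\|\lambda^*\|_1$, and bound $V_{c_0}^{\overline{\pi}}(\rho) - V_{c_0}^{\pi^*}(\rho) \le \frac{2}{1-\gamma}$ from the cost range. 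No gaps.
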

\begin{proof}
Let $\pi^*$ and $\lambda^*$ achieve the minimax solution of the Lagrangian $L(\pi, \lambda)$. If $V^{\pi^*}_{c_i}(\rho) < 0$ for some $i \in [m]$, it follows that $\lambda^*_i = 0$. 
Due to Assumption \ref{asm:slater},  
\begin{align*}
    V^{\pi^*}_{c_0}(\rho) = V^{\pi^*}_{c_0}(\rho) + \sum_{i = 1}^m \lambda^*_i V^{\pi^*}_{c_i}(\rho) \leq V^{\bar{\pi}}_{c_0}(\rho) + \sum_{i = 1}^m \lambda^*_i V^{\bar{\pi}}_{c_i}(\rho) \leq V^{\bar{\pi}}_{c_0}(\rho) - \xi \sum_{i = 1}^m \lambda^*_i,
\end{align*}
which implies that $ \xi \|\lambda^*\| \leq \xi \|\lambda^*\|_1 = \xi \sum_{i = 1}^m \lambda^*_i \leq V^{\bar{\pi}}_{c_0}(\rho) - V^{\pi^*}_{c_0}(\rho)$. 

Hence, $ \|\lambda^*\| \leq \|\lambda^*\|_1 \leq \frac{V^{\bar{\pi}}_{c_0}(\rho) - V^{\pi^*}_{c_0}(\rho)}{\xi} \leq \frac{2}{\xi(1 - \gamma)}.$
\end{proof}

\begin{theorem}[Restatement of Theorem \ref{thm:zero}]
\label{thm:zero_appendix}
Suppose Assumption 2 holds. Consider Algorithm \ref{alg:PMD-PD}  applied to solve the pessimistic CMDP problem (\ref{eqn:cmdp_zero}) with any $\eta' \in (0, 1]$, $\alpha = \frac{2 \gamma^2 m \eta'}{(1 - \gamma)^3}$, $\eta = \frac{1-\gamma}{\alpha}$, $t_k = \frac{1}{\eta \alpha} \log(3K C_k \gamma)$ with $C_k = 2 \gamma \left(\frac{1+\sum_{i=1}^m \lambda_{k,i}}{1-\gamma} + \frac{m \eta'}{(1-\gamma)^2}\right)$, $\delta = \frac{b}{K}$, where  
\begin{align}
\label{eqn:c_delta}
b := \left(\frac{4}{\xi(1-\gamma)\eta'} + \sqrt{\frac{16}{\xi^2 (1-\gamma)^2 \eta'^2} + \frac{2\alpha}{(1-\gamma)\eta'} \log(|\Ac|) + \frac{2}{\eta'} (1 + \frac{2}{3(1-\gamma)}) + \frac{2m}{(1-\gamma)^2}}\right).
\end{align}
Then, $\forall K \ge \frac{2b}{\xi}$, we have the optimality gap and the constraint violation bounded as follows:
\begin{align*}
    &\frac{1}{K} \sum_{k=1}^{K} \left(V_{c_0}^{\pi_k}(\rho) - V_{c_0}^{\pi^*}(\rho)\right) \le \left(\frac{\alpha}{1-\gamma} \log(|\Ac|) + 1 + \frac{2}{3(1-\gamma)}\right) \frac{1}{K} + \frac{2\delta}{\xi (1 - \gamma)}\\
    &\max_{i \in [m]} \left\{\left(\frac{1}{K} \sum_{k=1}^{K} V_{c_{1:m}}^{\pi_k}(\rho)\right)_{+} \right\} = 0.
\end{align*}
\end{theorem}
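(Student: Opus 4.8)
The plan is to regard the pessimistic program \eqref{eqn:cmdp_zero} as an instance of the CMDP \eqref{eqn:cmdp} in which the $i$-th constraint value $V^{\pi}_{c_i}(\rho)$ is replaced by the shifted value $V^{\pi}_{c_i}(\rho)+\delta$ everywhere it enters the dual update, and then to invoke Theorem~\ref{thm:PMD-MD} verbatim for this instance. Writing $\pi^*_\delta$ for an optimal policy of \eqref{eqn:cmdp_zero} and $\lambda^*_\delta$ for its optimal dual vector, Theorem~\ref{thm:PMD-MD} immediately yields two facts: an optimality-gap bound of the form \eqref{eqn:opt-gap} measured against $\pi^*_\delta$, and a violation bound of the form \eqref{eqn:con-vio} controlling $\max_i\{(\tfrac1K\sum_k (V^{\pi_k}_{c_i}(\rho)+\delta))_+\}$ in terms of $\|\lambda^*_\delta\|$. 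The two quantities I must then reconcile with the original problem are (i) the size of $\|\lambda^*_\delta\|$, needed to force the violation bound down to exactly $\delta$, and (ii) the cost gap $V^{\pi^*_\delta}_{c_0}(\rho)-V^{\pi^*}_{c_0}(\rho)$, needed to convert the pessimistic optimality gap into the stated bound.

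For the zero-violation claim I would first note that under Assumption~\ref{asm:slater} the Slater policy $\overline{\pi}$ satisfies $V^{\overline{\pi}}_{c_i}(\rho)+\delta\le -(\xi-\delta)$, so the pessimistic program has Slater parameter $\xi-\delta$ and Lemma~\ref{lem:upper-dual} gives $\|\lambda^*_\delta\|\le \tfrac{2}{(\xi-\delta)(1-\gamma)}$. The hypothesis $K\ge 2b/\xi$ forces $\delta=b/K\le \xi/2$, hence $\xi-\delta\ge\xi/2$ and $\|\lambda^*_\delta\|\le \tfrac{4}{\xi(1-\gamma)}$. Substituting this into \eqref{eqn:con-vio} and comparing term by term with the definition \eqref{eqn:c_delta} of $b$ shows the right-hand side is at most $b/K=\delta$, i.e. $\max_i\{(\tfrac1K\sum_k (V^{\pi_k}_{c_i}(\rho)+\delta))_+\}\le\delta$. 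A short case split then finishes: if $\tfrac1K\sum_k(V^{\pi_k}_{c_i}(\rho)+\delta)\le 0$ then $\tfrac1K\sum_k V^{\pi_k}_{c_i}(\rho)\le-\delta<0$, while otherwise the bound gives $\tfrac1K\sum_k(V^{\pi_k}_{c_i}(\rho)+\delta)\le\delta$, again yielding $\tfrac1K\sum_k V^{\pi_k}_{c_i}(\rho)\le 0$; in either case the positive part vanishes, proving zero violation of the original constraints.

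For the optimality gap I would construct the feasible interpolant $\pi(\delta)$ promised in the proof outline by working in the visitation-distribution (LP) representation \eqref{eqn:LP-CMDP}, where $\Dc$ is convex. Setting $\beta=\delta/\xi$ and taking $d(\delta)=(1-\beta)d^{\pi^*}_\rho+\beta d^{\overline{\pi}}_\rho\in\Dc$, which corresponds to some policy $\pi(\delta)$, linearity of $\langle d,c_i\rangle$ gives $V^{\pi(\delta)}_{c_i}(\rho)\le(1-\beta)\cdot 0+\beta(-\xi)=-\delta$, so $\pi(\delta)$ is feasible for \eqref{eqn:cmdp_zero}; and since $|V^{\pi}_{c_0}(\rho)|\le\tfrac1{1-\gamma}$, linearity in the objective gives $V^{\pi(\delta)}_{c_0}(\rho)\le V^{\pi^*}_{c_0}(\rho)+\beta\cdot\tfrac{2}{1-\gamma}=V^{\pi^*}_{c_0}(\rho)+\tfrac{2\delta}{\xi(1-\gamma)}$. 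Feasibility of $\pi(\delta)$ then forces $V^{\pi^*_\delta}_{c_0}(\rho)\le V^{\pi(\delta)}_{c_0}(\rho)\le V^{\pi^*}_{c_0}(\rho)+\tfrac{2\delta}{\xi(1-\gamma)}$. Adding and subtracting $V^{\pi^*_\delta}_{c_0}(\rho)$ and combining with the pessimistic optimality-gap bound from Theorem~\ref{thm:PMD-MD} produces exactly $\tfrac1K(\tfrac{\alpha}{1-\gamma}\log|\Ac|+1+\tfrac{2}{3(1-\gamma)})+\tfrac{2\delta}{\xi(1-\gamma)}$, as claimed.

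The convexity and linearity manipulations in the LP picture, and the algebraic check that the violation bound meets the threshold $\delta$, are bookkeeping once $b$ is fixed as in \eqref{eqn:c_delta}. The step I expect to require the most care is the twofold calibration of $\delta$: it must be small enough (via $K\ge 2b/\xi$) that the pessimistic dual bound $\|\lambda^*_\delta\|\le \tfrac4{\xi(1-\gamma)}$ holds and the violation bound does not blow up, yet the \emph{same} $\delta$ must simultaneously keep the objective degradation $\tfrac{2\delta}{\xi(1-\gamma)}=\Oc(1/K)$ through the interpolation. Making these two roles of $\delta$ consistent --- so that shrinking the feasible region enough to guarantee a strict safety margin costs no more than an $\Oc(1/K)$ term in the objective --- is the crux, and is precisely what the explicit choice of $b$ in \eqref{eqn:c_delta} is engineered to achieve.
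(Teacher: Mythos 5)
Your proposal is correct and follows essentially the same route as the paper's proof: the same convex mixture $d^{\pi(\delta)}=\frac{\xi-\delta}{\xi}d^{\pi^*}_\rho+\frac{\delta}{\xi}d^{\overline{\pi}}_\rho$ in the LP picture to get feasibility and the $\frac{2\delta}{\xi(1-\gamma)}$ objective degradation, the same appeal to Lemma~\ref{lem:upper-dual} with Slater parameter $\xi-\delta$ and the condition $K\ge 2b/\xi$ to get $\|\lambda^*_\delta\|\le\frac{4}{\xi(1-\gamma)}$, and the same calibration of $b$ so that the bound \eqref{eqn:con-vio} applied to the pessimistic problem is at most $\delta$, forcing the original violation's positive part to vanish. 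No gaps.
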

\begin{proof}[\textbf{Proof of Theorem \ref{thm:zero}}]
Since $\pi(a|s) = d^{\pi}(s, a)/\sum_{a' \in \Ac} d^{\pi}(s, a')$, we can define a mixed state-action visitation distribution $d^{\pi(\delta)}$ as
\begin{align*}
    d^{\pi(\delta)}(s, a) = \frac{\xi-\delta}{\xi}d^{\pi^*}(s, a) + \frac{\delta}{\xi} d^{\overline{\pi}}(s, a), \ \forall (s, a) \in \Sc \times \Ac.
\end{align*}
It is easy to verify that $\pi(\delta)$ is a feasible solution to the new CMDP formulation (\ref{eqn:cmdp_zero}) since $\forall i \in [m]$,
\begin{align*}
    V_{c_i}^{\pi(\delta)}(\rho) &= \langle c_i, d_{\rho}^{\pi(\delta)} \rangle = \frac{\xi - \delta}{\xi} V_{c_i}^{\pi^*}(\rho) + \frac{\delta}{\xi} V_{c_i}^{\overline{\pi}(\delta)}(\rho) \le 0 + \frac{\delta}{\xi} (-\xi) = -\delta.
\end{align*}
Let $\pi^*(\delta)$ be the optimal policy of the new CMDP problem (\ref{eqn:cmdp_zero}). It implies
\begin{align}
\label{eqn:zero_additional}
    V_{c_0}^{\pi^*(\delta)}(\rho) - V_{c_0}^{\pi^*}(\rho) \le V_{c_0}^{\pi(\delta)}(\rho) - V_{c_0}^{\pi^*}(\rho) \le \frac{\delta}{\xi} (V_{c_0}^{\overline{\pi}}(\rho) - V_{c_0}^{\pi^*}(\rho)) \le \frac{2 \delta}{\xi(1-\gamma)}.
\end{align}
Therefore,
\begin{align*}
    \frac{1}{K} \sum_{k=1}^{K} \left(V_{c_0}^{\pi_k}(\rho) - V_{c_0}^{\pi^*}(\rho)\right) &= \frac{1}{K} \sum_{k=1}^{K} \left[\left(V_{c_0}^{\pi_k}(\rho) - V_{c_0}^{\pi^*(\delta)}(\rho)\right) + \left(V_{c_0}^{\pi^*(\delta)}(\rho) - V_{c_0}^{\pi^*}(\rho)\right)\right] \\
    &\stackrel{(a)}{\le} \left(\frac{\alpha}{1-\gamma} \log(|\Ac|) + 1 + \frac{2}{3(1-\gamma)}\right) \frac{1}{K} + \frac{2 \delta}{\xi (1 - \gamma)}
\end{align*}
$(a)$ holds due to  (\ref{eqn:opt-gap}) from Theorem \ref{thm:PMD-MD} and (\ref{eqn:zero_additional}).

Let $\|\lambda_{\delta}^*\|$ be the optimal dual variable for the pessimistic CMDP problem (\ref{eqn:cmdp_zero}). Now,
\begin{align*}
    \left(\frac{1}{K} \sum_{k=1}^{K} V_{c_{i}}^{\pi_k}(\rho)\right)_{+} &= \left(\frac{1}{K} \sum_{k=1}^{K} \left(V_{c_{i}}^{\pi_k}(\rho) + \delta\right) - \delta\right)_{+} \\
    &\stackrel{(b)}{\leq} \left(\left(\frac{\|\lambda_{\delta}^*\|}{\eta'} + \sqrt{\frac{\|\lambda_{\delta}^*\|^2}{\eta'^2} + \frac{2\alpha}{(1 - \gamma)\eta'} \log(|\Ac|) + \frac{2}{\eta'}(1 + \frac{2}{3(1-\gamma)}) + \frac{2m}{(1-\gamma)^2}}\right)\frac{1}{K} - \delta \right)_{+},
\end{align*}
where $(b)$ holds due to (\ref{eqn:con-vio}) from Theorem \ref{thm:PMD-MD}. According to Lemma \ref{lem:upper-dual},  
\begin{align*}
    \|\lambda_{\delta}^*\| \le \frac{2}{(\xi - \delta)(1-\gamma)}. 
\end{align*}

When $b / K \le \xi/2$, i.e., $K \ge \frac{2b}{\xi}$, choosing
\begin{align*}
    \delta = \left(\frac{4}{\xi(1-\gamma)\eta'} + \sqrt{\frac{16}{\xi^2 (1-\gamma)^2 \eta'^2} + \frac{2\alpha}{(1-\gamma)\eta'} \log(|\Ac|) + \frac{2}{\eta'} (1 + \frac{2}{3(1-\gamma)}) + \frac{2m}{(1-\gamma)^2}}\right)\frac{1}{K}
\end{align*}
concludes the proof. 
\end{proof}

\section{Analysis of the  PMD-PD Algorithm with Sample-based Approximation (Proof of Theorem \ref{thm:PMD-MD-A})}
\label{sec:sample_appendix}

\begin{algorithm}[tb]
\caption{\textbf{Policy Mirror Descent-Primal Dual with Approximation (PMD-PD-A)}}
\label{alg:PMD-PD-A}
\noindent {\textbf Input:} $\rho, \alpha, \eta, \eta', \epsilon, \delta$; \\
\noindent {\textbf Initialization:} Let $\pi_0$ take a random action with a uniform distribution in every state, and $\lambda_{0, i} = \max\{0, -\eta' \hat{V}_{c_i}^{\pi_0}(\rho)\}, \forall i \in [m]$;\\
\For{$k = 0, 1, \dots, K-1$}{
\noindent {\bf\textit{[Inner loop (policy update)]}} \\
\noindent Take $\pi_k^{(0)} = \pi_k$ as the initialized policy and choose $t_k, M_{Q, k}, N_{Q, k}, M_{V, k+1}, N_{V, k+1}$ appropriately according to (\ref{eqn:params}); \\
\For{$t = 0, 1, \dots, t_k-1$}{
    \noindent Generate $M_{Q, k}$ independent trajectories of length $N_{Q, k}$ starting from any $(s, a) \in \Sc \times \Ac$ and estimate
    \begin{align*}
        \hat{Q}_{k,\alpha}^{\pi_k^{(t)}}(s, a) = \tilde{c}_{k}(s, a) + \alpha \log \frac{1}{\pi_k(a|s)} + \frac{1}{M_{Q, k}} \sum_{j=1}^{M_{Q, k}} \sum_{l=1}^{N_{Q, k}-1} \gamma^l \left[\tilde{c}_k(s_l^j, a_l^j) + \alpha \sum_{a'} \pi_k^{(t)}(a'|s_l^j) \log\frac{\pi_k^{(t)}(a'|s_l^j)}{\pi_k(a'|s_l^j)}\right];
    \end{align*}
    \noindent Update the policy according to the NPG updating formula
    \begin{align*}
        \pi_k^{(t+1)}(a|s) = \frac{1}{Z^{(t)}(s)} (\pi_k^{(t)}(a|s))^{1 - \frac{\eta \alpha}{1-\gamma}} \exp(\frac{-\eta \hat{Q}_{k, \alpha}^{\pi_k^{(t)}}(s, a)}{1-\gamma}), \forall (s, a) \in \Sc \times \Ac,
    \end{align*}
    where $Z^{(t)}(s) = \sum_{a' \in \Ac} (\pi_k^{(t)}(a'|s))^{1 - \frac{\eta \alpha}{1-\gamma}} \exp(\frac{-\eta \hat{Q}_{k, \alpha}^{\pi_k^{(t)}}(s, a')}{1-\gamma})$;
    }
\noindent $\pi_{k+1}(a|s) =\pi_k^{(t_k)}(a|s), \forall (s, a) \in \Sc \times \Ac;$ \\
\noindent {\bf\textit{[Outer loop (dual update)]}} \\
\noindent Generate $M_{V, k+1}$ independent trajectories of length $N_{V, k+1}$ starting from $s_0 \sim \rho$ and estimate
\begin{align*}
    \hat{V}_{c_i}^{\pi_{k+1}}(\rho) := \frac{1}{M_{V, k+1}} \sum_{j=1}^{M_{V, k+1}} \sum_{l=0}^{N_{V, k+1}-1} \gamma^l c_i(s_l^j, a_l^j), \ \forall i \in [m];
\end{align*}
$\lambda_{k+1, i} = \max\left\{-\eta' \hat{V}^{\pi_{k+1}}_{c_i}(\rho), \lambda_{k, i} + \eta' \hat{V}^{\pi_{k+1}}_{c_i}(\rho) \right\}$ for each $i = 1, 2, \ldots, m$;
}
\noindent {\textbf Output:} $\overline{\pi} = \frac{1}{K} \sum_{k=1}^{K} \pi_k$.
\end{algorithm}

We present the sample-based NPG-PD with approximation in Algorithm \ref{alg:PMD-PD-A}, and provide the proofs of Theorem \ref{thm:PMD-MD-A}. For a clear exposition of the analysis, we use the big-$O$, big-$\Theta$ and big-$\Omega$ notation by only focusing on the $\epsilon$ and $\delta$-dependent parameters. Throughout the analysis, we will use the following parameters.
\begin{equation}\label{eqn:params}
\begin{aligned}
    & K = \Theta\left(\frac{1}{\epsilon}\right), \quad 
    t_k = \Theta\left(\log\left(\frac{\max(1, \|\lambda_k\|_1)}{\epsilon}\right)\right), \quad
    \delta'_k = \Theta\left(\frac{\delta}{K t_k}\right), \\&
    M_{V, k} = \Theta\left(\frac{\log(1/\delta'_k)}{\epsilon^2}\right), \quad 
    N_{V, k} = \Theta\left(\log_{1/\gamma}\left(\frac{1}{\epsilon} \right) \right), \\&
    M_{Q, k} = \Theta\left(\frac{(\max(1,\|\lambda_k\|_1) + \epsilon t_k )}{\epsilon^2}\log(1/\delta'_k) \right), \quad 
    N_{Q, k} = \Theta\left(\log_{1/\gamma}\left( \frac{\max(1, \|\lambda_k\|_1)}{\epsilon}\right)\right).
\end{aligned}
\end{equation}
Lemma \ref{lem:dual-bound} gives a high probability bound that shows $\|\lambda_k\|_1 = \Oc(1)$. Noting $\|\lambda_k\|_1 = \Oc(1)$ in the above parameter assignments leads to the order of parameters shown in the proof idea of Theorem \ref{thm:PMD-MD-A}.

\subsection{Estimation and Concentration}

We first introduce ``good" events, conditioned on which the remaining analysis is carried out.
\begin{definition}[Good events]
For any macro step $k$, define a ``good event'' $\Ec_k := \bigcap_{j = 0}^k (\Ec_{V, j} \cap \Ec_{Q, j})$, where  
\begin{align*}
  \Ec_{V, j} & := \left\{ |\hat{V}_{c_i}^{\pi_{j}}(\rho) - V_{c_i}^{\pi_j}(\rho) | \leq \Oc(\epsilon), \forall i \in [m] \right\}, \\
    \Ec_{Q, j} & := \left\{\left|\hat{Q}_{j,\alpha}^{\pi_j^{(t)}}(s, a) - \tilde{Q}_{j,\alpha}^{\pi_j^{(t)}}(s, a)\right| \leq \Oc(\epsilon), \forall t = 0, \ldots, t_j - 1, \forall (s, a) \in \Sc \times \Ac \right\}.
\end{align*}
\end{definition}
The following lemma shows that the good events are also high probability events.
\begin{lemma}\label{lem:good}
Under the parameter assignments in (\ref{eqn:params}), $\Ec_{K-1}$ holds with probability $1 - \Oc(\delta)$.
\end{lemma}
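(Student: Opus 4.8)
The plan is to prove Lemma~\ref{lem:good} by establishing, for each individual estimate appearing in $\Ec_{K-1}$, a high-probability accuracy guarantee, and then combining these through a union bound. Every estimation error decomposes into a deterministic \emph{truncation bias} (incurred because each Monte Carlo rollout has finite length $N$ rather than infinite horizon) and a stochastic \emph{sampling error} (incurred because we average only $M$ independent rollouts). Since the rollout lengths $N_{\bullet,k}$, the numbers of rollouts $M_{\bullet,k}$, and the inner-loop count $t_k$ all depend on the random quantity $\|\lambda_k\|_1$, the argument is carried out step by step over the macro steps, conditioning at macro step $k$ on the filtration $\Fc_k$ generated by all trajectories drawn before that step. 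Conditioned on $\Fc_k$, the policy $\pi_k$, the multiplier $\lambda_k$, the modified cost $\tilde c_k$, and hence all step-$k$ parameters are fixed, while the fresh rollouts drawn during macro step $k$ are i.i.d.; this is exactly the setting in which truncation and concentration bounds apply with the \emph{realized} value of $\|\lambda_k\|_1$.

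For the truncation bias I would use that each per-rollout quantity, in conditional expectation, equals the $N$-step truncation of the relevant (regularized) value function, so the discarded tail is geometric. For $\hat V_{c_i}^{\pi_k}(\rho)$ the summand lies in $[-\tfrac{1}{1-\gamma},\tfrac{1}{1-\gamma}]$ since $|c_i|\le 1$, so the tail is at most $\tfrac{\gamma^{N_{V,k}}}{1-\gamma}$, which the choice $N_{V,k}=\Theta(\log_{1/\gamma}(1/\epsilon))$ drives below $\Oc(\epsilon)$. For $\hat Q_{k,\alpha}^{\pi_k^{(t)}}(s,a)$ the deterministic leading term $\tilde c_k(s,a)+\alpha\log\frac{1}{\pi_k(a|s)}$ is identical in $\hat Q$ and $\tilde Q$ and cancels in the difference, so only the discounted rollout part needs control; its per-step magnitude is $\Oc(1+\|\lambda_k\|_1)$ once the entropy bonus $\alpha\sum_{a'}\pi_k^{(t)}(a'|\cdot)\log\frac{\pi_k^{(t)}(a'|\cdot)}{\pi_k(a'|\cdot)}$ is bounded via $\|\log\pi_k^{(t)}-\log\pi_k\|_\infty=\Oc(\|\lambda_k\|_1)$ from the inner-loop analysis (Lemma~\ref{lem:cen-app}). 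The resulting tail is $\Oc\bigl((1+\|\lambda_k\|_1)\gamma^{N_{Q,k}}/(1-\gamma)\bigr)$, made $\Oc(\epsilon)$ by $N_{Q,k}=\Theta(\log_{1/\gamma}(\max(1,\|\lambda_k\|_1)/\epsilon))$.

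For the sampling error I would invoke Hoeffding's inequality (Lemma~\ref{lem:hoeffding}) conditionally on $\Fc_k$. With per-rollout range $R_V=\Oc(\tfrac{1}{1-\gamma})$ for the value estimate and $R_Q=\Oc(\tfrac{1+\|\lambda_k\|_1}{1-\gamma})$ for the action-value estimate, averaging $M$ rollouts gives a deviation exceeding $\Oc(\epsilon)$ with probability at most $2\exp(-\Omega(M\epsilon^2/R^2))$. The choices $M_{V,k}=\Theta(\log(1/\delta'_k)/\epsilon^2)$ and $M_{Q,k}=\Theta((\max(1,\|\lambda_k\|_1)+\epsilon t_k)\log(1/\delta'_k)/\epsilon^2)$ make each failure probability $\Oc(\delta'_k)$, and adding the $\Oc(\epsilon)$ truncation bias yields the $\Oc(\epsilon)$ total accuracy defining $\Ec_{V,k}$ and $\Ec_{Q,k}$.

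Finally I would assemble the union bound: within macro step $j$, $\Ec_{V,j}$ intersects over the $m$ constraints and $\Ec_{Q,j}$ over the $t_j$ inner iterations and the $|\Sc||\Ac|$ state--action pairs, and treating $|\Sc|,|\Ac|,m$ as constants lets $\log(|\Sc||\Ac|m/\delta'_j)=\Oc(\log(1/\delta'_j))$ be absorbed into the $\Theta$ defining $M_{\bullet,j}$, so each component event fails with probability $\Oc(\delta'_j)$. Summing over $j=0,\dots,K-1$ and the $\Oc(t_j)$ per-step estimates gives total failure probability $\Oc\bigl(\sum_{j=0}^{K-1} t_j\,\delta'_j\bigr)=\Oc(\delta)$ once $\delta'_j=\Theta(\delta/(Kt_j))$. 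The main obstacle is the circular coupling between the sample sizes and $\|\lambda_k\|_1$: the parameters are prescribed in terms of a random multiplier norm, so the conditioning on $\Fc_k$ — together with the separate a posteriori guarantee $\|\lambda_k\|_1=\Oc(1)$ from Lemma~\ref{lem:dual-bound} needed to convert the bounds into the $\tilde\Oc(1/\epsilon^3)$ sample budget — must be threaded carefully so that the concentration at macro step $k$ is always applied with $\lambda_k$ frozen and the fresh rollouts independent.
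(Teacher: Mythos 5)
Your proposal is correct and follows essentially the same architecture as the paper's proof: decompose each estimation error into a geometric truncation bias plus a Hoeffding sampling error, condition on the filtration at each macro step so that $\lambda_k$, $\pi_k$, $\tilde c_k$ and all step-$k$ parameters are frozen while the fresh rollouts are i.i.d., and close with a union bound using $\delta'_k=\Theta(\delta/(Kt_k))$. The one place you genuinely diverge is the control of the entropy term $\alpha\sum_{a'}\pi_k^{(t)}(a'|s)\log\bigl(\pi_k^{(t)}(a'|s)/\pi_k(a'|s)\bigr)$ inside the $\hat Q$ rollout, which determines the per-rollout range fed to Hoeffding. The paper bounds it by observing that the single-step KL is dominated by the full discounted sum of (non-negative) KL terms, i.e.\ by $\tilde V^{\pi_k^{(t)}}_{k,\alpha}(s)$ minus the unregularized part, and then iterates the performance-difference bound (Lemma~\ref{lem:cen_perform}) to get $\Oc(\max(1,\|\lambda_k\|_1)+\epsilon t_k)$; you instead bound it by $\alpha\|\log\pi_k^{(t)}-\log\pi_k\|_\infty$ and appeal to Lemma~\ref{lem:cen-app}. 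Your route works, but note that Lemma~\ref{lem:cen-app} only controls $\|\log\pi_k^*-\log\pi_k^{(t+1)}\|_\infty$ for $t\ge 0$; to reach $\|\log\pi_k^{(t)}-\log\pi_k\|_\infty$ you still need a triangle inequality through $\pi_k^*$ together with a separate bound on $\|\log\pi_k^*-\log\pi_k^{(0)}\|_\infty$ (available from the closed form of the regularized optimal policy, giving $\Oc(\|\tilde Q_{k,\alpha}\|_\infty/\alpha)=\Oc(\max(1,\|\lambda_k\|_1))$), and, exactly as in the paper, the application at inner step $t$ presupposes the $\Oc(\epsilon)$ concentration at inner steps $0,\dots,t-1$, so the induction must run over inner iterations as well as macro steps — worth making explicit. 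Your handling of the $\lambda_k$-dependent sample sizes (concentration applied with the realized $\|\lambda_k\|_1$, with Lemma~\ref{lem:dual-bound} used only afterwards for the sample-complexity count) matches the paper.
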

\begin{proof}[Proof of Lemma \ref{lem:good}]
Denote $\Hc_{k, t}$ as the $\sigma$-algebra generated by the samples (random variables) acquired before the $t$-th step in the inner loop of the $k$-th outer loop. Let $\Hc_{k+1, -1} := \Hc_{k, t_{k}-1}$ and let $\Hc_{0, -1}$ be the trivial sigma-algebra. We know $\Ec_{V, k} \in \Hc_{k, 0}$ and $\Ec_{Q, k} \in \Hc_{k, t_k} = \Hc_{k+1, -1}$.

We first consider the concentration of the estimator $\hat{V}^{\pi_k}_{c_i}(\rho)$ conditioned on $\Hc_{k, -1}$. Recall
\begin{align*}
    \hat{V}_{c_i}^{\pi_{k+1}}(\rho) := \frac{1}{M_{V, k+1}} \sum_{j=1}^{M_{V, k+1}} \sum_{l=0}^{N_{V, k+1}-1} \gamma^l c_i(s_l^j, a_l^j), \ \forall i \in [m].
\end{align*}
Note that 
$| \sum_{l=0}^{N_{V, k}} \gamma^{l} c_i(s_l, a_l)| \leq \frac{1}{1 - \gamma}$. By Hoeffding's inequality (Lemma \ref{lem:hoeffding}), with $M_{V, k} = \Theta(\log(1/\delta'_k)/\epsilon^2)$, we can guarantee that $\hat{V}^{\pi_k}_{c_i}(\rho)$ concentrated around $\Eb[\hat{V}_{c_i}^{\pi_{k}}(\rho)| \Hc_{k, -1}]$ with precision $\epsilon$ with probability $1 - \Oc(\delta'_k)$. 
We can also verify that  $|\Eb[\hat{V}_{c_i}^{\pi_{k}}(\rho)| \Hc_{k, -1}] - V_{c_i}^{\pi_k}(\rho) | \leq \frac{\gamma^{N_{V, k}}}{1-\gamma}$. By the choice of $N_{V, k} = \Theta(\log_{1/\gamma}(1/\epsilon))$, we know $|\hat{V}_{c_i}^{\pi_{k}}(\rho) - V_{c_i}^{\pi_k}(\rho) | \leq \Oc(\epsilon)$ with probability $1 - \Oc(\delta'_k)$. 

We then study the concentration of the estimator $\hat{Q}_{k,\alpha}^{\pi_k^{(t)}}(s, a)$. Recall 
\begin{align*}
    \hat{Q}_{k,\alpha}^{\pi_k^{(t)}}(s, a) = \tilde{c}_{k}(s, a) + \alpha \log \frac{1}{\pi_k(a|s)} + \frac{1}{M_{Q, k}} \sum_{j=1}^{M_{Q, k}} \sum_{l=1}^{N_{Q, k}-1} \gamma^{l} \left[\tilde{c}_k(s_l^j, a_l^j) + \alpha \sum_{a'} \pi_k^{(t)}(a'|s_l^j) \log\frac{\pi_k^{(t)}(a'|s_l^j)}{\pi_k(a'|s_l^j)}\right].
\end{align*}
We know
\begin{align*}
    \left|\tilde{c}_k(s, a)\right| = \left|c_0(s, a) + \sum_{i=1}^m (\lambda_{k, i} + \eta' \hat{V}_{c_i}^{\pi_k}(\rho)) c_i(s, a)\right| \le \Oc\left(\max(1, \|\lambda_k\|_1) \right).
\end{align*}
 
For $t = 0$, by the same argument as in the concentration of $\hat{V}_{c_i}^{\pi_k}(\rho)$, we can similarly prove that choosing $M_{Q, k} = \Theta(\frac{\max(1,\|\lambda_k\|_1) + \epsilon t_k }{\epsilon^2}\log(1/\delta'_k))$ and $N_{Q, k} = \Theta(\log_{1/\gamma}( \max(1, \|\lambda_k\|)/\epsilon))$, gives $|\hat{Q}_{k,\alpha}^{\pi_k^{(0)}}(s, a) - \tilde{Q}_{k,\alpha}^{\pi_k^{(0)}}(s, a)| \leq \Oc(\epsilon)$ with probability $1 - \Oc(\delta'_k)$. For $t > 0$, we will prove inequality $|\hat{Q}_{k,\alpha}^{\pi_k^{(t)}}(s, a) - \tilde{Q}_{k,\alpha}^{\pi_k^{(t)}}(s, a)| \leq \Oc(\epsilon)$ holds conditioned on event $\Ec_{V, k}$. Assuming the inequality holds for $1$ to $t-1$, we know
\begin{align*}
    \alpha \sum_{a'} \pi_k^{(t)}(a'|s) \log\frac{\pi_k^{(t)}(a'|s)}{\pi_k(a'|s)} & \leq \tilde{V}^{\pi_k^{(t)}}_{k, \alpha}(s) - \Eb\left[\sum_{l = 0}^\infty \gamma^l \tilde{c}_k(s, a) | s_0 = s \right]\\
    & \leq \tilde{V}^{\pi_k^{(t)}}_{k, \alpha}(s) + \Oc\left(\max(1, \|\lambda_k\|_1)\right) \\
    & \stackrel{(a)}{\leq} \tilde{V}^{\pi_k}_{k, \alpha}(s) + \Oc\left( \max(1, \|\lambda_k\|_1)\right) + \sum_{l=0}^{t-1} \frac{2}{1 - \gamma} \|\hat{Q}^{\pi_k^{(l)}}_{k, \alpha} - Q^{\pi_k^{(l)}}_{k, \alpha} \|_\infty \\
    & \stackrel{(b)}{\leq} \Oc(\max(1, \|\lambda_k\|_1) + t \epsilon) = \Oc(\max(1, \|\lambda_k\|_1) + \epsilon t_k),
\end{align*}
where $(a)$ is obtained by iteratively applying Lemma \ref{lem:cen_perform}, and $(b)$ is true since $\tilde{V}^{\pi_k}_{k, \alpha}(s) = \Theta(\max(1, \|\lambda_k\|_1))$. By the choice of $t_k = \Theta(\log(\max(1, \|\lambda_k\|_1)/\epsilon))$, choosing $M_{Q, k} = \Theta\left(\frac{\max(1,\|\lambda_k\|_1) + \epsilon t_k }{\epsilon^2}\log(1/\delta'_k) \right)$ and $N_{Q, k} = \Theta\left(\log_{1/\gamma}\left( \frac{\max(1, \|\lambda_k\|_1)}{\epsilon}\right)\right)$ gives $|\hat{Q}_{k,\alpha}^{\pi_k^{(t)}}(s, a) - \tilde{Q}_{k,\alpha}^{\pi_k^{(t)}}(s, a)| \leq \Oc(\epsilon)$ with probability $1 - \Oc(\delta'_k)$.

We can conclude the proof by union bound, that $\Ec_{K-1}$ holds with probability at least $1 - \Oc(\sum_{k=0}^{K-1} t_k \delta'_k) = 1 - \Oc(\delta)$.
\end{proof}

\subsection{Proofs of Theorem \ref{thm:PMD-MD-A}}
\paragraph{Inner loop analysis.} The goal of the inner loop in macro step $k$ is to approximately solve the MDP with value $\tilde{V}^{\pi}_{k, \alpha}(\rho)$. Let $\pi^*_{k} \in \argmin_{\pi} \tilde{V}^{\pi}_{k, \alpha}(\rho)$ be an optimal policy.

We then perform a similar inner loop and outer loop analysis as we did for the oracle-based PMD-PD algorithm.
\begin{lemma}\label{lem:push-app-A}
Let $\alpha, \eta, \eta', K, t_k, M_{Q, k}, M_{V, k+1}$ be the same values as in Theorem \ref{thm:PMD-MD-A}. Then for any $k = 0, 1, \ldots, K-1$, and any policy $\pi$, conditioned on event $\Ec_{k}$,
\begin{align*}
    \tilde{V}_k^{\pi_{k+1}}(\rho) + \frac{\alpha}{1 - \gamma} D_{d^{\pi_{k+1}}_{\rho}}(\pi_{k+1} || \pi_k) & \leq \tilde{V}_k^{\pi}(\rho) + \frac{\alpha}{1 - \gamma} D_{d^{\pi}_{\rho}}(\pi || \pi_k) - \frac{\alpha}{1 - \gamma} D_{d^{\pi}_{\rho}}(\pi || \pi_{k+1}) + \Theta(\epsilon).
\end{align*}
\end{lemma}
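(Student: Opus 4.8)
The plan is to mirror the proof of Lemma \ref{lem:push-app} step for step, simply replacing the exact inner-loop guarantee of Lemma \ref{lem:cor_cen} with the approximate guarantee of Lemma \ref{lem:cor_cen_estimate}. The essential observation is that conditioning on the good event $\Ec_k$ supplies exactly the hypothesis required by Lemma \ref{lem:cor_cen_estimate}: since $\Ec_k \subseteq \Ec_{Q, k}$, the event guarantees $\|\hat{Q}_{k,\alpha}^{\pi_k^{(t)}} - \tilde{Q}_{k,\alpha}^{\pi_k^{(t)}}\|_\infty \le \Oc(\epsilon)$ simultaneously for every inner iteration $t = 0, \ldots, t_k - 1$. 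Feeding this precision into Lemma \ref{lem:cor_cen_estimate} yields, on $\Ec_k$, the two approximate guarantees
\[
\tilde{V}^{\pi_{k+1}}_{k, \alpha}(\rho) \le \tilde{V}^{\pi_k^*}_{k, \alpha}(\rho) + \frac{1}{K} + \Oc\!\left(\frac{\epsilon}{(1-\gamma)^2}\right), \qquad \|\log\pi_k^* - \log\pi_{k+1}\|_\infty \le \frac{2}{3\alpha K} + \Oc\!\left(\frac{\epsilon}{\alpha(1-\gamma)^2}\right).
\]

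Next I would note that the exact pushback property of Lemma \ref{lem:pushback_inner} still holds verbatim, since it is a statement purely about the (estimated-cost) optimization problem $\pi_k^* \in \argmin_\pi \tilde{V}^\pi_{k,\alpha}(\rho)$ defined within macro step $k$ and carries no additional sampling error. I would then run the identical chain of inequalities as in Lemma \ref{lem:push-app}: use $\tilde{V}^\pi_{k,\alpha}(\rho) = \tilde{V}^\pi_k(\rho) + \frac{\alpha}{1-\gamma}D_{d^\pi_\rho}(\pi \| \pi_k)$ to convert the value guarantee into a bound comparing $\pi_{k+1}$ with $\pi_k^*$; apply Lemma \ref{lem:pushback_inner} at the test policy $\pi$; rewrite $D_{d^\pi_\rho}(\pi \| \pi_k^*)$ as $D_{d^\pi_\rho}(\pi \| \pi_{k+1})$ plus the cross term $\frac{\alpha}{1-\gamma}\sum_{(s,a)} d^\pi_\rho(s,a)\log\frac{\pi_k^*(a|s)}{\pi_{k+1}(a|s)}$; and finally bound this cross term by $\frac{\alpha}{1-\gamma}\|\log\pi_k^* - \log\pi_{k+1}\|_\infty$.

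The only difference from the exact proof is the bookkeeping of the extra additive terms. The value guarantee contributes $\frac{1}{K} + \Oc(\epsilon/(1-\gamma)^2)$, and the log-policy bound, after multiplication by $\frac{\alpha}{1-\gamma}$, contributes $\frac{2}{3K(1-\gamma)} + \Oc(\epsilon/(1-\gamma)^3)$. Summing, the total slack is $\frac{1}{K}\bigl(1 + \frac{2}{3(1-\gamma)}\bigr) + \Oc(\epsilon/(1-\gamma)^3)$. Because the parameter choice sets $K = \Theta(1/\epsilon)$, the $1/K$ piece is itself $\Theta(\epsilon)$, so the whole additive term collapses to $\Theta(\epsilon)$ (treating $(1-\gamma)$ as constant in the $\epsilon$-focused asymptotics), which is precisely the claimed bound.

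I expect the only genuinely delicate point to be verifying that $\Ec_k$ certifies the $\Oc(\epsilon)$ $Q$-estimation precision uniformly over all $t_k$ inner steps, so that Lemma \ref{lem:cor_cen_estimate} can be invoked with a single error level $\Oc(\epsilon)$; this uniformity is exactly what the union-bound construction in Lemma \ref{lem:good} delivers. Everything else is a routine transcription of the exact argument with the two error terms carried along.
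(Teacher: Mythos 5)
Your proposal is correct and follows essentially the same route as the paper: the paper's proof is precisely "repeat the argument of Lemma \ref{lem:push-app}, invoking the approximate inner-loop guarantee (Lemma \ref{lem:cen-app}, packaged as Lemma \ref{lem:cor_cen_estimate}) in place of the exact one, and absorb the resulting $1/K$ and $\Oc(\epsilon)$ slacks into a single $\Theta(\epsilon)$ term via $K = \Theta(1/\epsilon)$." Your additional observations---that $\Ec_k \subseteq \Ec_{Q,k}$ supplies the uniform $Q$-estimation precision needed to invoke the approximate convergence lemma, and that the pushback property of Lemma \ref{lem:pushback_inner} is unaffected because $\tilde{c}_k$ is a fixed cost function once the estimates are computed---are exactly the details the paper leaves implicit.
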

\begin{proof}
The proof follows the same steps of Lemma \ref{lem:push-app}, but utilizes Lemma \ref{lem:cen-app} instead of Lemma \ref{lem:cen} by $K = \Theta(1/\epsilon)$.
\end{proof}

\paragraph{Outer loop analysis} 
The main objective of the analysis of the outer loop is to study the inner product term $\langle \lambda_k + \eta' \hat{V}^{\pi_{k}}_{c_{1:m}}(\rho),  V^{\pi_{k+1}}_{c_{1:m}}(\rho) \rangle$ in the Lagrangian. Define $\epsilon_k := \hat{V}_{c_{1:m}}^{\pi_k}(\rho) - V_{c_{1:m}}^{\pi_k}(\rho)$, then we have
\begin{align}
\label{eqn:outer_loop_A}
    & \langle \lambda_k + \eta' \hat{V}^{\pi_{k}}_{c_{1:m}}(\rho),  V^{\pi_{k+1}}_{c_{1:m}}(\rho) \rangle 
    = \langle \lambda_k, \hat{V}^{\pi_{k+1}}_{c_{1:m}}(\rho)\rangle + \eta' \langle V^{\pi_{k}}_{c_{1:m}}(\rho), V^{\pi_{k+1}}_{c_{1:m}}(\rho)\rangle + \langle \lambda_k, -\epsilon_{k+1} \rangle + \eta' \langle \epsilon_k, V^{\pi_{k+1}}_{c_{1:m}}(\rho) \rangle.
\end{align}
We first introduce a lemma summarizing the properties of dual variables.
\begin{lemma}
\label{lem:L_property-A}
Based on the definition of Lagrange multipliers in Algorithm \ref{alg:PMD-PD-A}, we have
\begin{enumerate}[itemsep=0pt]
    \item For any macro step k, $\lambda_{k, i} \ge 0, \ \forall i \in [m]$.
    \item For any macro step k, $\lambda_{k, i} + \eta' \hat{V}_{c_i}^{\pi_k}(\rho) \ge 0, \ \forall i \in [m]$.
    \item For macro step 0, $\|\lambda_{0, i}\|^2 \le \|\eta' \hat{V}_{c_i}^{\pi_0}(\rho)\|^2$; for any macro step $k > 0$, $\|\lambda_{k, i}\|^2 \ge \|\eta' \hat{V}_{c_i}^{\pi_k}(\rho)\|^2$, $\forall i \in [m]$.
\end{enumerate}
\end{lemma}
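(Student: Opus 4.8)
\textbf{The plan is to} mirror the proof of Lemma \ref{lem:L_property_appendix} verbatim, substituting the estimated value $\hat{V}_{c_i}^{\pi_k}(\rho)$ for the exact value $V_{c_i}^{\pi_k}(\rho)$ throughout. The key observation is that the dual update in Algorithm \ref{alg:PMD-PD-A}, namely $\lambda_{k+1,i} = \max\{-\eta' \hat{V}_{c_i}^{\pi_{k+1}}(\rho),\ \lambda_{k,i} + \eta' \hat{V}_{c_i}^{\pi_{k+1}}(\rho)\}$ with initialization $\lambda_{0,i} = \max\{0, -\eta' \hat{V}_{c_i}^{\pi_0}(\rho)\}$, has exactly the same functional form as its oracle counterpart \eqref{eq:dual-update-equation}. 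Since the three claimed properties were established for the oracle case using only this max-structure --- and never any property specific to the exact value function --- the same derivation applies pathwise to any realization of the estimates.

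\textbf{Concretely, I would argue as follows.} For property (1), I induct on $k$: the base case $\lambda_{0,i} = \max\{0, -\eta' \hat{V}_{c_i}^{\pi_0}(\rho)\} \ge 0$ is immediate, and assuming $\lambda_{k,i} \ge 0$, a case split on the sign of $\hat{V}_{c_i}^{\pi_{k+1}}(\rho)$ shows $\lambda_{k+1,i} \ge 0$ (when the estimate is nonnegative the second branch is $\ge \lambda_{k,i} \ge 0$; when it is negative the first branch is $-\eta' \hat{V}_{c_i}^{\pi_{k+1}}(\rho) \ge 0$). For property (2), the initialization gives $\lambda_{0,i} + \eta' \hat{V}_{c_i}^{\pi_0}(\rho) = \max\{0, \eta' \hat{V}_{c_i}^{\pi_0}(\rho)\} \ge 0$, and for $k \ge 0$ the update yields $\lambda_{k+1,i} \ge -\eta' \hat{V}_{c_i}^{\pi_{k+1}}(\rho)$ directly from the first branch of the max. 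For property (3), I again split on the sign of $\hat{V}_{c_i}^{\pi_k}(\rho)$: at step $0$ the initialization forces $\lambda_{0,i}$ to be either $0$ or $-\eta' \hat{V}_{c_i}^{\pi_0}(\rho)$, so $\|\lambda_{0,i}\| \le \|\eta' \hat{V}_{c_i}^{\pi_0}(\rho)\|$; for $k > 0$, whichever branch attains the max gives $\lambda_{k,i} \ge \eta' \hat{V}_{c_i}^{\pi_k}(\rho)$ or $\lambda_{k,i} \ge -\eta' \hat{V}_{c_i}^{\pi_k}(\rho)$ respectively, and combined with $\lambda_{k,i} \ge 0$ this yields $\|\lambda_{k,i}\| \ge \|\eta' \hat{V}_{c_i}^{\pi_k}(\rho)\|$.

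\textbf{I do not expect any genuine obstacle here.} The only point worth checking is that every inequality used in the oracle proof is insensitive to replacing $V$ by $\hat{V}$: all manipulations are elementary sign-based arguments on real numbers, and the estimates $\hat{V}_{c_i}^{\pi_k}(\rho)$ are real-valued for every sample realization. In particular, this lemma is a deterministic (pathwise) statement that requires no concentration or high-probability reasoning --- those enter the analysis only later, through the ``good event'' $\Ec_k$ and Lemma \ref{lem:good}, when one must additionally control the gap between $\hat{V}$ and $V$. Thus the proof reduces to observing that the oracle argument transfers unchanged.
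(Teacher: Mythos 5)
Your proposal is correct and matches the paper exactly: the paper's proof of this lemma is a one-line remark that the argument only uses algebraic relations between $\lambda$ and $\hat{V}$, so the proof of Lemma \ref{lem:L_property} carries over verbatim with $V_{c_i}^{\pi}$ replaced by $\hat{V}_{c_i}^{\pi}$. Your case-by-case details are precisely the steps of that earlier proof, and your observation that the statement is deterministic (pathwise) and needs no concentration argument is also the right reading.
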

\begin{proof}
Since the proof only requires algebraic relations between $\lambda$ and $\hat{V}$, we can follow the same steps as in the proof of Lemma \ref{lem:L_property}, by replacing $V_{c_i}^{\pi}$ with $\hat{V}_{c_i}^{\pi}$.
\end{proof}

\begin{lemma}
\label{lem:lower_bound-A}
For any $k = 0, 1, \ldots, K-1$,
\begin{align*}
    \langle \lambda_{k}, \hat{V}^{\pi_{k+1}}_{c_{1:m}}(\rho) \rangle & \geq \frac{1}{2\eta'} \| \lambda_{k+1} \|^2 - \frac{1}{2\eta'}\| \lambda_{k} \|^2 - \eta' \| \hat{V}^{\pi_{k+1}}_{c_{1:m}}(\rho) \|^2 \\
    & = \frac{1}{2\eta'} \| \lambda_{k+1} \|^2 - \frac{1}{2\eta'}\| \lambda_{k} \|^2 - \eta' \left(\| V^{\pi_{k+1}}_{c_{1:m}}(\rho) \|^2 + \|\epsilon_{k+1}\|^2 + 2\langle V^{\pi_{k+1}}_{c_{1:m}}(\rho), \epsilon_{k+1} \rangle \right).
\end{align*}
\end{lemma}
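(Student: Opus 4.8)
The plan is to reduce the claimed bound to the purely algebraic argument already established in Lemma \ref{lem:lower_bound}, and then convert from the estimated value $\hat{V}$ to the true value $V$ by expanding a square.

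First I would establish the inequality on the first line. The key observation is that the dual update in Algorithm \ref{alg:PMD-PD-A}, namely $\lambda_{k+1, i} = \max\{-\eta' \hat{V}^{\pi_{k+1}}_{c_i}(\rho), \lambda_{k, i} + \eta' \hat{V}^{\pi_{k+1}}_{c_i}(\rho)\}$, is identical in form to the update analyzed in Lemma \ref{lem:lower_bound} with $V^{\pi_{k+1}}_{c_i}(\rho)$ simply replaced by its estimate $\hat{V}^{\pi_{k+1}}_{c_i}(\rho)$. Since the proof of Lemma \ref{lem:lower_bound} uses nothing about the quantity appearing in the update beyond its occurrence there---it only splits into the two cases $\lambda_{k+1,i} = -\eta' \hat{V}^{\pi_{k+1}}_{c_i}(\rho)$ and $\lambda_{k+1,i} = \lambda_{k,i} + \eta' \hat{V}^{\pi_{k+1}}_{c_i}(\rho)$, and completes the square in each case---the same computation applied coordinatewise yields $\lambda_{k,i} \hat{V}^{\pi_{k+1}}_{c_i}(\rho) \geq \frac{1}{2\eta'}\lambda_{k+1,i}^2 - \frac{1}{2\eta'}\lambda_{k,i}^2 - \eta' (\hat{V}^{\pi_{k+1}}_{c_i}(\rho))^2$ for every $i \in [m]$. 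Summing over $i$ gives the first line of the statement.

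Next I would derive the equality on the second line directly from the definition $\epsilon_{k+1} := \hat{V}^{\pi_{k+1}}_{c_{1:m}}(\rho) - V^{\pi_{k+1}}_{c_{1:m}}(\rho)$, i.e.\ $\hat{V}^{\pi_{k+1}}_{c_{1:m}}(\rho) = V^{\pi_{k+1}}_{c_{1:m}}(\rho) + \epsilon_{k+1}$. Expanding the squared norm gives $\|\hat{V}^{\pi_{k+1}}_{c_{1:m}}(\rho)\|^2 = \|V^{\pi_{k+1}}_{c_{1:m}}(\rho)\|^2 + \|\epsilon_{k+1}\|^2 + 2\langle V^{\pi_{k+1}}_{c_{1:m}}(\rho), \epsilon_{k+1}\rangle$, and substituting this into the first line produces the second line verbatim.

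There is no substantive obstacle here; the one point to verify with care is that the case analysis in Lemma \ref{lem:lower_bound} never exploited the sign or magnitude of the true value function, so that its repetition with $\hat{V}$ in place of $V$ is fully legitimate. The equality on the second line is a bookkeeping identity whose sole purpose is to isolate the estimation error $\epsilon_{k+1}$, which is later controlled (to order $\Oc(\epsilon)$) on the good event $\Ec_{k+1}$ in the downstream concentration analysis.
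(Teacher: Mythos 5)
Your proposal is correct and matches the paper's proof, which likewise observes that the case analysis in Lemma \ref{lem:lower_bound} is purely algebraic and carries over verbatim with $V_{c_i}^{\pi}(\rho)$ replaced by $\hat{V}_{c_i}^{\pi}(\rho)$. Your explicit expansion of $\|\hat{V}^{\pi_{k+1}}_{c_{1:m}}(\rho)\|^2$ via $\epsilon_{k+1}$ for the second line is the same bookkeeping step the paper leaves implicit.
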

\begin{proof}
The proof can follow the same steps as in the proof of Lemma \ref{lem:lower_bound} by replacing $V_{c_i}^{\pi}(\rho)$ with $\hat{V}_{c_i}^{\pi}(\rho)$.
\end{proof}
\begin{proof}[\textbf{Proof of Theorem \ref{thm:PMD-MD-A}: Optimality gap bound}] 
Here, we give the proof of the optimality gap bound (\ref{eqn:sample-gap}).

Taking $\pi = \pi^*$ in Lemma \ref{lem:push-app-A}, since $\lambda_{k, i} + \eta' \hat{V}^{\pi_k}_{c_i}(\rho) \geq 0$ by the second property in Lemma \ref{lem:L_property-A} and $V^{\pi^*}_{c_i}(\rho) \leq 0$ for any $i \in [m]$, we have
\begin{equation}
\label{eqn:inner_analysis-A}
\begin{aligned}
    &V_{c_0}^{\pi_{k+1}}(\rho) + \left\langle \lambda_k + \eta' \hat{V}^{\pi_{k}}_{c_{1:m}}(\rho),  V^{\pi_{k+1}}_{c_{1:m}}(\rho) \right\rangle +  \frac{\alpha}{1 - \gamma} D_{d^{\pi_{k+1}}_{\rho}}\left(\pi_{k+1} || \pi_k\right) \\
    \leq&  V_{c_0}^{\pi^*}(\rho) + \frac{\alpha}{1 - \gamma} D_{d^{\pi^*}_{\rho}}(\pi^* || \pi_k) - \frac{\alpha}{1 - \gamma} D_{d^{\pi^*}_{\rho}}(\pi^* || \pi_{k+1}) + \Theta(\epsilon).
\end{aligned}
\end{equation}

Plugging (\ref{eqn:decompose_inner}), (\ref{eqn:decompose_inner-2}), and Lemma \ref{lem:lower_bound-A} into (\ref{eqn:outer_loop_A}) leads to
\begin{equation}
\begin{aligned}
    & \left\langle \lambda_k + \eta' \hat{V}^{\pi_{k}}_{c_{1:m}}(\rho),  V^{\pi_{k+1}}_{c_{1:m}}(\rho) \right\rangle \geq \frac{1}{2 \eta'}(\|\lambda_{k+1}\|^2 - \|\lambda_{k}\|^2) + \frac{\eta'}{2}( \|V_{c_{1:m}}^{\pi_k}\|^2 - V_{c_{1:m}}^{\pi_{k+1}}\|^2)\\
    & \quad\quad + \langle \lambda_k, -\epsilon_{k+1} \rangle + \eta' \langle \epsilon_k, V^{\pi_{k+1}}_{c_{1:m}}(\rho)\rangle - \eta' \|\epsilon_{k+1}\|^2 - 2 \eta'\langle V^{\pi_{k+1}}_{c_{1:m}}(\rho), \epsilon_{k+1} \rangle -\frac{\gamma^2 m \eta'}{(1 - \gamma)^4} D_{d^{\pi_{k+1}}_\rho}(\pi_{k+1} || \pi_k).
\label{eqn:inner-product-bound-app}
\end{aligned}
\end{equation}

Substituting the lower bound of inner product $\langle \lambda_k + \eta' \hat{V}^{\pi_{k}}_{c_{1:m}}(\rho),  V^{\pi_{k+1}}_{c_{1:m}}(\rho) \rangle$ in (\ref{eqn:inner-product-bound-app}) into (\ref{eqn:inner_analysis-A}) leads to
\begin{align*}
    & V_{c_0}^{\pi_{k+1}}(\rho) + \frac{1}{2\eta'} \left(\|\lambda_{k+1}\|^2 - \|\lambda_{k}\|^2\right) + \frac{\eta'}{2}\left(\|V^{\pi_{k}}_{c_{1:m}}(\rho)\|^2 - \|V^{\pi_{k+1}}_{c_{1:m}}(\rho)\|^2\right) + \frac{\alpha (1-\gamma)^3 - \gamma^2 m \eta'}{(1 - \gamma)^4} D_{d^{\pi_{k+1}}_{\rho}}(\pi_{k+1} || \pi_k) \\
    \leq& V_{c_0}^{\pi^*}(\rho) + \frac{\alpha}{1 - \gamma} D_{d^{\pi^*}_{\rho}}(\pi^* || \pi_k) - \frac{\alpha}{1 - \gamma} D_{d^{\pi^*}_{\rho}}(\pi^* || \pi_{k+1}) + \epsilon_k.
\end{align*}
Define $\Delta_k := \Theta(\epsilon) + \langle \lambda_k, \epsilon_{k+1} \rangle - \eta' \langle \epsilon_k, V^{\pi_{k+1}}_{c_{1:m}}(\rho)\rangle + \eta' \langle \epsilon_{k+1} + 2 V^{\pi_{k+1}}_{c_{1:m}}(\rho), \epsilon_{k+1} \rangle $. When $\alpha = \frac{2 \gamma^2 m \eta'}{(1 - \gamma)^3}$, $\left( \frac{\alpha}{1 - \gamma} - \frac{\gamma^2 m \eta'}{(1 - \gamma)^4} \right) D_{d^{\pi_{k+1}}_{\rho}}(\pi_{k+1} || \pi_k) \geq 0$. Under event $\Ec_{K'-1}$, it follows from telescoping that
\begin{align}
    \sum_{k = 1}^{K'} V^{\pi_k}_{c_0}(\rho) & \leq K' V^{\pi^*}_{c_0}(\rho) + \frac{\alpha}{1 - \gamma} D_{d^{\pi^*}_\rho}(\pi^* || \pi_{0}) - \frac{\alpha}{1 - \gamma} D_{d^{\pi^*}_\rho}(\pi^* || \pi_{K'}) + \sum_{k=1}^{K'} \Delta_{k-1} \notag \\
    &\quad + \frac{\eta'}{2} \|V^{\pi_{K'}}_{c_{1:m}}(\rho)\|^2 - \frac{\eta'}{2} \|V^{\pi_{0}}_{c_{1:m}}(\rho)\|^2 + \frac{1}{2\eta'} \|\lambda_0\|^2 - \frac{1}{2\eta'} \|\lambda_{K'}\|^2 \label{eqn:sample_middle}\\
    & \stackrel{(a)}{\leq} K V^{\pi^*}_{c_0}(\rho) + \frac{\alpha}{1 - \gamma} D_{d^{\pi^*}_\rho}(\pi^* || \pi_{0}) - \frac{\alpha}{1 - \gamma} D_{d^{\pi^*}_\rho}(\pi^* || \pi_{K'}) + \frac{\eta'm}{(1-\gamma)^2} + \sum_{k=1}^{K'} \Delta_{k-1} \notag \\
    & \stackrel{(b)}{\leq} K' V_{c_0}^{\pi^*}(\rho) + \frac{\alpha}{1 - \gamma} \log(|\mathcal{A}|) + \frac{\eta'm}{(1-\gamma)^2} + \sum_{k=1}^{K'} \Delta_{k-1}, \label{eqn:sample_last}
\end{align}
where $(a)$ holds since $\|V^{\pi_{K}}_{c_{1:m}}(\rho)\|^2 \leq \frac{m}{(1-\gamma)^2}$ and $\|\lambda_0\|^2 \le \frac{\eta'^2 m}{(1-\gamma)^2}$, while $(b)$ holds since $\pi_0$ is the uniformly distributed policy and thus $D_{d^{\pi^*}_\rho}(\pi^* || \pi_{0}) = \sum_{s \in \Sc} d_{\rho}^{\pi^*}(s) \sum_{a \in \Ac}$ $\pi^*(a|s) \log (|\Ac| \pi^*(a|s)) \le \log(|\Ac|)$.  

When event $\Ec_{K-1}$ holds, $\Delta_k = \Oc(\max(1, \|\lambda_k\|_1) \epsilon)$. Letting $K' = K$ in (\ref{eqn:sample_last}) implies that 
\begin{align*}
    \frac{1}{K} \sum_{k = 1}^K \left(V^{\pi_k}_{c_0}(\rho) - V_{c_0}^{\pi^*}(\rho)\right) = \Oc\left(\epsilon + \max_{k}(\|\lambda_k\|_1) \epsilon\right).
\end{align*}
Applying Lemma \ref{lem:dual-bound} that $\|\lambda_k\|_1 = \Oc(1)$ concludes the optimality gap bound.
\end{proof}

\begin{lemma}\label{lem:dual-bound}
When event $\Ec_{K'-1}$ holds, $\|\lambda_{K'}\|_1 = \Oc(1)$.
\end{lemma}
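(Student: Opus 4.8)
The plan is to mirror the dual-variable bound derived in the oracle analysis (equations (\ref{eqn:thm_first_init})--(\ref{eqn:lambda_bound})), but carry the estimation errors $\epsilon_k := \hat{V}_{c_{1:m}}^{\pi_k}(\rho) - V_{c_{1:m}}^{\pi_k}(\rho)$ through every step, and then close a bootstrap argument, since under $\Ec_{K'-1}$ the error terms $\Delta_k$ themselves scale with $\|\lambda_k\|_1$. First I would replay the complementary-slackness step: because $\pi^*,\lambda^*$ form a saddle point, $K' V_{c_0}^{\pi^*}(\rho) = K' L(\pi^*,\lambda^*) \le \sum_{k=1}^{K'} L(\pi_k,\lambda^*) = \sum_{k=1}^{K'} V_{c_0}^{\pi_k}(\rho) + \sum_{i} \lambda^*_i \sum_{k=1}^{K'} V_{c_i}^{\pi_k}(\rho)$. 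The sample-based dual update gives $\lambda_{k,i} \ge \lambda_{k-1,i} + \eta'\hat{V}_{c_i}^{\pi_k}(\rho) = \lambda_{k-1,i}+\eta'\left(V_{c_i}^{\pi_k}(\rho)+\epsilon_{k,i}\right)$, so telescoping yields $\sum_{k=1}^{K'} V_{c_i}^{\pi_k}(\rho) \le \lambda_{K',i}/\eta' + \sum_{k=1}^{K'}|\epsilon_{k,i}|$; under the good event each $|\epsilon_{k,i}| = \Oc(\epsilon)$ and $K'\le K = \Theta(1/\epsilon)$, so this error sum is $\Oc(1)$.

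Next I would substitute the sample-based telescoping inequality (\ref{eqn:sample_middle}) to upper bound $\sum_{k} V_{c_0}^{\pi_k}(\rho)$ by $K' V_{c_0}^{\pi^*}(\rho)$ plus $\Oc(1)$ constants, the error sum $\sum_{k=1}^{K'}\Delta_{k-1}$, and $\tfrac{1}{2\eta'}\|\lambda_0\|^2 - \tfrac{1}{2\eta'}\|\lambda_{K'}\|^2$. Combining the two displays cancels $K' V_{c_0}^{\pi^*}(\rho)$, and completing the square in $\lambda_{K'}-\lambda^*$ exactly as in the derivation of (\ref{eqn:lambda_bound}) produces a bound of the form
\[
\frac{1}{2\eta'}\|\lambda_{K'}-\lambda^*\|^2 \le C_1 + \sum_{k=1}^{K'}\Delta_{k-1} + \Oc\!\left(\|\lambda^*\|_1\, K'\epsilon\right),
\]
where $C_1$ collects the $\gamma,\alpha,m,\eta',\log|\Ac|$, $\|\lambda_0\|^2$, and $\|\lambda^*\|^2$ terms, all $\Oc(1)$ using $\|\lambda^*\|_1 = \Oc(1)$ from Lemma \ref{lem:upper-dual} and $\|\lambda_0\|_1 = \Oc(1)$ from the initialization.

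The hard part will be the self-reference: under $\Ec_{K'-1}$ one only has $\Delta_k = \Oc(\max(1,\|\lambda_k\|_1)\epsilon)$, so $\sum_{k=1}^{K'}\Delta_{k-1} = \Oc\!\left(\epsilon\sum_{k=0}^{K'-1}\max(1,\|\lambda_k\|_1)\right) = \Oc\!\left(\max(1,\max_{k<K'}\|\lambda_k\|_1)\right)$ after using $K'\epsilon=\Oc(1)$. Thus the displayed bound has the shape $\|\lambda_{K'}\|^2 \le C_1' + C_2'\max(1,B)$ with $B := \max_{0\le k\le K'-1}\|\lambda_k\|_1$ and universal constants $C_1',C_2'$, which cannot be read off directly. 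I would resolve this by induction on $K'$: assuming $\|\lambda_k\|_1 \le B^*$ for all $k<K'$ gives $\|\lambda_{K'}\|_1 \le \sqrt{m}\,\|\lambda_{K'}\| \le \sqrt{m}\sqrt{C_1'+C_2'\max(1,B^*)}$, and since the right-hand side grows only like $\sqrt{B^*}$, any sufficiently large universal constant $B^*$ (the larger root of $(B^*)^2 - mC_2'B^* - mC_1' \ge 0$) satisfies $\sqrt{m}\sqrt{C_1'+C_2'B^*}\le B^*$, closing the step; the base case $\|\lambda_0\|_1 = \Oc(1)$ holds by initialization. This delivers $\|\lambda_{K'}\|_1 \le B^* = \Oc(1)$, which feeds back into the optimality-gap bound already established in the proof of Theorem \ref{thm:PMD-MD-A}.
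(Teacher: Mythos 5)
Your proposal follows essentially the same route as the paper's proof: complementary slackness at the saddle point, the telescoped dual update to convert $\sum_k V_{c_i}^{\pi_k}(\rho)$ into $\lambda_{K',i}/\eta'$ plus an $\Oc(1)$ error sum, substitution of the sample-based telescoping bound \eqref{eqn:sample_middle}, and the same completion of squares to isolate $\|\lambda_{K'}-\lambda^*\|^2$. The one genuine difference is your final bootstrap step, and it is a worthwhile addition rather than a detour: the paper simply asserts that $|\sum_{k=1}^{K'}\Delta_{k-1}| = \Oc(1)$ under $\Ec_{K'-1}$, but since $\Delta_{k-1} = \Oc(\max(1,\|\lambda_{k-1}\|_1)\epsilon)$, that assertion quietly presupposes a uniform bound on the earlier dual variables --- exactly the quantity the lemma is establishing. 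Your induction on $K'$ with a fixed-point constant $B^*$ satisfying $\sqrt{m}\sqrt{C_1'+C_2'B^*}\le B^*$ (possible because the self-referential term enters only under a square root) makes this circularity explicit and resolves it, with the constants $C_1',C_2'$ uniform in $K'$ because $K'\epsilon \le K\epsilon = \Oc(1)$. One small bookkeeping point: your bound is on $\|\lambda_{K'}-\lambda^*\|$, so the induction step should read $\|\lambda_{K'}\|_1 \le \sqrt{m}\left(\|\lambda^*\| + \sqrt{2\eta'(C_1'+C_2'\max(1,B^*))}\right)$; this does not affect the argument since the right-hand side still grows sublinearly in $B^*$.
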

\begin{proof}[Proof of Lemma \ref{lem:dual-bound}]
Consider the Lagrangian with optimal dual variable $L(\pi, \lambda^*) = V^{\pi}_{c_0}(\rho) + \sum_{i=1}^m \lambda^*_i V^{\pi}_{c_i}(\rho)$, whose minimum value $V^{\pi^*}_{c_0}(\rho)$ is achieved by the optimal policy $\pi^*$. We know
\begin{align*}
    & K' V^{\pi^*}_{c_0}(\rho) \stackrel{(a)}{=} K' L(\pi^*, \lambda^*) \leq \sum_{k = 1}^{K'} L(\pi_k, \lambda^*) = \sum_{k=1}^{K'} V^{\pi_k}_{c_0}(\rho) + \sum_{i = 1}^m \lambda^*_i \sum_{k=1}^{K'} V^{\pi_k}_{c_i}(\rho) \\
    \leq& \sum_{k=1}^{K'} V^{\pi_k}_{c_0}(\rho) + \frac{1}{\eta'} \sum_{i = 1}^m \lambda^*_i \lambda_{K', i} - \sum_{i = 1}^m \sum_{k = 1}^{K'} \lambda_i^* \epsilon_{k,i} \\
    \stackrel{(b)}{\leq}& K' V^{\pi^*}_{c_0}(\rho) + \frac{\alpha}{1 - \gamma} \left(D_{d^{\pi^*}_\rho}(\pi^* || \pi_{0}) - D_{d^{\pi^*}_\rho}(\pi^* || \pi_{K'})\right) + \frac{\eta'}{2} \|V^{\pi_{K'}}_{c_{1:m}}(\rho)\|^2 - \frac{1}{2\eta'} \|\lambda_{K'}\|^2\\
    &+ \sum_{k=1}^{K'} \Delta_{k-1} + \frac{1}{\eta'} \sum_{i = 1}^m \lambda^*_i \lambda_{K', i} - \sum_{i = 1}^m \sum_{k = 1}^{K'} \lambda_i^* \epsilon_{k,i} + \frac{m \eta'}{2(1-\gamma)^2}.
\end{align*}
$(a)$ holds due to  complementary slackness, and $(b)$ holds because of (\ref{eqn:sample_middle}) and $\|\lambda_0\|^2 \le \frac{\eta'^2 m}{(1-\gamma)^2}$. Note that
\begin{align*}
    & \frac{\alpha}{1 - \gamma} \left(D_{d^{\pi^*}_\rho}(\pi^* || \pi_{0}) - D_{d^{\pi^*}_\rho}(\pi^* || \pi_{K'})\right) + \frac{\eta'}{2} \|V^{\pi_{K'}}_{c_{1:m}}(\rho)\|^2  \\
    \leq& \frac{\alpha}{1 - \gamma} \log(|\Ac|) - \frac{(1-\gamma)^3 \alpha}{2 \gamma^2 m} \|V^{\pi_{K'}}_{c_{1:m}}(\rho) - V^{\pi^*}_{c_{1:m}}(\rho)\|^2 + \frac{\eta'}{2} \left\|\left(V^{\pi_{K'}}_{c_{1:m}}(\rho) - V^{\pi^*}_{c_{1:m}}(\rho)\right) + V^{\pi^*}_{c_{1:m}}(\rho)\right\|^2 \\
    =& \frac{\alpha}{1 - \gamma} \log(|\Ac|) + \left(\frac{\eta'}{2} - \frac{\gamma^2 m \eta'^2}{2[\gamma^2 m \eta' - (1 - \gamma)^3 \alpha]}\right) \|V^{\pi^*}_{c_{1:m}}(\rho)\|^2 \\
    &+ \frac{\gamma^2 m \eta' - (1-\gamma)^3 \alpha}{2 \gamma^2 m} \left\|V^{\pi_{K'}}_{c_{1:m}}(\rho) - V^{\pi^*}_{c_{1:m}}(\rho) + \frac{\gamma^2 m \eta'}{\gamma^2 m \eta' - (1-\gamma)^3 \alpha} V^{\pi^*}_{c_{1:m}}(\rho)\right\|^2.
\end{align*}
When $\alpha = \frac{2 \gamma^2 m \eta'}{(1 - \gamma)^3}$, $\frac{\gamma^2 m \eta' - (1-\gamma)^3 \alpha}{2 \gamma^2 m} \leq 0$ and $\frac{\eta'}{2} - \frac{\gamma^2 m \eta'^2}{2[\gamma^2 m \eta' - (1 - \gamma)^3 \alpha]} = \eta'$, we can derive that
\begin{align*}
    & \frac{1}{2\eta'} \|\lambda_{K'}\|^2 - \frac{1}{\eta'} \sum_{i = 1}^m \lambda^*_i \lambda_{K', i} \leq \frac{\alpha}{1 - \gamma} \log(|\Ac|) + \eta' \|V^{\pi^*}_{c_{1:m}}(\rho)\|^2 + \sum_{k=1}^{K'} \Delta_{k-1} - \sum_{i = 1}^m \sum_{k = 1}^{K'} \lambda_i^* \epsilon_{k,i}.
\end{align*}
It then follows that
\begin{align*}
    &\frac{1}{2\eta'}\|\lambda^* - \lambda_{K'}\|^2 = \frac{1}{2\eta'}\|\lambda^*\|^2 + \frac{1}{2\eta'} \|\lambda_{K'}\|^2 - \frac{1}{\eta'} \sum_{i = 1}^m \lambda^*_i \lambda_{K', i} \\
    \leq& \frac{1}{2\eta'} \|\lambda^*\|^2 + \frac{\alpha}{1 - \gamma} \log(|\Ac|) + \frac{\eta' m}{(1 - \gamma)^2} + \sum_{k=1}^{K'} \Delta_{k-1} - \sum_{i = 1}^m \sum_{k = 1}^{K'} \lambda_i^* \epsilon_{k,i},
\end{align*}
which implies
\begin{align*}
    \|\lambda^* - \lambda_{K'}\| & \leq \sqrt{ \|\lambda^*\|^2 + \frac{\alpha}{1 - \gamma} \log(|\Ac|) + \frac{m}{ (1 - \gamma)^2} + |\sum_{k=1}^{K'} \Delta_{k-1}| + | \sum_{i = 1}^m \sum_{k = 1}^{K'} \lambda_i^* \epsilon_{k,i}|}.
\end{align*}
Under event $\Ec_{K'-1}$, we know $|\sum_{k=1}^{K'} \Delta_{k-1}| = \Oc(1)$ and $| \sum_{i = 1}^m \sum_{k = 1}^{K'} \lambda_i^* \epsilon_{k,i}| = \Oc(1)$, which concludes the proof.
\end{proof}

\begin{proof}[\textbf{Proof of Theorem \ref{thm:PMD-MD-A}: Constraint violation bound}]
Here, we give the proof of the constraint violation bound (\ref{eqn:sample-vio}). For any $i \in [m]$, since $\lambda_{k, i} = \max\{-\eta' \hat{V}_{c_i}^{\pi_{k}}(\rho), \lambda_{k-1, i} + \eta' \hat{V}_{c_i}^{\pi_{k}}(\rho)\} \ge \lambda_{k-1, i} + \eta' \hat{V}_{c_i}^{\pi_{k}}(\rho)$, we have
\begin{align*}
    \sum_{k = 1}^{K} V^{\pi_k}_{c_i}(\rho) = \sum_{k = 1}^{K} \hat{V}^{\pi_k}_{c_i}(\rho) - \sum_{k = 1}^{K}\epsilon_{k, i}  \leq \frac{\lambda_{K, i}}{\eta'} - \sum_{k = 1}^{K} \epsilon_{k, i}.
\end{align*}
To analyze the constraint violation, it therefore suffices to bound the dual variables. By Lemma \ref{lem:dual-bound}, which gives an upper bound on the dual variables, under event $\Ec_{K - 1}$, the constraint violation bound in Theorem \ref{thm:PMD-MD-A} can be derived following
\begin{align*}
    & \frac{1}{K} \sum_{k = 1}^{K} V^{\pi_k}_{c_i}(\rho) \leq \frac{\lambda_{K, i}}{K \eta'} - 
    \frac{1}{K}\sum_{k = 1}^K \epsilon_{k, i} \stackrel{(a)}{\leq} \frac{\|\lambda^*\|}{K \eta'} + \frac{\|\lambda_{K} - \lambda^*\|}{K \eta'} - \frac{1}{K}\sum_{k = 1}^K \epsilon_{k, i} = \Oc(\epsilon),
\label{eqn:thm_second}
\end{align*}
where $(a)$ is by $\lambda_{K, i} \leq \|\lambda_{K}\|$ and $\|\lambda_K\| \leq \|\lambda^*\| + \|\lambda_K - \lambda^*\|$, $(b)$ is by Lemma \ref{lem:dual-bound}, $K = \Theta(1/\epsilon)$ and $-\sum_{k=1}^K \epsilon_{k, i} = \Oc(1)$ under $\Ec_{K-1}$.
\end{proof}

\section{Additional Experimental Results for Sample-based Algorithms}
\label{sec:exp_appendix}
In this section, we demonstrate the performance advantage of the sample-based PMD-PD algorithm (Algorithm \ref{alg:PMD-PD-A}) in the same tabular CMDP described  in Section \ref{sec:exp} and in a more complex environment Acrobot-v1 \cite{1606.01540}.

\subsection{Tabular CMDP}
We first consider the same tabular CMDP as described in in Section \ref{sec:exp}. Please note that in Figure \ref{fig:simple_cmdp} and Figure \ref{fig:simple_cmdp_log} in Section \ref{sec:exp}, we have already shown the faster global convergence of the PMD-PD algorithm ($\tilde{\Oc}(1/T)$) compared with CRPO \cite{xu2021crpo} and NPG-PD \cite{ding2020natural} ($\Oc(1/\sqrt{T})$), in the scenario where these algorithms have access to an oracle of exact policy evaluation. We here compare the performances of sample-based CRPO, NPG-PD and PMD-PD algorithms, where value functions are estimated from samples, and their performances are illustrated in Figure \ref{fig:simple_cmdp-A} which displays both the optimality gap and the constraint violation versus the number of iterations. The figure exhibits a faster convergence of the optimality gap of the sample-based PMD-PD algorithm, while all the three algorithms  satisfy the constraint after a short period of time.

\begin{figure}
\centering
\begin{subfigure}{0.49\textwidth}
\centering
\includegraphics[width=\textwidth]{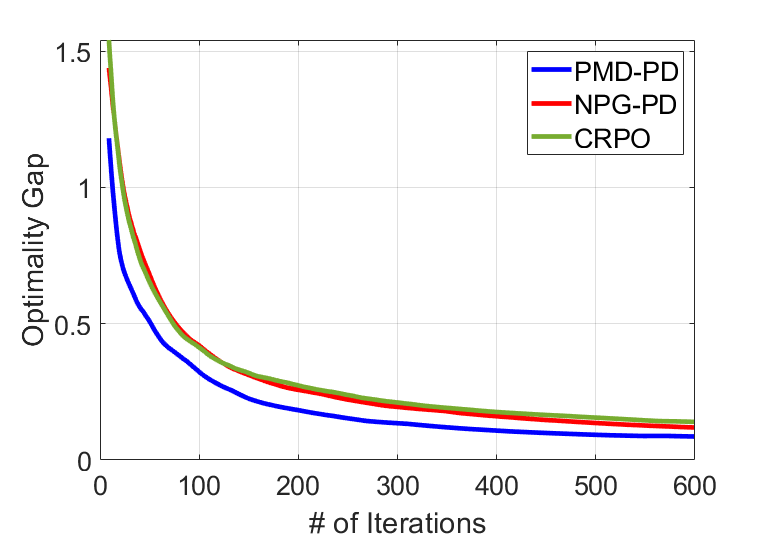}
\caption{}
\end{subfigure}
\begin{subfigure}{0.49\textwidth}
\centering
\includegraphics[width=\textwidth]{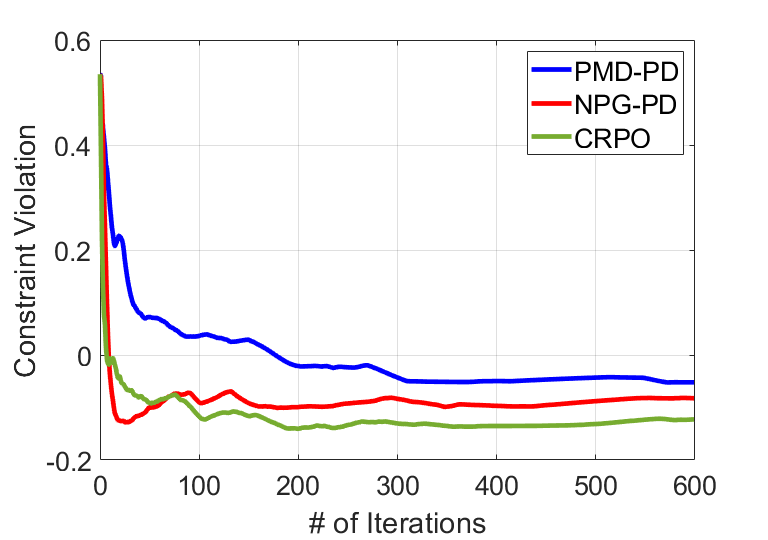}
\caption{}
\end{subfigure}
\caption{Optimality gap and constraint violation for sample-based PMD-PD, NPG-PD, and CRPO on a randomly generated CMDP with $|\Sc|=20, |\Ac|=10, \gamma=0.8, \text{ and } b=3$. Choose $\eta=1$ for all algorithms and $\eta'=1$ for sample-based NPG-PD and PMD-PD.}
\label{fig:simple_cmdp-A}
\end{figure}

\subsection{Acrobot-v1}
To demonstrate the performance of the PMD-PD algorithm on more complex tasks with a large state space and multiple constraints, we conduct experiments on the environment Acrobot-v1 from OpenAI Gym \cite{1606.01540}. The acrobot is a planar two-link robotic arm including two joints and two links, where the joint between the links is actuated. The objective is to swing the end of the lower link to a given height, while the two constraints are to apply torque on the joint (i) when the first link swings in a prohibited direction and (ii) when the second link swings in a prohibited direction with respect to the first link. 

For fairness of comparisons, all algorithms are based on the same neural softmax policy parameterization and the trust region policy optimization (TRPO) \cite{schulman2015trust}. Since TRPO is implemented via penalty and linear-quadratic approximation for the KL-divergence term, it is equivalent to the implementation of NPG. Given that the exact policy evaluation is no longer accessible for Acrobot-v1, we will adopt the sample-based versions for all algorithms, i.e., using empirical estimates of policy evaluation. Figure \ref{fig:acrobot} provides the average performance over 10 random seeds, where the best step size of the dual update (i.e., 0.0005) is tuned from the set $\{0.00001, 0.0005, 0.001, 0.005, 0.01, 0.05\}$.

Figure \ref{fig:acrobot}(a) shows that the PMD-PD algorithm has a larger accumulated reward compared with the NPG-PD algorithm, while Figures \ref{fig:acrobot}(b) and \ref{fig:acrobot}(c) illustrate that the accumulated cost of the PMD-PD algorithm is closer to the threshold after the constraints are satisfied. The closer gap to the threshold and the larger accumulated reward are attributed to the newly designed Lagrangian. After the constraints are satisfied, the newly designed Lagrangian focuses more on improving accumulated reward since $0 \le \lambda_{k, i} + \eta' ( V_{c_i}^{\pi_k}(\rho) - 50) \le \lambda_{k, i}, \forall k \in [K], \forall i \in [m]$. Note that the CRPO algorithm enjoys a faster convergence rate to thresholds for cost constraints, at the cost of a slower convergence rate to the optimal reward.

\begin{figure}
\centering
\begin{subfigure}{0.32\textwidth}
\centering
\includegraphics[width=\textwidth]{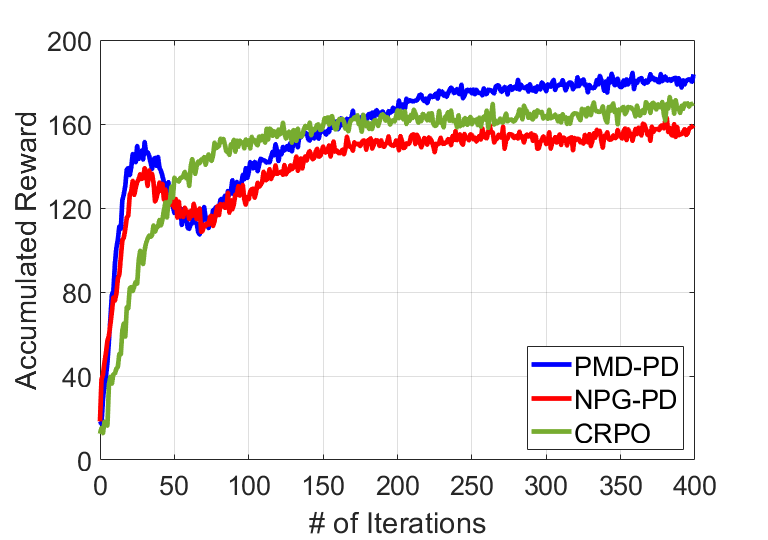}
\caption{}
\end{subfigure}
\begin{subfigure}{0.32\textwidth}
\centering
\includegraphics[width=\textwidth]{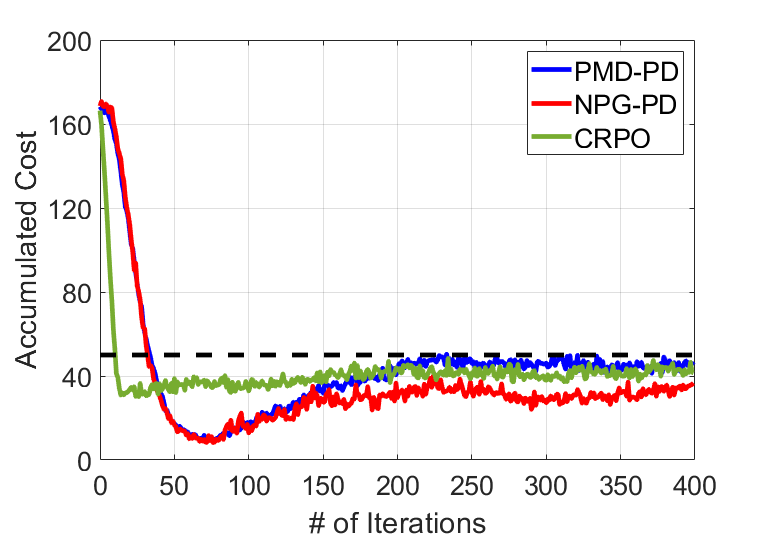}
\caption{}
\end{subfigure}
\begin{subfigure}{0.32\textwidth}
\centering
\includegraphics[width=\textwidth]{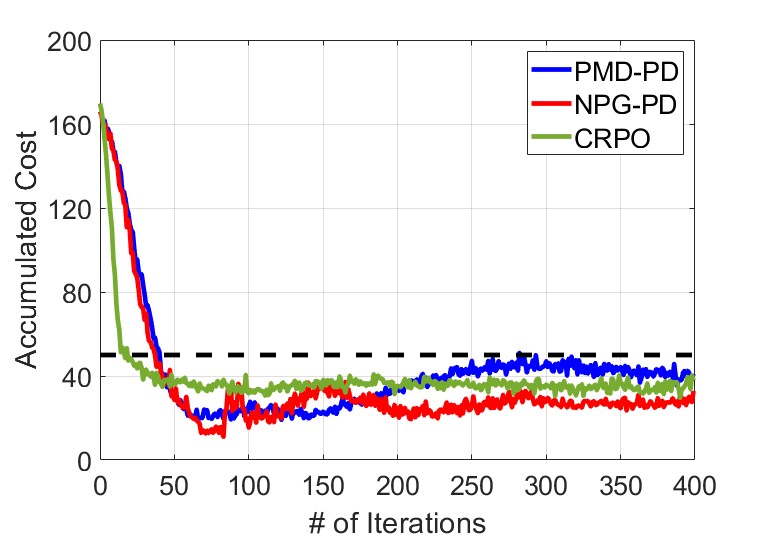}
\caption{}
\end{subfigure}
\caption{Average performance for sample-based PMD-PD, NPG-PD, and CRPO over 10 random seeds. The black dashed lines in (b) and (c) represent given thresholds (i.e., 50) for cost constraints. Choose $\eta$ via backtracking line search and $\eta'=0.0005$ for sample-based PMD-PD and NPG-PD.}
\label{fig:acrobot}
\end{figure}

%%%%%%%%%%%%%%%%%%%%%%%%%%%%%%%%%%%%%%%%%%%%%%%%%%%%%%%%%%%%%%%%%%%%%%%%%%%%%%%
%%%%%%%%%%%%%%%%%%%%%%%%%%%%%%%%%%%%%%%%%%%%%%%%%%%%%%%%%%%%%%%%%%%%%%%%%%%%%%%
\end{document}